\definecolor{plum}  {rgb}{.4,0,.4}
\definecolor{brickred} {rgb}{0.6,0,0}
\newcommand*{\diam}{{\mathsf{diam}}}
\newcommand*{\indc}{{\mathbf{1}}}
\newcommand*{\defeq}{\triangleq}
\newcommand{\relu}{\textsc{ReLU}}
\newcommand{\gelu}{\textsc{GELU}}
\newcommand{\silu}{\textsc{SiLU}}
\newcommand{\celu}{\textsc{CELU}}
\newcommand{\selu}{\textsc{SELU}}
\newcommand{\softplus}{\textsc{Softplus}}
\newcommand{\mish}{\textsc{Mish}}
\newcommand{\elu}{\textsc{ELU}}
\newcommand{\sigmoid}{\textsc{Sigmoid}}
\newcommand{\tanhh}{\textsc{Tanh}}
\newcommand{\relul}{\textsc{ReLU-Like}}
\DeclareSymbolFont{stixletters}{LS1}{stix}{m}{it}
\DeclareMathAccent{\cev}{\mathord}{stixletters}{"91}
\DeclareMathAccent{\vec}{\mathord}{stixletters}{"92}
\DeclareMathAccent{\vecev}{\mathord}{stixletters}{"95}
\newcommand*{\bd}{\mathit{bd}}
\newcommand*{\interior}{\mathit{int}}
\newtheorem{theorem}{Theorem}
\newtheorem{claim}[theorem]{Claim}
\newtheorem{lemma}[theorem]{Lemma}
\theoremstyle{acmdefinition}
\newtheorem{definition}{Definition}
\title{Minimum width for universal approximation\\ using ReLU networks on compact domain}
\author{Namjun Kim$^1$\quad Chanho Min$^2$\quad Sejun Park$^1$\thanks{corresponding author\\$~$\quad~~ emails: namjun-kim@korea.ac.kr, chanhomin@ajou.ac.kr, sejun.park000@gmail.com}\\
$^1$Korea University\quad$^2$Ajou University}
\date{}
\begin{document}
\maketitle

\vspace{-0.2in}
\begin{abstract}
\noindent
It has been shown that deep neural networks of a large enough width are universal approximators but they are not if the width is too small.
There were several attempts to characterize the minimum width $w_{\min}$ enabling the universal approximation property; however, only a few of them found the exact values.
In this work, we show that the minimum width for %
$L^p$ approximation of $L^p$ functions from $[0,1]^{d_x}$ to $\mathbb R^{d_y}$ is exactly $\max\{d_x,d_y,2\}$ if an activation function is $\relul$ (e.g., $\relu$, $\gelu$,  $\softplus$).
Compared to the known result for $\relu$ networks, $w_{\min}=\max\{d_x+1,d_y\}$ when the domain is $\smash{\mathbb R^{d_x}}$, our result first shows that approximation on a compact domain requires smaller width than on $\smash{\mathbb R^{d_x}}$.
We next prove a lower bound on $w_{\min}$ for uniform approximation using general activation functions including $\relu$: $w_{\min}\ge d_y+1$ if $d_x<d_y\le2d_x$. Together with our first result, this shows a dichotomy between $L^p$ and uniform approximations for general activation functions and input/output dimensions.  
\end{abstract}

\vspace{-0.1in}
\section{Introduction}\label{sec:intro}
\vspace{-0.05in}
Understanding what neural networks can or cannot do is a fundamental problem in the expressive power of neural networks. Initial approaches for this problem mostly focus on depth-bounded networks. For example, a line of research studies the size of the two-layer neural network to memorize (i.e., perfectly fit) an arbitrary training dataset and shows that the number of parameters proportional to the dataset size is necessary and sufficient for various activation functions \citep{baum88,huang98}.
Another important line of work investigates a class of functions that two-layer networks can approximate. %
Classical results in this field represented by the universal approximation theorem show that two-layer networks using a non-polynomial activation function are dense in the space of continuous functions on compact domains \citep{hornik89, cybenko89, journals/nn/Leshno93, pinkus99}.

With the success of deep learning, the expressive power of deep neural networks has been studied. 
As in the classical depth-bounded network results, several works have shown that width-bounded networks can memorize arbitrary training dataset \citep{yun19,vershynin20} and can approximate any continuous function \citep{Lu17,hanin17}.
Intriguingly, it has also been shown that deeper networks can be more expressive compared to shallow ones.
For example, 
\citet{telgarsky16,eldan16,daniely17}
show that there is a class of functions that can be approximated by deep width-bounded networks with a small number of parameters but cannot be approximated by shallow networks without extremely large widths.
Furthermore, width-bounded networks require a smaller number of parameters for universal approximation \citep{yarotsky18} and memorization \citep{park21b,vardi22} compared to depth-bounded ones. 

Recently, researchers started to identify the \emph{minimum width} that enables universal approximation of width-bounded networks as a dual problem of the classical results: the minimum depth of neural networks for universal approximation is \emph{exactly two} if their activation function is non-polynomial. 
Unlike the minimum depth independent of the input dimension $d_x$ and the output dimension $d_y$ of target functions, the minimum width is known to lie between $d_x$ and $d_x+d_y+\alpha$ for various activation functions where $\alpha$ is some non-negative number depending on the activation function \citep{Lu17,hanin17,johnson18,kidger20,park21,cai23}.
However, most existing results only provide bounds on the minimum width, and the exact minimum width is known for a few activation functions and problem setups so far. %

\begin{table}[t]
\begin{center}
\caption{%
A summary of known bounds on the minimum width for universal approximation. In this table, $p\in[1,\infty)$ and all results with the domain $[0,1]^{d_x}$ extends to an arbitrary compact set in $\mathbb R^{d_x}$.}
\label{table:summary}
\begin{tabular}{| c | c  c | c |} 
 \hline
 {\bf Reference} & {\bf Function class} & {\bf Activation} $\sigma$ & {\bf Upper\,/\,lower bounds} \\ 
 \hline\hline
 \multirow{2}{*}{\cite{Lu17}} & $L^1(\mathbb R^{d_x}, \mathbb R)$ & $\relu$ & $d_x + 1 \le w_{\min} \le d_x + 4$ \\
                              & $L^1([0,1]^{d_x}, \mathbb R)$ & $\relu$ &  $w_{\min} \ge d_x$ \\
 \hline
 \cite{hanin17} & $C([0,1]^{d_x}, \mathbb R^{d_y})$ & $\relu$ & $d_x + 1 \le w_{\min} \le d_x + d_y$ \\
 \hline
 \cite{johnson18} & $C([0,1]^{d_x}, \mathbb R^{d_y})$ & uniformly conti.$^\dagger$ & $w_{\min} \ge d_x + 1$ \\
 \hline
 \multirow{3}{*}{\cite{kidger20}} & $C([0,1]^{d_x}, \mathbb R^{d_y})$ & conti. nonpoly.$^\ddagger$ & $w_{\min} \le d_x + d_y + 1$ \\
                                  & $C([0,1]^{d_x}, \mathbb R^{d_y})$ & nonaffine poly. &  $w_{\min} \le d_x + d_y + 2$ \\
                                  & $L^p(\mathbb R^{d_x}, \mathbb R^{d_y})$ & $\relu$ &  $w_{\min} \le d_x + d_y + 1$ \\
 \hline
 \multirow{3}{*}{\cite{park21}} & $L^p(\mathbb R^{d_x}, \mathbb R^{d_y})$ & $\relu$ &  $w_{\min} = \max\{d_x + 1, d_y\}$ \\ 
                                & $C([0,1], \mathbb R^2)$ & $\relu$ & $w_{\min} > \max\{d_x + 1, d_y\}$ \\
                                & $L^p([0,1]^{d_x},\mathbb R^{d_y})$ & conti. nonpoly.$^\ddagger$ & $w_{\min} \le \max\{ d_x + 2, d_y+1 \}$\\
 \hline
 \multirow{3}{*}{\cite{cai23}}& $L^p([0,1]^{d_x}, \mathbb R^{d_y})$ & Leaky-$\relu$ &  $w_{\min} = \max\{d_x, d_y, 2\}$ \\ 
 &$L^p([0,1]^{d_x}, \mathbb R^{d_y})$ & arbitrary &  $w_{\min} \ge \max\{d_x, d_y\}$ \\
 &$C([0,1]^{d_x}, \mathbb R^{d_y})$ & arbitrary &  $w_{\min} \ge \max\{d_x, d_y\}$ \\ 
  \hline
 \hline
 \rowcolor{gray!30} {\bf Ours} (\cref{thm:lp-ub}) & $L^p([0,1]^{d_x}, \mathbb R^{d_y})$ & $\relu$ & $ w_{\min} = \max\{d_x, d_y, 2\}$ \\
 \rowcolor{gray!30} {\bf Ours} (\cref{cor:general-lp}) & $L^p([0,1]^{d_x}, \mathbb R^{d_y})$ &$\relul^\mathsection$ & $w_{\min} = \max\{d_x, d_y, 2\}$ \\
 \rowcolor{gray!30} {\bf Ours} (\cref{thm:unif-lb-leakyrelu}) & $C([0,1]^{d_x}, \mathbb R^{d_y})$ %
 & {conti.$^\dagger$}
 & $ w_{\min} \ge d_y + \indc_{d_x < d_y \le 2 d_x}$ \\
 \hline
\end{tabular}
\end{center}
$\dagger$ requires that $\sigma$ is uniformly approximated by a sequence of continuous one-to-one functions.\\
$\ddagger$ requires that $\sigma$ is continuously differentiable at least one point $z$, with $\sigma'(z) \neq 0$.\\
$\mathsection$ includes $\softplus$, Leaky-$\relu$, $\elu$, $\celu$, $\selu$, $\gelu$, $\silu$, and $\mish$ where $\gelu$, $\silu$, and $\mish$ require $d_x+d_y\ge3$.\\
\vspace{-0.1in}
\end{table}

\subsection{Related works}
Before summarizing prior works, we first define function spaces often considered in universal approximation literature. We use $C(\mathcal X,\mathcal Y)$ to denote the space of all continuous functions from $\mathcal X\subset\mathbb R^{d_x}$ to $\mathcal Y\subset\mathbb R^{d_y}$, endowed with the uniform norm: ${\|f\|_\infty\defeq\sup_{x\in\mathcal X}\|f(x)\|_\infty}$.
We also define $L^p(\mathcal X,\mathcal Y)$ for denoting the $L^p$ space, i.e., the class of all functions with finite $L^p$-norm, endowed with the $L^p$-norm: ${\|f\|_p\defeq(\int_\mathcal X\|f\|_p^pd\mu_{d_x})^{1/p}}$ where $\mu_{d_x}$ denotes the $d_x$-dimensional Lebesgue measure. 
We denote the minimum width for universal approximation by $w_{\min}$.
See \cref{sec:setup} for more detailed problem setup.
Under these notations, \cref{table:summary} summarizes the known upper and lower bounds on the minimum width for universal approximation under various problem setups.

{\bf Initial approaches.} \citet{Lu17} provide the first upper bound $w_{\min}\le d_x+4$ for universal approximation of $L^1(\mathbb R^{d_x},\mathbb R)$ using (fully-connected) $\relu$ networks. They explicitly construct a network of width $d_x+4$ which approximates a target $L^1$ function by using $d_x$ neurons to store the $d_x$-dimensional input, one neuron to transfer intermediate constructions of the one-dimensional output, and the remaining three neurons to compute iterative updates of the output.
For multi-dimensional output cases, similar constructions storing the $d_x$-dimensional input and $d_y$-dimensional (intermediate) outputs
are used to prove upper bounds on $w_{\min}$ under various problem setups.
For example, \citet{hanin17} show that $\relu$ networks of width $d_x+d_y$ are dense in $C([0,1]^{d_x},\mathbb R^{d_y})$.
\citet{kidger20} also prove upper bounds on the minimum width for general activation functions
using similar constructions.
They prove $w_{\min}\le d_x+d_y+1$ for $C([0,1]^{d_x},\mathbb R^{d_y})$ if an activation function $\sigma$ is non-polynomial and $\sigma'(z)\ne0$ for some $z$, $w_{\min}\le d_x+d_y+2$ for $C([0,1]^{d_x},\mathbb R^{d_y})$ if $\sigma$ is non-affine polynomial, and $w_{\min}\le d_x+d_y+1$ for  $L^p(\mathbb R^{d_x},\mathbb R^{d_y})$ if $\sigma=\relu$. 

Lower bounds on the minimum width have also been studied. For $\relu$ networks, \citet{Lu17} show that the minimum width is at least $d_x+1$ and $d_x$ to universally approximate $L^p(\mathbb R^{d_x},\mathbb R^{d_y})$ and $L^p([0,1]^{d_x},\mathbb R^{d_y})$, respectively. \citet{johnson18} considers general activation functions and shows that the minimum width is at least $d_x+1$ to universally approximate $C([0,1]^{d_x},\mathbb R^{d_y})$ if an activation function is uniformly continuous and can be uniformly approximated by a sequence of continuous and one-to-one functions.
However, since these upper and lower bounds have a large gap of at least $d_y-1$, they could not achieve the tight minimum width.

{\bf Recent progress.} Recently, \citet{park21} characterize the exact minimum width of $\relu$ networks for universal approximation of $L^p(\mathbb R^{d_x},\mathbb R^{d_y})$: $w_{\min}=\max\{d_x+1,d_y\}$. To bypass width $d_x+d_y$ in the previous constructions and to prove the tight upper bound $w_{\min}\le\max\{d_x+1,d_y\}$, they proposed the {coding scheme} which first {encodes} a $d_x$-dimensional input $x$ to a scalar-valued codeword $c$ %
and {decodes} that codeword to a $d_y$-dimensional vector approximating $f(x)$.
They approximate each of these functions using $\relu$ networks of width $d_x+1$, and $\max\{d_y,2\}$ as in previous constructions, which results in the tight upper bound.
Using a similar construction, they also prove that networks of width $\max\{d_x+2,d_y+1\}$ are dense in $L^p([0,1]^{d_x},\mathbb R^{d_y})$ if an activation function $\sigma$ is continuous, non-polynomial, and $\sigma'(z)\ne0$ for some $z\in\mathbb R$.
For universal approximation of $L^p([0,1]^{d_x},\mathbb R^{d_y})$ using $\text{Leaky-}\relu$ networks, \citet{cai23} characterizes $w_{\min}=\max\{d_x,d_y,2\}$ using the results that continuous $L^p$ functions can be approximated by neural ordinary differential equations (ODEs) \citep{li22} and narrow Leaky-$\relu$ networks can approximate neural ODEs \citep{duan22}. However, except for these two cases, the exact minimum width for universal approximation is still unknown.

One interesting observation made by \citet{park21} is that $\relu$ networks of width $2$ are dense in $L^p(\mathbb R,\mathbb R^{2})$ but not dense in $C([0,1],\mathbb R^{2})$. This shows a gap between minimum widths for $L^p$ and uniform approximations.
\citet{cai23} also suggests a similar dichotomy for leaky-$\relu$ networks when $d_x=1$ and $d_y=2$.
Nevertheless, whether such a dichotomy exists for general activation functions and input/output dimensions is unknown.

\subsection{Summary of results}
In this work, we primarily focus on characterizing the minimum width of fully-connected $\relu$ networks for universal approximation on a compact domain. 
However, our results are not restricted to $\relu$; they extend to general activation functions as summarized below.
\begin{itemize}[leftmargin=15pt]
\item \cref{thm:lp-ub} states that width $\max\{d_x,d_y,2\}$ is necessary and sufficient for $\relu$ networks to be dense in $L^p([0,1]^{d_x},\mathbb R^{d_y})$. Compared to the existing result that the minimum width is $\max\{d_x+1,d_y\}$ when the domain is $\mathbb R^{d_x}$ \citep{park21}, our result shows a gap between minimum widths for $L^p$ approximation on the compact and unbounded domains. To our knowledge, this is the first result showing such a dichotomy.
\item Given the exact minimum width in \cref{thm:lp-ub}, our next result shows that the same $w_{\min}$ holds for the networks using any of $\relul$ activation functions.
Specifically, \cref{cor:general-lp} states that {width $\max\{d_x,d_y,2\}$ is necessary and sufficient} for $\sigma$ networks to be dense in $L^p([0,1]^{d_x},\mathbb R^{d_y})$ if $\sigma$ is in $\{\softplus,$ $\text{Leaky-}\relu, \elu, \celu, \selu\}$, or $\sigma$ is in $\{\gelu,\silu,\mish\}$ and $d_x+d_y\ge3$, which generalizes the previous result for $\text{Leaky-}\relu$ networks \citep{cai23}.
\item 
Our last result improves the previous lower bound on the minimum width for uniform approximation: $w_{\min}\ge d_y+1$ for $\relu$ networks if $d_x=1,d_y=2$. \cref{thm:unif-lb-leakyrelu} states that $\sigma$ networks of width $d_y$ is \emph{not dense} in $C([0,1]^{d_x},\mathbb R^{d_y})$ if $d_x<d_y\le2d_x$ and $\sigma$ can be uniformly approximated by a sequence of continuous injections, e.g., monotone functions such as $\relu$.
\item For uniform approximation using Leaky-$\relu$ networks, the lower bound in \cref{thm:unif-lb-leakyrelu} is tight if $d_y=2d_x$: there is a matching upper bound $\max\{2d_x+1,d_y\}$ \citep{hwang23}.
Furthermore, together with \cref{thm:lp-ub,cor:general-lp}, \cref{thm:unif-lb-leakyrelu} extends the prior observations showing the dichotomy between $L^p$ and uniform approximations \citep{park21,cai23} to general activation functions and input/output dimensions.
\item Our proof techniques also generalize to $L^p$ approximation of sequence-to-sequence functions via recurrent neural networks (RNNs). \cref{thm:rnn_lp,thm:rnn_lp-relulike} in \cref{sec:rnn} show that the same $w_{\min}$ in \cref{thm:lp-ub,cor:general-lp} holds for RNNs. In addition, \cref{thm:brnn_lp} in \cref{sec:rnn} shows~that $w_{\min}\le\max\{d_x,d_y,2\}$ for bidirectional RNNs using $\relu$ or $\relul$ activation functions.
\end{itemize}

\subsection{Organization}
We first introduce notations and our problem setup in \cref{sec:setup}.
In \cref{sec:main}, we formally present our main results and discuss them.
In \cref{sec:lp-ub}, we present the proof of the tight upper bound on the minimum width in \cref{thm:lp-ub}. %
In \cref{sec:pfthm:unif-lb-leakyrelu}, we prove \cref{thm:unif-lb-leakyrelu} by providing a continuous function $f^*:\mathbb R^{d_x}\to\mathbb R^{d_y}$ with $d_x<d_y\le2d_x$
that cannot be uniformly approximated by a width-$d_y$ network using general activation functions. %
Lastly, we conclude the paper in \cref{sec:conclusion}.

\section{Problem setup and notation}\label{sec:setup}

We mainly consider fully-connected neural networks that consist of affine transformations and an activation function.
Given an activation function $\sigma:\mathbb R\to\mathbb R$, we define an $L$-layer neural network $f$ of input and output dimensions $d_x,d_y\in\mathbb N$, and hidden layer dimensions $d_1, \dots, d_{L-1}$ as follows:
\begin{align}\label{eq:def-nn}
    f \defeq t_L \circ \phi_{L-1} \circ \cdots \circ t_2 \circ \phi_1 \circ t_1,
\end{align}
where $t_\ell : \mathbb R^{d_{\ell-1}} \to \mathbb R^{d_\ell}$ is an affine transformation and $\phi_\ell$ is defined as
    $\phi_\ell(x_1,\dots,x_{d_\ell}) = \left( \sigma(x_1), \dots, \sigma(x_{d_\ell}) \right)$ 
for all $\ell\in[L]$.
We denote a neural network $f$ with an activation function $\sigma$ by a ``$\sigma$ network.''
We define the width of $f$ as the maximum over $d_1, \dots, d_{L-1}$.

We say ``$\sigma$ networks of width $w$ are dense in $C(\mathcal X, \mathcal Y)$'' if for any $f^* \in C(\mathcal X, \mathcal Y)$ and $\varepsilon >0$, there exists a $\sigma$ network $f$ of width $w$ such that $\| f^* - f \|_\infty \le \varepsilon$. 
Likewise, we say $\sigma$ networks of width $w$ are dense in $L^p(\mathcal X, \mathcal Y)$ if for any $f^* \in L^p(\mathcal X, \mathcal Y)$ and $\varepsilon >0$, there exists a $\sigma$ network $f$ of width $w$ such that $\| f^* - f \|_p \le \varepsilon$.
We say ``$w_{\min}=w$ for $\sigma$ networks to be dense in $C(\mathcal X,\mathcal Y)$ (or  $L^p(\mathcal X,\mathcal Y)$)'' if $\sigma$ networks of width $w$ are dense in $C(\mathcal X,\mathcal Y)$ (or  $L^p(\mathcal X,\mathcal Y)$) but $\sigma$ networks of width $w-1$ are not.
In other words, $w_{\min}$ denotes the width of neural networks necessary and sufficient for universal approximation in $C(\mathcal X,\mathcal Y)$ (or  $L^p(\mathcal X,\mathcal Y)$).

We lastly introduce frequently used notations.
For $n\in\mathbb N$, we use $\mu_n$ to denote the $n$-dimensional Lebesgue measure and $[n]\defeq\{1,\dots,n\}$.
{For $n\in\mathbb N$ and $\mathcal S\subset\mathbb R^n$, we use $\diam(\mathcal S)\defeq\sup_{x,y\in\mathcal S}\|x-y\|_2$.}
A set $\mathcal H^+\subset\mathbb R^n$ is a half-space if $\mathcal H^+=\{x\in\mathbb R^n:a^\top x+b\ge0\}$ for some $a\in\mathbb R^n\setminus\{(0,\dots,0)\}$ and $b\in\mathbb R$.
A set $\mathcal P \subset \mathbb R^n$ is a (convex) polytope if $\mathcal P$ is bounded and can be represented as an intersection of finite half-spaces.
For $f : \mathbb R^n \to \mathbb R^m$, $f(x)_i$ denotes the $i$-th coordinate of $f(x)$.
We define $\relul$ activation functions ($\relu$, $\text{Leaky-}\relu$, $\gelu$, $\silu$, $\mish$, $\softplus$, $\elu$, $\celu$, $\selu$) in \cref{sec:activation}.
For an activation function with parameters (e.g., Leaky-$\relu$ and $\softplus$), we assume that a single parameter configuration is shared across all activation functions and it is fixed, i.e., we do not tune them when approximating a target function.

\vspace{-0.05in}
\section{Main results}\label{sec:main}
\vspace{-0.05in}
We are now ready to introduce our main results on the minimum width for universal approximation.

{\bf $L^p$ approximation with $\relul$ activation functions.}
Our first result exactly characterizes the minimum width for universal approximation of $L^p([0,1]^{d_x},\mathbb{R}^{d_y})$ using $\relu$ networks.
\begin{theorem}\label{thm:lp-ub}
    $w_{\min}=\max\{d_x, d_y, 2\}$ for $\relu$ networks to be dense in $L^p([0,1]^{d_x},\mathbb{R}^{d_y}\!)$.
\end{theorem}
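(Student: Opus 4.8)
The plan is to prove $w_{\min}\ge\max\{d_x,d_y,2\}$ and $w_{\min}\le\max\{d_x,d_y,2\}$ separately; the lower bound is essentially known and the upper bound is the new construction. \emph{Lower bound.} The inequalities $w_{\min}\ge d_x$ and $w_{\min}\ge d_y$ are exactly the lower bounds of \citet{cai23} recorded in \cref{table:summary}. For $w_{\min}\ge 2$: a width-$1$ $\relu$ network from $\mathbb R$ to $\mathbb R$ is a composition of monotone maps (affine maps on $\mathbb R$ and $\relu$), hence monotone, and no sequence of monotone functions can converge in $L^p([0,1],\mathbb R)$ to a non-monotone target (pass to an a.e.-convergent subsequence: a ``valley'' of the limit on three intervals of positive measure would force a non-monotone configuration of three values of $f_n$ for large $n$). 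Taking, e.g., $f^*(x)=\min\{x,1-x\}$ gives $w_{\min}\ge2$ when $d_x=d_y=1$, and the remaining cases are covered by $\max\{d_x,d_y\}$.

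\emph{Upper bound: coding scheme.} Following the encoder--decoder idea of \citet{park21}, I would realize the width-$\max\{d_x,d_y,2\}$ network as $D\circ E$, where $E:[0,1]^{d_x}\to\mathbb R$ is a $\relu$ network of width $\max\{d_x,2\}$, $D:\mathbb R\to\mathbb R^{d_y}$ is a $\relu$ network of width $\max\{d_y,2\}$, and the width at the junction is $1$. First, by density of simple functions in $L^p$, fix a large integer $N$ and approximate $f^*$ within $\varepsilon$ by a map $g$ that is constant with value $u_k\in\mathbb R^{d_y}$ on each cube $Q_k$, $k\in\{0,\dots,N^{d_x}-1\}$, of the partition of $[0,1]^{d_x}$ into $N^{d_x}$ congruent axis-aligned subcubes. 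The target of $E$ is the quantizer $x\mapsto\sum_{i=1}^{d_x}\lfloor Nx_i\rfloor N^{i-1}$ (constant $=k$ on the interior of $Q_k$), and the target of $D$ is the lookup table $k\mapsto u_k$, viewed as a piecewise-constant map $\mathbb R\to\mathbb R^{d_y}$.

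\emph{The encoder (main obstacle).} The crux is to implement $E$ with only $\max\{d_x,2\}$ neurons -- one fewer than the width-$(d_x+1)$ encoder of \citet{park21} for the domain $\mathbb R^{d_x}$. Two features of a compact domain make this possible: (i) the input and every intermediate quantity can be kept non-negative, so $\relu$ acts as the identity and an unprocessed coordinate is carried ``for free'' by one neuron per layer; and (ii) every intermediate value stays in the bounded range $[0,N^{d_x}]$, so the entire computation lives on a compact set. Concretely I would interleave two operations across layers: (a) replace a still-raw coordinate $x_i$ by its quantization $\lfloor Nx_i\rfloor$ using a small $\relu$ gadget that reproduces the $N$-step staircase exactly outside thin slabs around the jump hyperplanes (of total measure $O(N\eta)$ for a steepness parameter $\eta$), and (b) fold an already-quantized coordinate into an accumulator held in the first neuron via the affine update $c\leftarrow Nc+\lfloor Nx_i\rfloor$. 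A bookkeeping argument must show that after the first quantization -- which is the tight step and uses $\max\{d_x,2\}$ neurons -- the set of ``live'' neurons never exceeds $d_x$, since each merge step frees a neuron just in time for the next quantization step; and that the union $B$ of all staircase slabs together with a small neighborhood of the cube boundaries has measure $O(d_xN\eta)\to0$, with $E$ equal to $k$ on a slightly shrunk copy of $Q_k$ outside $B$ and bounded by $N^{d_x}$ on $B$.

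\emph{The decoder and error bound.} The decoder $D$ only needs to send $[k-\tfrac13,k+\tfrac13]$ to $u_k$ for each $k$ and to stay bounded on $[0,N^{d_x}]$; this is precisely the kind of piecewise-constant $\mathbb R\to\mathbb R^{d_y}$ map that $\relu$ networks of width $\max\{d_y,2\}$ approximate in $L^p$ by iteratively adding coordinatewise bumps while carrying the scalar input, as constructed by \citet{park21}. Assembling, $\|f^*-D\circ E\|_p\le\|f^*-g\|_p+\|g-D\circ E\|_p$, where on $[0,1]^{d_x}\setminus B$ the second integrand is at most the decoder's approximation error, and on $B$ both $g$ and $D\circ E$ are bounded so its contribution is at most $\mu_{d_x}(B)^{1/p}$ times a constant; choosing $N$ large and then $\eta$ and the boundary margin small makes the total $O(\varepsilon)$. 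I expect the encoder's neuron-budget accounting -- keeping the live set at $\max\{d_x,2\}$ through all quantize/merge rounds while controlling the slab measure -- to be the only genuinely delicate part, the discretization being standard and the decoder a direct adaptation of \citet{park21}.
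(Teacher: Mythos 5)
Your lower bound and your decoder are fine and essentially match the paper (the paper's Lemma~\ref{lem:counter-ex-monotone} gives the same monotonicity obstruction quantitatively, and \cref{lem:decoder0} is exactly the width-$\max\{d_y,2\}$ lookup table from \citet{park21}). The gap is in the encoder, which is the entire new content of the theorem. Your scheme quantizes coordinates one at a time with a staircase gadget while carrying the remaining raw coordinates. But the width-$2$ staircase construction (\cref{lemma:piecewise-relu}) intrinsically needs \emph{two} neurons for the coordinate being quantized: one to hold (a shifted copy of) $x_i$ across layers and one to accumulate the piecewise-linear output. A single $\relu$ neuron cannot do both: a width-$1$ $\relu$ network on an interval computes only monotone functions of the form ``constant--affine--constant,'' because each clip zeroes a sub-level set of a monotone function, which is always an interval at one \emph{end} of the domain; no clip can ever create a flat piece in the interior, so an $N$-step staircase with $N\ge2$ is unreachable at width $1$ regardless of depth. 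Hence your ``tight step''---the first quantization---requires $2+(d_x-1)=d_x+1$ live neurons whenever $d_x\ge2$, and your construction reproduces the $\max\{d_x+1,d_y\}$ bound of \citet{park21} rather than improving it. Compactness (keeping everything non-negative and bounded) does not remove this extra neuron; it was never the reason the ``$+1$'' appeared.

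The paper circumvents this by abandoning coordinate-wise quantization altogether. It partitions $[0,1]^{d_x}$ into small-diameter cells cut out by general half-spaces (\cref{lem:poly-cut}) and uses a width-$d_x$ two-layer $\relu$ block (\cref{lem:tool1}) that acts as the identity on a half-space and projects its complement onto the bounding hyperplane along a chosen direction; iterating these projections collapses (most of) each cell to a single point while keeping the not-yet-processed cells fixed and the already-produced points distinct (\cref{lem:basic-encoder}), after which a generic linear functional separates the $k$ points into distinct scalars (\cref{lem:distinct-innerprod}). The operation ``collapse a region to a point'' is geometric and uses all $d_x$ coordinates jointly, so it never needs an auxiliary accumulator neuron. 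To repair your proof you would need either this kind of projection-based encoder or a genuinely new argument for computing $\big(\lfloor Nx_1\rfloor, x_2,\dots,x_{d_x}\big)$ at width $d_x$, which the flat--ramp--flat obstruction above strongly suggests is impossible within your scheme.
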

\cref{thm:lp-ub} states 
that for $\relu$ networks, width $\max\{d_x,d_y,2\}$ is necessary and sufficient for universal approximation of $\smash{L^p([0,1]^{d_x},\mathbb{R}^{d_y})}$. Compared to the existing result that $\relu$ networks of width $\max\{d_x,d_y,2\}$ are not dense in $\smash{L^p(\mathbb R^{d_x},\mathbb{R}^{d_y})}$ if $\smash{d_x+1>d_y\ge2}$ \citep{park21}, \cref{thm:lp-ub} shows a discrepancy between approximating $L^p$ functions on a compact domain (i.e., $[0,1]^{d_x}$) and on the whole Euclidean space (i.e., $\mathbb R^{d_x}$). Namely, a smaller width is sufficient for approximating $L^p$ functions on a compact domain if $d_x+1>d_y$.
We note that a similar result was already known for the \emph{minimum depth} analysis of $\relu$ networks: two-layer $\relu$ networks are dense in $\smash{L^p([0,1]^{d_x},\mathbb{R}^{d_y})}$ but not dense in $L^p(\mathbb R^{d_x},\mathbb{R}^{d_y})$ \citep{lu21,wang22}.

Although \cref{thm:lp-ub} extends the result of \citep{cai23} from Leaky-$\relu$ networks to $\relu$ ones, 
we use a completely different approach for proving the upper bound. \cite{cai23} approximates a target function via a Leaky-$\relu$ network of width $\max\{d_x,d_y,2\}$ using two steps: approximate the target function by a neural ODE first \citep{li22}, then approximate the neural ODE by a Leaky-$\relu$ network \citep{duan22}.
Here, the latter step requires the strict monotonicity of Leaky-$\relu$ and does not generalize to non-strictly monotone activation functions (e.g., $\relu$).

To bypass this issue, we %
carefully analyze the properties of $\relu$ networks and propose a different construction.
In particular, our construction of a $\relu$ network of width $\max\{d_x,d_y,2\}$ that approximates a target function is based on the coding scheme consisting of two functions: an encoder and a decoder.
First, an encoder encodes each input to a scalar-valued codeword, and a decoder maps each codeword to an approximate target value.
\citet{park21} approximate the encoder with the domain $\mathbb R^{d_x}$ using a $\relu$ network of width $d_x+1$ and implemented the decoder using a $\relu$ network of width $\max\{d_y,2\}$ to obtain a universal approximator of width $\max\{d_x+1,d_y\}$ for $\smash{L^p(\mathbb R^{d_x},\mathbb R^{d_y})}$.
By exploiting the compactness of the domain and based on the functionality of $\relu$ networks, we successfully approximate the encoder using a $\relu$ network of width $\max\{d_x,2\}$ and show that $\relu$ networks of width $\max\{d_x,d_y,2\}$ is dense in  $L^p([0,1]^{d_x},\mathbb R^{d_y})$.

The lower bound $w_{\min}\ge\max\{d_x,d_y,2\}$ in \cref{thm:lp-ub} follows from an existing lower bound $w_{\min}\ge\max\{d_x,d_y\}$ \citep{cai23} and a lower bound $w_{\min}\ge2$. Here, the intuition behind each lower bound $d_x,d_y,$ and $2$ is rather straightforward. If a network has width $d_x-1$, then it must have the form $g(Mx)$ for some continuous function $\smash{g:\mathbb R^{d_x-1}\to\mathbb R^{d_y}}$ and $M \in \smash{\mathbb{R}^{(d_x-1) \times d_x}}$, which cannot universally approximate, e.g., consider approximating $\smash{\|x\|_2^2}$. Likewise, if a network has width $d_y-1$, then it must have the form $Nh(x)$ for some continuous function $\smash{h:\mathbb R^{d_x}\to\mathbb R^{d_y-1}}$ and $\smash{N\in\mathbb{R}^{d_y\times (d_y-1)}}$, which cannot universally approximate. Lastly, a $\relu$ network of width $1$ is monotone and hence, cannot approximate non-monotone functions. Combining these three arguments leads us to the lower bound $\max\{d_x,d_y,2\}$ in \cref{thm:lp-ub}. 
We note that our proof techniques are not restricted to $\relu$; they can be extended to various $\relu$-like activation functions.

\begin{theorem}\label{cor:general-lp}
    $w_{\min}=\max\{d_x,d_y,2\}$ for $\varphi$ networks to be dense in $L^p([0,1]^{d_x},\mathbb R^{d_y})$ if $\varphi\in\{\elu$, $\text{\rm Leaky-}\relu, \softplus, \celu, \selu \}$, or $\varphi\in\{\gelu, \silu, \mish\}$ and $d_x+d_y\ge3$.
\end{theorem}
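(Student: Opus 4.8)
The plan is to reduce \cref{cor:general-lp} to \cref{thm:lp-ub} by showing that any $\relu$ network of width $w=\max\{d_x,d_y,2\}$ can be approximated arbitrarily well (in $L^p$ on a compact set, or even uniformly on compact sets) by a $\varphi$ network of the same width $w$, for each listed activation $\varphi$. Since $\relu$ networks of width $w$ are dense in $L^p([0,1]^{d_x},\mathbb R^{d_y})$ by \cref{thm:lp-ub}, and density is transitive under uniform-on-compacta approximation, this yields the upper bound $w_{\min}\le w$ for $\varphi$ networks. The matching lower bound $w_{\min}\ge\max\{d_x,d_y\}$ holds for arbitrary activations by \citep{cai23} (third and fourth rows of the $\cite{cai23}$ block), and the lower bound $w_{\min}\ge2$ needs a short separate argument per activation: a width-$1$ $\varphi$ network is a composition of scalar maps, hence for monotone $\varphi$ (Leaky-$\relu$, $\softplus$, $\elu$, $\celu$, $\selu$) it is monotone and cannot approximate a non-monotone target; for the non-monotone $\gelu,\silu,\mish$ one instead notes $d_x+d_y\ge3$ forces $\max\{d_x,d_y\}\ge2$, so the lower bound already comes from $\max\{d_x,d_y\}$ and nothing extra is needed.

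The core of the argument is the activation-replacement step, and I would split the listed activations into two groups according to the standard trick used in \citep{park21,kidger20,cai23}. \textbf{Group 1 (piecewise-linear or locally affine-dominant):} Leaky-$\relu$, $\elu$, $\celu$, $\selu$ all agree with a (shifted, scaled) ReLU-like shape and can simulate $\relu$ directly. For Leaky-$\relu$ with slope $\alpha$, the identity $\relu(z)=\tfrac{1}{1-\alpha}\big(\text{Leaky-}\relu(z)-\alpha\,\text{Leaky-}\relu(-z)\big)$ wait — that needs two neurons; the cleaner route, used in \citep{cai23}, is that a Leaky-$\relu$ layer of width $w$ followed by an affine map can reproduce a $\relu$ layer of width $w$ up to an affine correction absorbed into the next layer, because $\text{Leaky-}\relu(z)-\alpha z=(1-\alpha)\relu(z)$ and $z$ is an affine function of the preinput. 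The same bias-and-linear-correction idea applies to $\elu,\celu,\selu$ on a bounded input range: each equals an affine function of $z$ plus a term that is uniformly small after a large positive shift, or equals ReLU exactly on the positive part; one rescales the incoming weights so the relevant neuron operates in the regime where $\varphi$ is affine or is exactly $\relu$-shaped, then undoes the affine part downstream. \textbf{Group 2 (smooth, $d_x+d_y\ge3$):} $\softplus,\gelu,\silu,\mish$ are smooth and satisfy $\varphi(z)/z\to 1$ as $z\to+\infty$ and $\varphi(z)\to 0$ (or an affine limit) as $z\to-\infty$; the scaling trick $\tfrac{1}{\lambda}\varphi(\lambda z)\to\relu(z)$ as $\lambda\to\infty$ (uniformly on compact sets, away from $0$, and in $L^p_{\mathrm{loc}}$ including $0$) lets a single $\varphi$ neuron approximate a single $\relu$ neuron; composing layerwise and controlling error propagation through the (finitely many, Lipschitz-on-compacta) affine maps gives a width-$w$ $\varphi$ network approximating the target $\relu$ network. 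The constraint $d_x+d_y\ge3$ is exactly what guarantees $w\ge2$, giving room for the auxiliary neuron if one ever needs the two-neuron identity for these activations; for $d_x=d_y=1$ width-$1$ smooth non-monotone networks behave differently, which is why that corner is excluded.

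The \textbf{main obstacle} I anticipate is making the layerwise replacement quantitatively tight: one must show that if each $\varphi$-layer approximates the corresponding $\relu$-layer to within $\delta$ uniformly on a compact set containing all reachable preactivations, then the whole $L$-layer composition is within $O(\delta\cdot\prod\text{Lip})$ of the target $\relu$ network — this is routine but requires (i) a uniform bound on the domain traversed by intermediate activations (available because the input domain $[0,1]^{d_x}$ is compact and all maps are continuous, so the reachable set at each layer is compact), and (ii) uniform control of the scaling-limit convergence $\tfrac1\lambda\varphi(\lambda z)\to\relu(z)$, which fails near $z=0$ in sup-norm but holds in $L^p$ — so for the smooth group one actually argues in $L^p$ directly (the target measure is finite on $[0,1]^{d_x}$, preactivation laws are absolutely continuous after a generic perturbation, and the measure-zero set $\{z=0\}$ contributes nothing), whereas for the piecewise-linear group one has exact simulation and no limiting issue. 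A secondary technical point is handling activations like $\celu$/$\selu$ that are affine on $(-\infty,0]$ with nonzero slope: one must verify the downstream affine correction indeed removes the spurious linear contribution without inflating the width, which it does because affine maps compose. I would present Group 1 first (clean, exact or near-exact), then Group 2 (scaling argument with the $L^p$ caveat), then assemble the two lower-bound observations, and conclude.
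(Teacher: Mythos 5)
Your overall strategy is the paper's: replace each ReLU neuron by a width-one $\varphi$ sub-network, propagate the error through the finitely many layers using moduli of continuity on the compact reachable sets, invoke \cref{thm:lp-ub} for the ReLU upper bound, and combine the lower bound $\max\{d_x,d_y\}$ of \citet{cai23} with a monotonicity argument for width one (exactly the paper's \cref{lem:counter-ex-monotone}; your observation that $d_x+d_y\ge3$ makes the extra argument unnecessary for $\gelu,\silu,\mish$ is also how the paper handles it). However, two steps of your replacement argument have genuine problems. First, your ``cleaner route'' for Leaky-$\relu$ does not work at fixed width: the identity $\text{Leaky-}\relu(z)-\alpha z=(1-\alpha)\relu(z)$ requires the downstream affine map to access the preactivation $z$ in addition to $\text{Leaky-}\relu(z)$, and after the activation is applied only the latter is available; carrying $z$ forward costs an extra neuron, which is exactly what you cannot afford. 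Scaling does not help either, since Leaky-$\relu$ is positively homogeneous. The correct fix (the paper's, in \cref{lem:relu-like-converge}) is to \emph{iterate} the activation inside a single neuron: $\varphi^n(x)=x$ for $x\ge0$ and $\varphi^n(x)=\alpha^n x\to0$ uniformly on compacta for $x<0$, and similar iteration handles $\elu,\celu,\selu$ (whose negative branches are bounded and contracting), so no ``downstream affine correction'' is ever needed.

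Second, your premise for the smooth group --- that $\tfrac1\lambda\varphi(\lambda z)\to\relu(z)$ fails in sup-norm near $z=0$ and one must retreat to $L^p$ --- is false for every activation on the list: e.g.\ $|\tfrac1\lambda\softplus(\lambda z)-\relu(z)|\le\tfrac{\log 2}{\lambda}$ for \emph{all} $z$, and the paper proves analogous uniform $O(1/\lambda)$ bounds for $\gelu,\silu,\mish$, so the convergence is uniform on all of $\mathbb R$, kink included. This matters because your proposed workaround is not merely unnecessary but unsound for the networks you must approximate: the width-$\max\{d_x,d_y,2\}$ ReLU networks built in \cref{thm:lp-ub} deliberately map sets of positive measure to single points (the encoder of \cref{lem:encoder}), so preactivation distributions at deeper layers carry atoms and are not absolutely continuous, and a ``generic perturbation'' would have to be justified without destroying the approximation already achieved. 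Replacing both of these steps by the uniform-on-compacta convergence of width-one $\varphi$ networks to $\relu$ (iteration for the homogeneous/exponential-tail group, scaling for the rest) makes your error-propagation skeleton go through exactly as in the paper.
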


\cref{cor:general-lp} provides that for $\elu$, $\text{Leaky-}\relu$, $\softplus$, $\celu$, and $\selu$ networks, width $\max\{d_x,d_y,2\}$ is necessary and sufficient for universal approximation of $\smash{L^p([0,1]^{d_x},\mathbb{R}^{d_y})}$. 
On the other hand, the minimum width of $\gelu, \silu,$ and $\mish$ networks to be dense in $\smash{L^p([0,1]^{d_x},\mathbb{R}^{d_y})}$ is $\max\{d_x,d_y,2\}$ if $d_x+d_y\ge3$.
In particular, \cref{cor:general-lp} can be further generalized to any continuous function $\rho$ such that $\relu$ can be uniformly approximated by a $\rho$ network of width one on any compact domain, within an arbitrary uniform error.
We present the proof for the upper bound $w_{\min}\le\max\{d_x,d_y,2\}$ in \cref{thm:lp-ub} in \cref{sec:lp-ub} while the proof for the matching lower bound in \cref{thm:lp-ub} and the proof of \cref{cor:general-lp} are deferred to \cref{sec:pf:general-lb} and \cref{sec:pfcor:general-lp-ub}.

We note that our proof techniques easily extend to RNNs and bidirectional RNNs: \cref{thm:rnn_lp,thm:rnn_lp-relulike} in \cref{sec:rnn} shows that the same result in \cref{thm:lp-ub,cor:general-lp} also holds for RNNs. Furthermore, \cref{thm:brnn_lp} in \cref{sec:rnn} shows that $w_{\min}\le\max\{d_x,d_y,2\}$ for bidirectional RNNs using any of $\relu$ or $\relul$ activation functions to be dense in $\smash{L^p([0,1]^{d_x},\mathbb R^{d_y})}$.

{\bf Uniform approximation with general activation functions.} 
For $\relu$ networks, it is known that $w_{\min}$ for $C([0,1]^{d_x},\mathbb R^{d_y})$ is greater than that for $L^p([0,1]^{d_x},\mathbb R^{d_y})$ in general.
This is shown by the observation in \citep{park21}: if $d_x=1$ and $d_y=2$, 
then width $2$ is sufficient for $\relu$ networks to be dense in $L^p([0,1]^{d_x},\mathbb R^{d_y})$, but insufficient to be dense in  $C([0,1]^{d_x},\mathbb R^{d_y})$.
However, whether this observation extends %
has been unknown. 
Our next theorem shows that a similar result holds for a wide class of activation functions and $d_x,d_y$. 
\begin{theorem}\label{thm:unif-lb-leakyrelu}
For any continuous $\varphi:\mathbb R\to\mathbb R$ that can be uniformly approximated by a sequence of continuous injections, if $d_x<d_y\le2d_x$, then $w_{\min}\ge d_y+1$ for $\varphi$ networks to be dense in $C([0,1]^{d_x},\mathbb{R}^{d_y})$. 
\end{theorem}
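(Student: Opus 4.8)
The plan is to exhibit a single $f^*\in C([0,1]^{d_x},\mathbb R^{d_y})$ and a constant $\varepsilon_0>0$ such that no width-$d_y$ $\varphi$ network comes within $\varepsilon_0$ of $f^*$ in the uniform norm. First I would reduce to activations that are continuous injections: writing $\varphi$ as a uniform limit of continuous injections $\varphi_n$, and noting that a fixed width-$d_y$ $\varphi$ network has finitely many affine maps whose pre-activations range over a compact set (since the domain $[0,1]^{d_x}$ is bounded), replacing each $\varphi$ by $\varphi_n$ produces a width-$d_y$ $\varphi_n$ network converging to it uniformly on $[0,1]^{d_x}$. Hence if width-$d_y$ $\varphi$ networks were dense, then for every $\varepsilon>0$ some width-$d_y$ network whose activation is a continuous injection would be $\varepsilon$-close to $f^*$, so it suffices to rule that out.

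Fix such a width-$d_y$ network $M\colon\mathbb R^{d_x}\to\mathbb R^{d_y}$: each coordinatewise nonlinearity is then a homeomorphism onto an open box, so $M$ is an alternating composition of affine maps and such homeomorphisms, with all hidden widths $\le d_y$ but input width $d_x<d_y$. The heart of the argument is a topological rigidity statement for these $M$. Because the very first affine map already sends $\mathbb R^{d_x}$ into an affine subspace of dimension at most $d_x$, and neither affine maps (Lipschitz) nor coordinatewise injections (homeomorphisms) can raise topological dimension, $M([0,1]^{d_x})$ is at most $d_x$-dimensional and, more importantly, topologically too simple to ``wrap around'' a fixed low-dimensional sphere: for any round sphere $\Sigma\cong S^{d_y-d_x-1}$ disjoint from $M([0,1]^{d_x})$ and with $M|_{\partial[0,1]^{d_x}}$ null-homotopic in $\mathbb R^{d_y}\setminus\Sigma$, the map $M$ capped off over $\partial[0,1]^{d_x}$ represents the trivial class of $\pi_{d_x}(\mathbb R^{d_y}\setminus\Sigma)$ — equivalently, its linking number with $\Sigma$ is zero. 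Proving this from the layer-by-layer structure — in particular controlling how dimension-reducing internal layers interact with the alternating affine / coordinatewise-injection structure, so the image never acquires the homology needed to link $\Sigma$ — is the step I expect to be the main obstacle.

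For the witness I would take $f^*$ that collapses $\partial[0,1]^{d_x}$ to a single point $c$ and embeds the quotient $[0,1]^{d_x}/\partial[0,1]^{d_x}\cong S^{d_x}$ in $\mathbb R^{d_y}$ as a sphere with linking number $1$ around a fixed round sphere $\Sigma\cong S^{d_y-d_x-1}$. The dimension count $d_x+(d_y-d_x-1)=d_y-1$ together with $0\le d_y-d_x-1\le d_x-1$ — exactly the regime $d_x<d_y\le 2d_x$ — is what makes such a linked pair realizable, and is where the hypothesis on $d_y$ enters. Choose $\varepsilon_0>0$ below half the distance between the disjoint compacta $f^*([0,1]^{d_x})$ and $\Sigma$. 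If some width-$d_y$ continuous-injection network $M$ satisfied $\lVert f^*-M\rVert_\infty<\varepsilon_0$, then $M([0,1]^{d_x})$ would avoid $\Sigma$, $M|_{\partial[0,1]^{d_x}}$ would lie in a small ball about $c$ (hence be null-homotopic in $\mathbb R^{d_y}\setminus\Sigma$), and the straight-line homotopy from $M$ to $f^*$ would stay within $\varepsilon_0$ of $f^*([0,1]^{d_x})$; capping $M$ off by the null-homotopy would yield a sphere homotopic to $f^*(S^{d_x})$ inside $\mathbb R^{d_y}\setminus\Sigma$, hence also of linking number $1$ with $\Sigma$, contradicting the rigidity statement. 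Combined with the reduction this proves \cref{thm:unif-lb-leakyrelu}. Beyond the rigidity lemma, the remaining care is routine: the uniform-convergence bookkeeping over all affine layers at once in the reduction (which uses compactness of the domain), and checking that the linking invariant genuinely transfers from $f^*$ to any sufficiently close approximant.
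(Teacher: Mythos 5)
Your opening reduction to continuous-injection activations is exactly the paper's \cref{lem:inj-approx}, and the overall philosophy of a topological obstruction is on target, but the core of your argument --- the ``rigidity lemma'' asserting that a width-$d_y$ network, capped off over $\partial[0,1]^{d_x}$, has linking number zero with $\Sigma$ --- is not proved, and you flag it yourself as the main obstacle. The justification you sketch (the image is at most $d_x$-dimensional, hence ``topologically too simple'' to link $\Sigma$) cannot possibly suffice: the image of $f^*$ itself is a $d_x$-dimensional compactum that links $\Sigma$ with linking number $1$, so low topological dimension of the image is no obstruction at all. Even adding injectivity of the whole map does not save it: already for $d_x=1$, $d_y=2$, an \emph{embedded} arc in $\mathbb R^2\setminus\Sigma$ whose two endpoints lie in a small ball can be capped inside that ball to a loop of winding number $1$ about a point of $\Sigma$. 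Whatever prevents a width-$d_y$ network from realizing such a configuration must come from the alternating affine/activation structure itself, and you give no argument for that. As it stands this is a genuine gap, not bookkeeping.

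The paper takes a different and more elementary route that sidesteps the rigidity lemma entirely. Since all hidden widths are at most $d_y$, they can be padded to $d_y$; the square affine maps $t_2,\dots,t_L$ can then be perturbed to be invertible and $t_1:\mathbb R^{d_x}\to\mathbb R^{d_y}$ (with $d_x<d_y$) to be injective, so the \emph{entire network} may be assumed to be a continuous injection. It then suffices to exhibit an $f^*$ that no injection can approximate uniformly within $1/3$. The paper chooses two \emph{disjoint} cubes $\mathcal D_1,\mathcal D_2\subset[0,1]^{d_x}$ of dimensions $d_x$ and $r=d_y-d_x$ (this is precisely where $d_y\le 2d_x$, i.e.\ $r\le d_x$, enters) and maps them onto the complementary flats $[-1,1]^{d_x}\times\{0\}^{r}$ and $\{0\}^{d_x}\times[-1,1]^{r}$; a Brouwer fixed-point argument (\cref{lem:brouwer}) applied to the normalized difference map $\psi(\alpha,\beta)=(h_1(\alpha)-h_2(\beta))/\|h_1(\alpha)-h_2(\beta)\|_\infty$ shows that the images of $\mathcal D_1$ and $\mathcal D_2$ under \emph{any} $1/3$-approximant must intersect, contradicting injectivity. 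If you want to pursue your single-sphere linking construction, you would still need a new idea to prove the rigidity statement; the two-disjoint-pieces intersection argument is what actually closes the proof.
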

\cref{thm:unif-lb-leakyrelu} states that $w_{\min}\ge d_y+1$ for $C([0,1]^{d_x},\mathbb R^{d_y})$ if $d_y\in(d_x,2d_x]$ and the activation function can be uniformly approximated by a sequence of continuous injections (e.g., any monotone continuous function such as $\relu$).
This bound is tight for Leaky-$\relu$ networks if $2d_x=d_y$, together with the matching upper bound $\max\{2d_x+1,d_y\}$ on $w_{\min}$ \citep{hwang23}.
Combined with \cref{thm:lp-ub}, this result implies that $w_{\min}$ for $\relu$ networks to be dense in $C([0,1]^{d_x},\mathbb R^{d_y})$ is strictly larger than that for $L^p([0,1]^{d_x},\mathbb R^{d_y})$ if $d_x<d_y\le2d_x$. 
With \cref{cor:general-lp} and monotone $\relul$ activation functions, a similar observation can also be made. %

We prove \cref{thm:unif-lb-leakyrelu} by explicitly constructing a continuous target function that cannot be approximated by a $\varphi$ network of width $d_y$ in a small uniform distance where $\varphi$ is an activation function that can be uniformly approximated by a sequence of continuous one-to-one functions.
In particular, based on topological arguments, we prove that any continuous function that uniformly approximates our target function within a small error has an intersection, i.e., it cannot be uniformly approximated by injective functions, which leads us to the statement of \cref{thm:unif-lb-leakyrelu}.
We present a detailed proof of \cref{thm:unif-lb-leakyrelu} including the formulation of our target function in \cref{sec:pfthm:unif-lb-leakyrelu}.

We lastly note that all results with the domain $[0,1]^{d_x}$ also hold for arbitrary compact domain $\mathcal K\subset\mathbb R^{d_x}$: if the target function $f^*$ is continuous, then one can always find $K>0$ such that $\mathcal K\subset[-K,K]^{d_x}$ and continuously extend $f^*$ to $[-K,K]^{d_x}$ by the Tietze extension lemma \citep{munkres}. If $f^*$ is $L^p$, then approximate $f^*$ by some continuous function and perform the extension.

\section{Tight upper bound on minimum width for $L^p$-approximation}\label{sec:lp-ub}

In this section, we prove the upper bound in \cref{thm:lp-ub} by explicitly constructing a $\relu$ network of width $\max\{d_x,d_y,2\}$ approximating a target function in $L^p([0,1]^{d_x},\mathbb R^{d_y})$. 
Since continuous functions on $[0,1]^{d_x}$ are dense in $L^p([0,1]^{d_x},\mathbb R^{d_y})$ \citep{rudin}, it suffices to prove the following lemma to show the upper bound. 
Here, we restrict the codomain to $[0,1]^{d_y}$; however, this result can be easily extended to the codomain $\mathbb R^{d_y}$ since the range of $f^*\in C([0,1]^{d_x},\mathbb R^{d_y})$ is compact. 
\begin{lemma}\label{lem:ub-lp}
Let $\varepsilon>0$, $p\ge1$, and $f^*\in C([0,1]^{d_x},[0,1]^{d_y})$. Then, there exists a $\relu$ network $f:[0,1]^{d_x}\to\mathbb R^{d_y}$ of width $\max\{d_x,d_y,2\}$ such that $\|f-f^*\|_{p}\le\varepsilon$.
\end{lemma}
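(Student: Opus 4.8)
The plan is to realize the "coding scheme" of Park et al. on the compact domain $[0,1]^{d_x}$, but with the encoder implemented by a $\relu$ network of width only $\max\{d_x,2\}$ rather than $d_x+1$. Concretely, I would fix a continuous target $f^*\in C([0,1]^{d_x},[0,1]^{d_y})$ and, given $\varepsilon>0$, first discretize: partition $[0,1]^{d_x}$ into $N^{d_x}$ half-open axis-aligned subcubes $Q_1,\dots,Q_{N^{d_x}}$ of side $1/N$, choose $N$ large enough (by uniform continuity of $f^*$) that $f^*$ varies by at most $\varepsilon/2$ (in $\|\cdot\|_\infty$) on each $Q_j$, and pick a representative value $v_j\in[0,1]^{d_y}$ with $v_j\approx f^*(Q_j)$. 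Since the boundaries of the $Q_j$ have Lebesgue measure zero, it suffices to build a $\relu$ network that equals (an approximation of) $v_j$ on the interior of each $Q_j$, up to a set of arbitrarily small measure — so I will freely discard thin neighborhoods of the cube boundaries, which costs only $O(\text{measure}) $ in the $L^p$ norm.

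**Next I would build the encoder.** The goal is a $\relu$ network $E:[0,1]^{d_x}\to\mathbb R$ of width $\max\{d_x,2\}$ that maps (most of) each $Q_j$ to a distinct scalar codeword $c_j$, with the codewords well-separated. The key observation enabling width $d_x$ rather than $d_x+1$ is that on a compact box we can afford to \emph{overwrite} input coordinates one at a time: process coordinate $1$ to extract its "digit" in $\{0,\dots,N-1\}$ (a piecewise-constant step function of $x_1$, realizable approximately by $\relu$s since a step is a difference of two clipped ramps), accumulate it into the coordinate that currently holds $x_1$ (which is no longer needed), then move to coordinate $2$, and so on — so at each moment we keep $d_x-1$ "live" coordinates plus one "accumulator" coordinate, for total width $d_x$ (and at least $2$). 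One must be slightly careful that a genuine $\relu$ step function is only approximate on a thin transition layer near each digit boundary; as above, those layers have small measure and are discarded. The end result is $E$ mapping the bulk of $Q_j$ to $c_j := \sum_{i} (\text{digit}_i)\, (N)^{-i}$ or a similar base-$N$ encoding, with the $c_j$ forming a finite separated set in $[0,1]$.

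**Then the decoder** $D:\mathbb R\to\mathbb R^{d_y}$, of width $\max\{d_y,2\}$: this is exactly the construction already used by Park et al. (and in earlier width-bounded results), namely a $\relu$ network that reads a scalar codeword and, coordinate by coordinate of the output, stamps out the value $v_j$ on a small interval around $c_j$ — interpolating to $0$ in between — using $2$ extra neurons for the running output register alongside the scalar channel. Composing, $f := D\circ E$ is a $\relu$ network of width $\max\{d_x,d_y,2\}$ (the widths of the two pieces do not add because the scalar codeword is a width-$1$ interface). On $[0,1]^{d_x}$ minus a set of measure $<\delta$, we have $\|f(x)-f^*(x)\|_\infty\le\|f(x)-v_{j(x)}\|_\infty+\|v_{j(x)}-f^*(x)\|_\infty\le\varepsilon/2+\varepsilon/2$; on the exceptional set, $f$ is bounded (it is continuous on the compact set $[0,1]^{d_x}$), so that region contributes at most $(\text{const})\cdot\delta^{1/p}$ to $\|f-f^*\|_p$, which is $\le\varepsilon$ after shrinking $\delta$. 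Rescaling $\varepsilon$ at the outset finishes the proof.

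**The main obstacle** is the encoder width: getting from $d_x+1$ down to $d_x$ requires verifying that the "extract digit of $x_i$ and dump it into the slot holding $x_i$" step can be done by $\relu$ layers without ever needing a fresh scratch neuron — i.e., that reading a coordinate and simultaneously overwriting it is implementable with an affine map, a $\relu$, and another affine map within the $d_x$ available channels, and that the unavoidable approximation error of a $\relu$-built step function is confined to a controllably small-measure set. A secondary subtlety is bookkeeping the composition so that errors from the approximate steps in $E$ do not get amplified by $D$ (handled by choosing the codeword separation large relative to the $E$-transition layers, so $D$ still "rounds" each perturbed codeword to the correct $v_j$). Everything else — the discretization, the zero-measure boundary sets, the $L^p$ estimate on the exceptional region — is routine.
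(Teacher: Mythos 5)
Your overall architecture is the same as the paper's: a coding scheme with a scalar-valued encoder of width $\max\{d_x,2\}$ composed with the Park et al.\ decoder of width $\max\{d_y,2\}$, discarding a small-measure exceptional set and doing a routine $L^p$ estimate. The decoder and the error bookkeeping are fine. The gap is in the encoder, which is exactly the step you flag as ``the main obstacle,'' and your proposed resolution does not close it. To extract the digit $\lfloor Nx_i\rfloor$ (an $N$-level staircase) you must, at some point, simultaneously hold a residual of $x_i$ \emph{and} the accumulating digit value: a width-$1$ $\relu$ chain $t_L\circ\relu\circ\cdots\circ\relu\circ t_1$ computes only functions of the form constant--affine--constant (each composition with $\relu(a\cdot+b)$ can only clip the ends of a monotone piecewise-linear function, never create a new interior flat piece), so it can realize a single clipped ramp but never a multi-step staircase. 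Hence digit extraction needs $2$ working channels (this is precisely Park et al.'s width-$2$ piecewise-linear lemma). Counting channels at the first step: one for the residual of $x_1$, one for the partial sum of its step functions, and $d_x-1$ for the coordinates $x_2,\dots,x_{d_x}$ that must be preserved --- total $d_x+1$. Overwriting $x_i$'s slot saves nothing because the slot is needed \emph{during} the extraction, not after it. So your encoder reproduces the width-$(d_x+1)$ construction of Park et al., which is exactly the bound the lemma is supposed to beat.

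The paper avoids digit extraction entirely. Its key tool (Lemma~\ref{lem:tool1}) shows that on a compact domain the map ``identity on a half-space $\mathcal H^+$, projection onto $\bd(\mathcal H^+)$ along a chosen direction $c$ on the complement'' is \emph{exactly} a two-layer $\relu$ network of width $d_x$, of the form $x\mapsto A^{-1}(\relu(Ax+v)-v)$, where compactness lets one offset all but one coordinate so that only one $\relu$ ever clips. The domain is partitioned into small-diameter cells by successive half-space cuts (Lemma~\ref{lem:poly-cut}), and the encoder iteratively collapses (most of) each cell onto a single point disjoint from the remaining cells by composing such projections (Lemma~\ref{lem:basic-encoder}), finally projecting the $k$ resulting points to distinct scalars. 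No coordinate is ever quantized and no scratch neuron is needed. If you want to salvage your write-up, you need to replace the coordinate-wise digit scheme with an argument of this geometric type (or some other genuinely width-$d_x$ mechanism); as written, the central claim ``total width $d_x$'' is false for $d_x\ge 2$.
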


\subsection{Coding scheme and $\relu$ network implementation (proof of \cref{lem:ub-lp})}\label{sec:coding}
\begin{figure}
    \centering
    \includegraphics[width=0.9\linewidth]{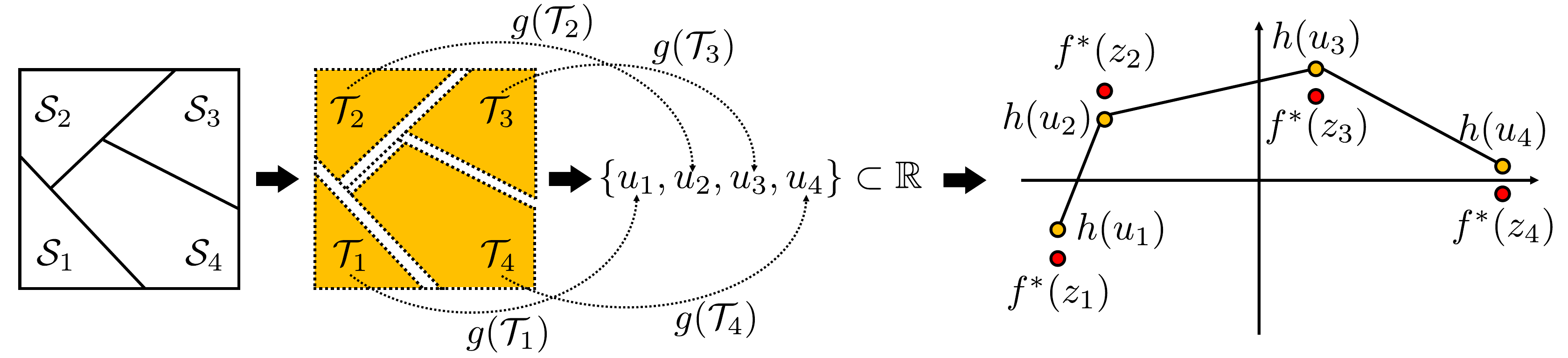}
    \caption{Illustration of our encoder and decoder when $d_x=2$, $d_y=1$ and $k=4$. Our encoder $g$ first maps each element of $\{\mathcal T_1,\dots,\mathcal T_4\}$ %
    to distinct scalar codewords $u_1,\dots,u_4$. Then, the decoder $h$ maps each codeword $u_i$ to $h(u_i)\approx f^*(z_i)$ {for some $z_i \in \mathcal T_i$.}
    }
    \label{fig:coding}
    \vspace{-0.1in}
\end{figure}

Our proof of \cref{lem:ub-lp} is based on the \emph{coding scheme} that consists of two functions \citep{park21}: the \emph{encoder} and \emph{decoder}.
The \emph{encoder} first transforms each input vector $x\in[0,1]^{d_x}$ to a scalar-valued codeword containing the information of $x$; then the \emph{decoder} maps each codeword to a target vector in $[0,1]^{d_y}$ which approximates $f^*(x)$. Namely, the composition of these two functions approximates the target function.
The precise operations of our encoder and decoder are as follows.

Suppose that a partition $\{\mathcal S_1,\dots,\mathcal S_k\}$ of the domain $[0,1]^{d_x}$ is given. %
Then, the encoder maps each input vector in $\mathcal S_i$ to some scalar-valued codeword $c_i$. Here, if the diameter of the set $\mathcal S_i$ is small enough, then it is reasonable to map vectors in $\mathcal S_i$ to the same codeword, say $c_i\in\mathbb R$, since the target function $f^*$ is uniformly continuous on $[0,1]^{d_x}$, i.e., $f^*(x)\approx f^*(x')$ for all $x,x'\in\mathcal S_i$.
However, since such an encoder is discontinuous in general, we approximate it using a $\relu$ network via the following lemma. 
We present the main proof idea of \cref{lem:encoder} in \cref{sec:pfsketch:encoder} and defer the full proof to \cref{sec:pflem:encoder}.
\begin{lemma}\label{lem:encoder}
    For any $\alpha,\beta>0$, there exist disjoint measurable sets $\mathcal T_1,\dots,\mathcal T_k\subset[0,1]^{d_x}$ and
    a $\relu$ network $f:\mathbb R^{d_x}\to\mathbb R$ of width $\max\{d_x,2\}$ such that 
    \begin{itemize}[leftmargin=15pt]
        \item $\diam(\mathcal T_i)\le\alpha$ for all $i\in[k]$,
        \item $\mu_{d_x}\big(\bigcup_{i=1}^k\mathcal T_i\big)\ge1-\beta$, and
        \item $f(\mathcal T_i)=\{c_i\}$ for all $i\in[k]$, for some distinct $c_1,\dots,c_k\in\mathbb R$.
    \end{itemize}
\end{lemma}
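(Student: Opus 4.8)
The plan is to construct the encoder network explicitly by a "grid + coordinate-collapsing" strategy. First I would partition the cube $[0,1]^{d_x}$ into a regular grid of small axis-aligned sub-cubes of side length $\alpha/(2\sqrt{d_x})$, so that each sub-cube has diameter at most $\alpha$; call these $\mathcal{Q}_1,\dots,\mathcal{Q}_m$. A genuine encoder would send each $\mathcal{Q}_j$ to a distinct real number, but the map $x\mapsto$ (index of the cube containing $x$) is discontinuous on the cube boundaries. The standard fix is to carve out a thin "forbidden" neighborhood of the grid hyperplanes: let $\mathcal{T}_i$ be the closed sub-cube $\mathcal{Q}_i$ shrunk by a margin $\delta$ on every side. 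Choosing $\delta$ small enough relative to $\alpha$ and $m$ guarantees $\mu_{d_x}(\bigcup_i \mathcal{T}_i)\ge 1-\beta$, and the $\mathcal{T}_i$ are disjoint with diameter $\le\alpha$. The real work is to build a width-$\max\{d_x,2\}$ $\relu$ network that is \emph{exactly constant} (and takes distinct values) on each $\mathcal{T}_i$ while being free to do anything on the $\beta$-measure leftover region.

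**Next I would implement the encoder coordinate by coordinate.** For a single coordinate, the one-dimensional function that is constant on each of the "safe" intervals $[j\eta+\delta, (j+1)\eta-\delta]$ (where $\eta = \alpha/(2\sqrt{d_x})$) and linearly interpolates in between is a continuous piecewise-linear function — essentially a "soft staircase" — and can be written as a sum of $\relu$'s, hence realized by a $\relu$ network. Composing (not summing) these staircases across the $d_x$ coordinates is the key: I would process coordinates sequentially in a network that carries a running "accumulated codeword" in one extra channel. Concretely, with $d_x$ channels I can keep the input coordinates, and with one more channel (total width $d_x+1$) accumulate $\sum_{\ell} \step_\ell(x_\ell)\cdot B^{\ell}$ for a large base $B$ equal to the number of grid levels, so that distinct grid cells map to distinct codeword values. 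To get down to width $\max\{d_x,2\}$ rather than $d_x+1$, I would exploit that once coordinate $x_\ell$ has been "read" into the accumulator it is no longer needed: I process coordinates one at a time, overwriting the channel holding $x_\ell$ with the next piece of bookkeeping, so at most $\max\{d_x,2\}$ channels are live simultaneously (the $\max$ with $2$ covers the degenerate $d_x=1$ case where we still need one input channel plus one accumulator channel). Each "read one coordinate" block is a constant-depth $\relu$ sub-network, and concatenating $d_x$ of them gives the full encoder.

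**The last step** is bookkeeping: verify that on $\prod_\ell [j_\ell\eta+\delta, (j_\ell+1)\eta - \delta]$ each one-dimensional staircase outputs exactly the integer $j_\ell$ (a true constant, because the safe interval is strictly inside the flat part of the staircase), so the accumulator outputs exactly $\sum_\ell j_\ell B^\ell$, a distinct constant $c_i$ for each cell index tuple; relabel the nonempty cells as $\mathcal{T}_1,\dots,\mathcal{T}_k$ and the corresponding constants as $c_1,\dots,c_k$. Disjointness, the diameter bound, and the measure bound follow from the construction of the grid and the choice of $\delta$.

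**The main obstacle** I anticipate is the width accounting — showing rigorously that the sequential "read and overwrite" scheme never needs more than $\max\{d_x,2\}$ simultaneous neurons while still (i) preserving the not-yet-read coordinates through the blocks that process earlier coordinates, (ii) maintaining the accumulator, and (iii) having room to compute each staircase, which itself wants a couple of auxiliary neurons. The trick is that a staircase on coordinate $x_\ell$ can be folded directly into the affine maps feeding the accumulator without a dedicated scratch channel (using the identity that $\relu$ networks compute, in each layer, affine-then-$\relu$, and that the flat pieces we care about let us be sloppy on the transition regions), and that once $x_\ell$ is consumed its channel is immediately repurposed. A secondary subtlety is ensuring the codeword values $c_i$ are genuinely \emph{distinct} and not merely distinct "generically": choosing the base $B$ strictly larger than the number of grid levels per axis makes the representation positional and injective, which is clean. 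I expect the rest — diameter, measure, continuity of the $\relu$ realization — to be routine.
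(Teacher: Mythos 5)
Your overall architecture (an approximate partition with a margin carved out, plus a network that is exactly constant on each cell) matches the paper's, and the grid-with-margin construction of the $\mathcal T_i$ does deliver the diameter, disjointness, and measure conditions. The gap is exactly where you flagged it: the width accounting for the coordinate-by-coordinate ``staircase plus accumulator'' scheme does not close, and the trick you propose to rescue it does not work. While coordinate $x_1$ is being read, you must keep $x_2,\dots,x_{d_x}$ alive ($d_x-1$ channels), and you must compute a staircase of $x_1$ with on the order of $1/\eta$ flat pieces. A staircase with many breakpoints is not affine, so it cannot be ``folded into the affine maps feeding the accumulator'': a single affine-then-$\relu$ step applied to one neuron produces a function of $x_1$ with at most one breakpoint, so the staircase must be built over many layers, and the standard width-$2$ realization of a piecewise-linear function (Lemma~14 of \citet{park21}, restated as \cref{lemma:piecewise-relu} here) needs two live channels throughout --- one carrying (a shifted copy of) $x_1$ and one carrying the partial sum. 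That gives $(d_x-1)+2=d_x+1$ simultaneous neurons at the very first read, before any overwriting can help. This is precisely the obstruction that pins \citet{park21} at width $d_x+1$ for the encoder, and your scheme as described reproduces that bound rather than improving it to $\max\{d_x,2\}$.

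The paper escapes this by never maintaining a scalar accumulator alongside the input. Its partition cells $\mathcal S_i$ are successive half-space cuts of the cube (\cref{lem:poly-cut}), and the key device is \cref{lem:tool1}: on a compact domain, a two-layer $\relu$ network of width $d_x$ can act as the identity on a half-space $\mathcal H^+$ and project everything outside $\mathcal H^+$ onto its boundary along a chosen direction (compactness is what lets the other $d_x-1$ coordinates be shifted positive so $\relu$ is the identity on them). Iterating such maps collapses most of $\mathcal S_i$ to a single \emph{point} $u_i\in\mathbb R^{d_x}$ while fixing the not-yet-processed cells and keeping the previously created points distinct (\cref{lem:basic-encoder,lem:move-pt,lem:poly-to-point}); the codeword is thus carried as a point in the same $d_x$ channels that hold the input, and only after all $k$ cells have been collapsed are the $k$ points sent to distinct scalars by one linear functional (\cref{lem:distinct-innerprod}). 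If you want to complete your write-up at width $\max\{d_x,2\}$, you would need to replace the staircase accumulator with some mechanism of this kind; as it stands, the central claim of the lemma --- the width bound --- is the one step your argument does not establish.
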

\cref{lem:encoder} states that there is an (approximate) encoder given by a $\relu$ network $g$ of width $\max\{d_x,2\}$ that can assign distinct codewords to $\mathcal T_1,\dots,\mathcal T_k$.
Here, $\mathcal T_1,\dots,\mathcal T_k$ can be considered as an {approximate} partition since
they are disjoint and cover at least $1-\beta$ fraction of the domain for any $\beta>0$. 
By choosing a small enough $\alpha$, we can have a small \emph{information loss} of the input vectors in $\mathcal T_1,\dots,\mathcal T_k$, incurred by encoding them via \cref{lem:encoder}.
We note that such an approximate encoder may map inputs that are not contained in $\mathcal T_1\cup\cdots\cup\mathcal T_k$ to arbitrary values.

Once the encoder transforms 
{all input vectors in
$\mathcal T_i$
to a {single} codeword $c_i\in\mathbb R$, 
the decoder maps the codeword to a $d_y$-dimensional vector 
that approximates $f^*(\mathcal T_i)$. %
We implement the decoder using a $\relu$ network using the following lemma, which is a corollary of Lemma~9 and Lemma~10 in \citet{park21}. See \cref{sec:pflem:decoder0} for its formal derivation.
\begin{lemma}\label{lem:decoder0}
    For any $p\ge1$, $\gamma>0$, distinct $c_1,\dots,c_k\in\mathbb R$, and $v_1,\dots,v_k\in\mathbb R^{d_y}$, there exists a $\relu$ network $f:\mathbb R\to[0,1]^{d_y}$ of width $\max\{d_y,2\}$ such that $\|f(c_i)-v_i\|_p\le\gamma$ for all $i\in[k]$.
\end{lemma}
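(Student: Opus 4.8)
The plan is to derive \cref{lem:decoder0} as a short reduction to two constructions already available in \citet{park21}, so the work is essentially hypothesis-matching rather than a new argument. The two building blocks are: (i) a width-$2$ $\relu$ gadget realizing arbitrary prescribed values at finitely many scalar inputs (scalar interpolation, Lemma~9 of \citet{park21}); and (ii) a width-$\max\{d_y,2\}$ $\relu$ gadget mapping a \emph{conveniently placed} family of scalar codewords to prescribed vectors in $[0,1]^{d_y}$ up to any $L^p$ error (vector decoding, Lemma~10 of \citet{park21}). Before starting I note that the statement is non-vacuous, and is the only case actually invoked in the proof of \cref{lem:ub-lp}, when $v_i\in[0,1]^{d_y}$ for every $i$, since the required network maps into $[0,1]^{d_y}$; so I assume $v_i\in[0,1]^{d_y}$. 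I also relabel the codewords so that $c_1<\dots<c_k$.

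\textbf{Step 1: re-code the inputs.} Using building block (i), I would construct a width-$2$ $\relu$ network $g_1:\mathbb R\to\mathbb R$ with $g_1(c_i)=c_i'$ for all $i\in[k]$, where $c_1'<\dots<c_k'$ is any family of scalars sitting in whatever configuration building block (ii) requires (e.g.\ $c_i'=i$, or an equispaced family inside a fixed compact interval). Concretely, one may take any continuous piecewise-linear interpolant of the points $(c_i,c_i')_{i\in[k]}$, which has at most $k{+}1$ breakpoints, and realize it exactly by a width-$2$ $\relu$ network as in Lemma~9 of \citet{park21}; only the behavior of $g_1$ at the finitely many points $c_i$ is used, and its values elsewhere are irrelevant.

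\textbf{Step 2: decode and compose.} Applying building block (ii) to the placed codewords $c_1',\dots,c_k'$, the targets $v_1,\dots,v_k\in[0,1]^{d_y}$, and the error level $\gamma$, I obtain a width-$\max\{d_y,2\}$ $\relu$ network $g_2:\mathbb R\to[0,1]^{d_y}$ with $\|g_2(c_i')-v_i\|_p\le\gamma$ for all $i$. I then set $f\defeq g_2\circ g_1$. The composition of a width-$2$ network with a width-$\max\{d_y,2\}$ network is again a $\relu$ network of width $\max\{d_y,2\}$, because $\max\{d_y,2\}\ge2$ and the output (affine) layer of $g_1$ merges with the input (affine) layer of $g_2$ into a single affine map; its range lies in $[0,1]^{d_y}$ since that of $g_2$ does; and $f(c_i)=g_2(g_1(c_i))=g_2(c_i')$, hence $\|f(c_i)-v_i\|_p\le\gamma$ for all $i\in[k]$, which is the claim.

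\textbf{Main obstacle.} There is no genuinely hard step: the care needed is (a) invoking the two cited lemmas of \citet{park21} with exactly matching hypotheses — in particular, ensuring in Step~1 that the re-coded $c_i'$ sit in the configuration demanded by the vector-decoding lemma (this is precisely why the re-coding is present, and why \cref{lem:decoder0} is a corollary of \emph{two} lemmas rather than one), and that the vector-decoding lemma indeed maps into $[0,1]^{d_y}$ and gives pointwise $\gamma$-closeness at the codewords; and (b) the trivial width accounting under composition. The only place any real delicacy hides is in building block (ii) itself: assembling $d_y$ bounded output coordinates from a single scalar codeword using only $\max\{d_y,2\}$ neurons (rather than $d_y+1$) — but that is exactly what Lemma~10 of \citet{park21} already supplies, so I would simply cite it.
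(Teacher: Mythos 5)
Your proposal is correct and is essentially the paper's own proof: the paper likewise composes Lemma~9 of \citet{park21} (width-$2$ interpolation sending each $c_i$ to a re-coded scalar) with Lemma~10 (the width-$d_y$ decoder), the only concretization being that the ``conveniently placed'' codewords must be $c_i'=\mathrm{encode}_M(v_i)\in\mathcal C_{d_yM}$ — the concatenated $M$-bit quantizations of the coordinates of $v_i$ — rather than, say, $c_i'=i$, since Lemma~10 is a fixed binary decoder and the error $d_y^{1/p}2^{-M}\le\gamma$ then comes from quantization. Your preliminary reduction to $v_i\in[0,1]^{d_y}$ and the width/range accounting under composition match the paper's (implicit) treatment.
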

\cref{lem:ub-lp} follows from an (approximate) encoder and decoder in \cref{lem:encoder,lem:decoder0}.
Let $\mathcal T_1,\dots,\mathcal T_k$ be an approximate partition and $g$ be a $\relu$ network of width $\max\{d_x,2\}$ in \cref{lem:encoder} with some $\alpha,\beta>0$.
Likewise, let $h$ be a $\relu$ network of width $\max\{d_y,2\}$ in \cref{lem:decoder0} with codewords $c_1,\dots,c_k$ generated by $g$, $v_i=f^*(z_i)$ for some $z_i\in\mathcal T_i$ for all $i\in[k]$, and some $\gamma>0$. Then, $f=h\circ g$ can be implemented by a $\relu$ network of width $\max\{d_x,d_y,2\}$ and can approximate the target function $f^*$ in $\varepsilon$ error if we choose small enough $\alpha,\beta,\gamma$.
Since the codomain of our decoder is $[0,1]$, one can observe that $f(x)\in[0,1]$ for any $x$ that is not contained in any of $\mathcal T_1,\dots,\mathcal T_k$, i.e., they only incur a small $L^p$ error if $\beta$ is small enough.
\cref{fig:coding} illustrates our encoder and decoder construction.
See \cref{sec:choice-abc} for our choices of $\alpha,\beta,\gamma$ achieving the statement of \cref{lem:ub-lp}.

\subsection{Approximating encoder using $\relu$ network (proof sketch of \cref{lem:encoder})}\label{sec:pfsketch:encoder}

In this section, we sketch the proof of \cref{lem:encoder} where the full proof is in \cref{sec:pflem:encoder}.
To this end, we first introduce the following key lemma. The proof of \cref{lem:tool1} is deferred to \cref{sec:pflem:tool1}
\begin{lemma}\label{lem:tool1}
For any $d_x\in\mathbb N$, a compact set $\mathcal K\subset\mathbb R^{d_x}$, $a,c\in\mathbb R^{d_x}$ such that $a^\top c>0$, and $b\in\mathbb R$, there exists a two-layer $\relu$ network $f:\mathcal K\to\mathbb R^n$ of width $d_x$ such that
\begin{align*}
f(x)=\begin{cases}
x~&\text{if}~a^\top x+b\ge 0\\
x-\frac{a^\top x+b}{a^\top c}\times c~&\text{if}~a^\top x+b<0
\end{cases}.
\end{align*}
\end{lemma}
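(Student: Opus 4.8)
The plan is to realize the target map as ``identity plus a single rank-one $\relu$ correction along $c$'' and then squeeze that realization into width $d_x$ by absorbing one input coordinate into the correction direction. As a first step I would set $v\defeq -a/(a^\top c)$ and $b'\defeq -b/(a^\top c)$ (well defined since $a^\top c>0$), so that $v^\top x+b'=-(a^\top x+b)/(a^\top c)$ and the two cases in the statement are exactly $v^\top x+b'\le 0$ and $v^\top x+b'>0$; hence the map can be written uniformly as $f(x)=x+\relu(v^\top x+b')\,c$. Realizing this directly would cost one hidden neuron for $\relu(v^\top x+b')$ plus $d_x$ further neurons to re-synthesize the identity $x\mapsto x$ on the compact set $\mathcal K$ (each coordinate recovered as $\relu(x_i+R)-R$ with $R$ large), i.e. width $d_x+1$ — one too many.

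The key step is a change of coordinates adapted to $c$. Since $a^\top c\ne0$ we have $c\ne0$; I would split $\mathbb R^{d_x}=W\oplus\mathrm{span}(c)$ with $W\defeq\ker\lambda$ for $\lambda\defeq -v$, so that $\lambda(c)=-v^\top c=1$ and $\lambda(x)=(a^\top x)/(a^\top c)$. Writing $x=y(x)+\lambda(x)c$ with $y(x)\defeq x-\lambda(x)c\in W$ (the linear projection along $c$, with $\ker y=\mathrm{span}(c)$), a one-line computation gives
\[
f(x)=y(x)+\max\{\lambda(x),\,b'\}\,c=y(x)+b'c+\relu\big(\lambda(x)-b'\big)\,c .
\]
The gain is that the $\mathrm{span}(c)$-component of $f$ is now a constant plus a single $\relu$, while the $W$-component $y(x)$ is linear in $x$ and lives in the $(d_x-1)$-dimensional space $W$.

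Now I would build the two-layer network. Pick $d_x-1$ linearly independent functionals $m_1,\dots,m_{d_x-1}$ annihilating $c$ (a basis of $\{c\}^\perp$); then $\bigcap_i\ker m_i=\mathrm{span}(c)=\ker y$, so $y$ factors through $M(x)\defeq(m_1(x),\dots,m_{d_x-1}(x))$, i.e. there is a linear $\Psi:\mathbb R^{d_x-1}\to W$ with $y=\Psi\circ M$. Using compactness of $\mathcal K$, choose $R>0$ with $m_i(x)+R>0$ on $\mathcal K$ for all $i$. Let the first affine map be $t_1(x)=\big(m_1(x)+R,\dots,m_{d_x-1}(x)+R,\ \lambda(x)-b'\big)\in\mathbb R^{d_x}$, apply coordinatewise $\relu$, and let the second affine map send $(h_1,\dots,h_{d_x})$ to $\Psi(h_1-R,\dots,h_{d_x-1}-R)+b'c+h_{d_x}c$. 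On $\mathcal K$ the first $d_x-1$ pre-activations are positive, so $h_i=m_i(x)+R$ and $h_{d_x}=\relu(\lambda(x)-b')$, and substituting yields exactly $\Psi(M(x))+b'c+\relu(\lambda(x)-b')c=f(x)$. This is a two-layer $\relu$ network of width $d_x$; the case $d_x=1$ is the degenerate instance $W=\{0\}$, $y\equiv0$, which checks out directly.

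The only real difficulty is the width budget: re-synthesizing the identity on $\mathbb R^{d_x}$ already eats $d_x$ neurons and the fold needs one more. The resolution is exactly the observation above — $f$ lands in the affine slab $W\oplus\mathrm{span}(c)$ with its $\mathrm{span}(c)$-part equal to a constant plus a single $\relu$, so only $d_x-1$ genuine ``pass-through'' neurons are required — and choosing the splitting functional to be $\lambda=-v$ (rather than an arbitrary $\lambda$ with $\lambda(c)=1$) is what collapses the $c$-coefficient to the clean $\max\{\lambda(x),b'\}$. Everything else is routine bookkeeping with affine maps and with the boundedness of the $m_i$ on $\mathcal K$.
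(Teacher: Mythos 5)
Your proof is correct and is essentially the paper's own construction in coordinate-free clothing: the paper conjugates by the invertible matrix $A$ whose first row is $a$ and whose remaining rows form a basis of $\{x: c^\top x=0\}$, shifts those last $d_x-1$ coordinates by a large constant $K$ so they pass through the $\relu$ unchanged on $\mathcal K$, and lets the first coordinate $\relu(a^\top x+b)$ implement the fold, with $A^{-1}e_1=c/(a^\top c)$ producing the correction direction. Your functionals $m_1,\dots,m_{d_x-1}$, offset $R$, and neuron $\relu(\lambda(x)-b')$ are exactly these ingredients (the first row rescaled by $1/(a^\top c)$), so no further comparison is needed.
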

\cref{lem:tool1} states that 
there exists a two-layer $\relu$ network of width $d_x$ on a compact domain
that preserves the points in the half-space $\mathcal H^+=\{x\in\mathbb R^{d_x}:a^\top x+b\ge0\}$ and projects points not in $\mathcal H^+$ to the boundary of $\mathcal H^+$ along the direction determined by a vector $c$ as illustrated in \cref{fig:proof-proj-a}.

This lemma has two important applications. First, for any bounded set, \cref{lem:tool1} enables us to project it onto a hyperplane (the boundary of $\mathcal H^+)$, along a vector $c$.
In other words, we can use \cref{lem:tool1} for decreasing a dimension of a bounded set or moving a point as illustrated in \cref{fig:proof-proj-a}.
Furthermore, given a polytope $\mathcal P\subset\mathbb R^{d_x}$ and a half-space $\mathcal H^+$ such that both $\mathcal P\cap\mathcal H^+$ and $\mathcal P\setminus\mathcal H^+$ are non-empty, we can construct a $\relu$ network of width $d_x$ that preserves points in $\mathcal P\cap\mathcal H^+$ and maps some $\mathcal T\subset\mathcal P\setminus\mathcal H^+$ with $\mu_{d_x}(\mathcal T)\approx\mu_{d_x}(\mathcal P\setminus\mathcal H^+)$ to a single point disjoint to $\mathcal P\cap\mathcal H^+$.
As illustrated in \cref{fig:proof-proj-b}, this can be done by mapping $\mathcal P\setminus\mathcal H^+$ onto the boundary of $\mathcal H^+$ first, and then, iteratively projecting a subset of the image of $\mathcal P\setminus\mathcal H^+$ (i.e., the image of $\mathcal T$) to a single point using \cref{lem:tool1}. We note that the measure of $\mathcal T$ can be arbitrarily close to that of $\mathcal P\setminus\mathcal H^+$ by choosing a proper $c$ in \cref{lem:tool1} when projecting $\mathcal P\setminus\mathcal H^+$ onto the boundary of $\mathcal H^+$. %

\begin{figure}
     \centering
     \begin{subfigure}[b]{0.2295\textwidth}%
         \centering
         \includegraphics[width=\linewidth]{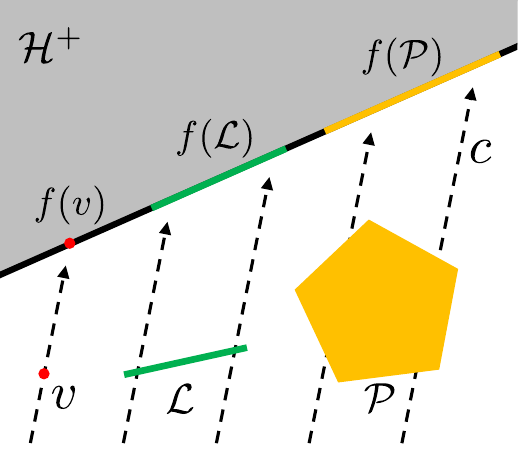}
         \caption{}
         \label{fig:proof-proj-a}
     \end{subfigure}
     \quad
     \begin{subfigure}[b]{0.63\textwidth}%
         \centering
         \includegraphics[width=\linewidth]{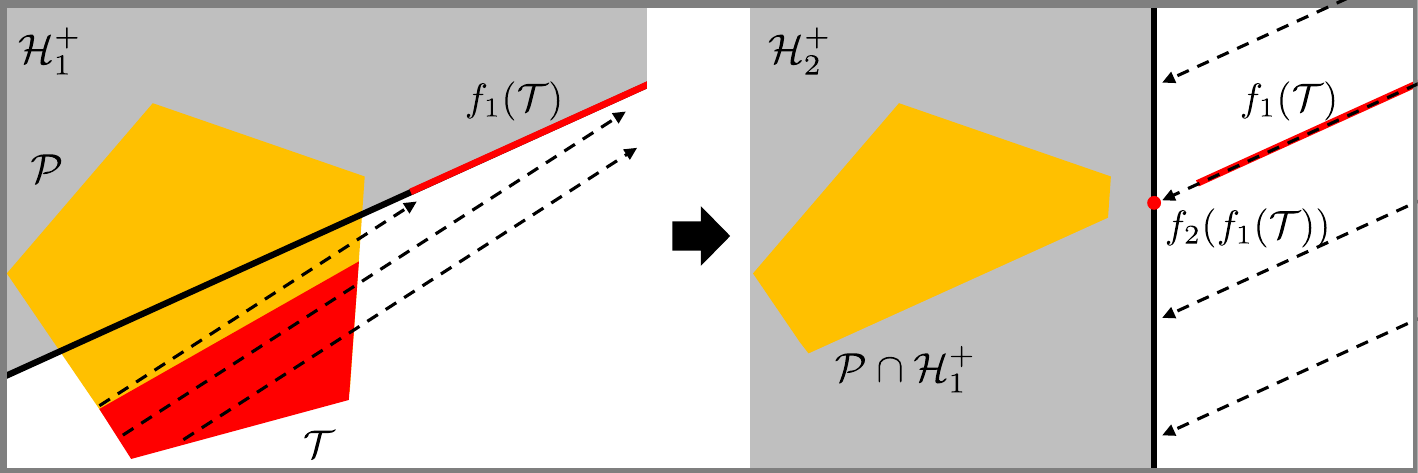}
         \caption{}
         \label{fig:proof-proj-b}
     \end{subfigure}
        \caption{Construction of $f$ in \cref{lem:tool1}. (a) $f$ preserves points in the half-space $\mathcal H^+$ represented by the gray area and projects points outside of $\mathcal H^+$ to the boundary of $\mathcal H^+$.
        (b) Illustrations of mapping $\mathcal T$ to a single point disjoint to $\mathcal P \cap \mathcal H_1^+$ when $d_x=2$:
        $f_1$ maps $\mathcal T$ onto the boundary of $\mathcal H_1^+$ and then $f_2$ maps $f_1(\mathcal T)$ to the point $f_2(f_1(\mathcal T))$ while preserving points in $\mathcal P \cap \mathcal H_1^+$.}
        \label{fig:proof-proj}
        \vspace{-0.1in}
\end{figure}

We now describe our construction of $f$ in \cref{lem:encoder}.
First, suppose that there is a partition $\{\mathcal S_1,\dots,\mathcal S_k\}$ of the domain $[0,1]^{d_x}$
where each $\mathcal S_i$ can be represented as
\begin{align*}
\mathcal S_i=[0,1]^{d_x}\cap\bigg(\bigcap_{j=1}^{i-1} \mathcal H^+_j\bigg)\cap(\mathcal H^+_i)^c
\end{align*} 
for some half-spaces $\mathcal H^+_1,\dots,\mathcal H^+_k$; see the first image in \cref{fig:enc} for example.
Suppose further that $\diam(\mathcal S_i)\le\alpha$ for all $i\in[k]$. We note that such a partition always exists as stated in \cref{lem:poly-cut} in \cref{sec:pflem:encoder}.
As in the second image of \cref{fig:enc}, the most part of $\mathcal S_1$ ($\mathcal T_1$ in \cref{fig:enc}) can be mapped into a single point ($\smash{u_1}$ in \cref{fig:enc}) disjoint to $\smash{\mathcal S_2\cup\cdots\cup\mathcal S_k}$ using \cref{lem:tool1}.
Likewise, we map the most part of $\mathcal S_2$ ($\mathcal T_2$ in \cref{fig:enc}) to a single point  disjoint to $\smash{\{u_1\}\cup\mathcal S_3\cup\cdots\cup\mathcal S_k}$. Here, if $u_1\notin\mathcal H_2^+$, we first move it so that $u_1\in\mathcal H_2^+$ using \cref{lem:tool1} while preserving points in $\smash{\mathcal S_2\cup\cdots\cup\mathcal S_k}$. By repeating this procedure, we can consequently map the most parts ($\mathcal T_1,\dots,\mathcal T_k$) of $\mathcal S_1,\dots,\mathcal S_k$ to $k$ distinct points via a $\relu$ network of width $d_x$ (the second last image in \cref{fig:enc}).
We finally project these points to distinct scalar values as illustrated in the last image in \cref{fig:enc}.
See \cref{lem:basic-encoder,lem:distinct-innerprod} in \cref{sec:pflem:encoder} and its proof for the formal statements.

We note that our construction of a $\relu$ network satisfies the three conditions in \cref{lem:encoder}.
The first condition is naturally satisfied since $\mathcal T_i\subset\mathcal S_i$ and $\diam(\mathcal S_i)\le\alpha$.
The second condition can also be satisfied since the measure of $\mathcal T_i$ can be arbitrarily close to that of $\mathcal S_i$ for all $i\in[k]$.
Lastly, our construction maps each $\mathcal T_i$ to a distinct scalar value; this provides the third condition.

\section{Lower bound on minimum width for uniform approximation}\label{sec:pfthm:unif-lb-leakyrelu}
\begin{figure}
    \centering
    \includegraphics[width=0.9\linewidth]{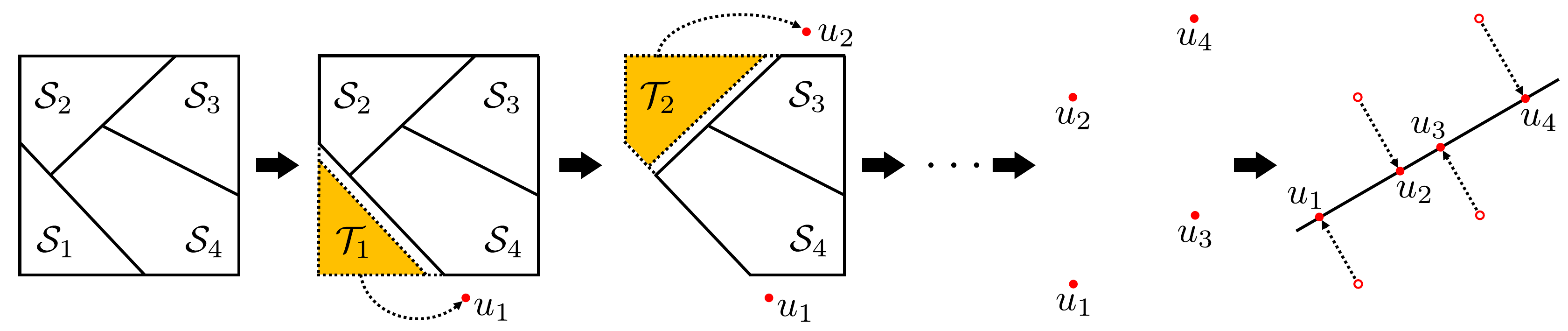}
    \caption{Illustration of the encoder when $d_x=2$ and $k=4$.
    For each partition $\mathcal S_i$, the encoder maps $\mathcal T_i\subset\mathcal S_i$ to some point $\smash{u_i}\notin\{u_1,\dots,u_{i-1}\}\cup\mathcal S_{i+1}\cup\cdots\cup\mathcal S_k$.
    After that, the encoder maps $u_1,\dots,u_k$ to some distinct scalar values by projecting them.
    }
    \label{fig:enc}
    \vspace{-0.1in}
\end{figure}

In this section, we prove \cref{thm:unif-lb-leakyrelu} by explicitly showing the existence of a continuous function $f^*:[0,1]^{d_x}\to\mathbb R^{d_y}$ that cannot be approximated by any $\varphi$ network of width $d_y$ within $1/3$ error in the uniform norm when $d_x<d_y\le2d_x$. Based on the following lemma, we assume that the activation function $\varphi$ is a continuous injection throughout the proof without loss of generality. The proof of \cref{lem:inj-approx} is presented in \cref{sec:pflem:inj-approx}.

\begin{lemma}\label{lem:inj-approx}
    Let $\sigma:\mathbb R\to\mathbb R$ be a continuous function that can be uniformly approximated by a sequence of continuous injections.
    Then, for any $\sigma$ network $f:[0,1]^{d_x}\to\mathbb R^{d_y}$ of width $w$ and for any $\varepsilon>0$, there exists a $\varphi$ network $g:[0,1]^{d_x}\to\mathbb R^{d_y}$ of width $w$ such that $\varphi:\mathbb R\to\mathbb R$ is a continuous injection and $\|f-g\|_\infty\le\varepsilon$. %
\end{lemma}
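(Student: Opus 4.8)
The goal is to show that if $\sigma$ is continuous and is a uniform limit of continuous injections $\{\sigma_n\}$, then any $\sigma$ network $f$ of width $w$ on the compact domain $[0,1]^{d_x}$ can be approximated, in the uniform norm, by a $\varphi$ network of the same width $w$ where $\varphi$ is a single continuous injection. The plan is to replace the activation $\sigma$ by a well-chosen $\sigma_n$ throughout the network, layer by layer, and control the propagation of error.

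First I would set up the layer-by-layer replacement. Write $f = t_L \circ \phi_{L-1} \circ \cdots \circ \phi_1 \circ t_1$ as in \eqref{eq:def-nn}, where each $\phi_\ell$ applies $\sigma$ coordinatewise. Fix a large compact box $\mathcal B \subset \mathbb R^{\max_\ell d_\ell}$ (or a separate box per layer) that contains all the pre-activation and post-activation vectors produced by $f$ as the input ranges over $[0,1]^{d_x}$; such a box exists because $[0,1]^{d_x}$ is compact and all maps involved are continuous. Now define $g_n$ to be the network obtained from $f$ by replacing every occurrence of $\sigma$ with $\sigma_n$, keeping all affine maps $t_\ell$ unchanged. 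Since $\varphi \defeq \sigma_n$ is a continuous injection for each $n$, it only remains to show $\|f - g_n\|_\infty \to 0$.

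The key step is a uniform continuity / telescoping estimate. On the compact box $\mathcal B$, we have $\sup_{z \in \mathcal B}|\sigma(z) - \sigma_n(z)| \to 0$ by hypothesis, so $\|\phi_\ell - \phi_\ell^{(n)}\|_{\infty, \mathcal B} \to 0$ where $\phi_\ell^{(n)}$ applies $\sigma_n$ coordinatewise. Each affine map $t_\ell$ is Lipschitz (with some constant $\Lambda_\ell = \|W_\ell\|$ depending only on $f$), and $\sigma$ is uniformly continuous on $\mathcal B$ (being continuous on a compact set), hence each $\phi_\ell$ is uniformly continuous on $\mathcal B$. One then telescopes: writing $f$ and $g_n$ as alternating compositions, the difference $\|f(x) - g_n(x)\|_\infty$ is bounded by a sum of $L$ terms, the $j$-th of which bounds the effect of swapping $\phi_j$ for $\phi_j^{(n)}$ while all later layers are taken with the true $\sigma$; this term is controlled by the modulus of continuity of the composition $t_L \circ \phi_{L-1} \circ \cdots \circ t_{j+1}$ evaluated at the displacement $\|\phi_j - \phi_j^{(n)}\|_{\infty,\mathcal B}$, plus we need the intermediate iterates of $g_n$ to stay in (a slightly enlarged) $\mathcal B$, which holds for $n$ large since they converge uniformly to those of $f$. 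Taking $n$ large makes each of the finitely many terms smaller than $\varepsilon/L$, giving $\|f - g_n\|_\infty \le \varepsilon$. Setting $g = g_n$ and $\varphi = \sigma_n$ for such an $n$ completes the proof.

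The main obstacle is the bookkeeping for the box $\mathcal B$: the iterates of $g_n$ are not a priori confined to $\mathcal B$, so one must argue that for $n$ large enough they lie in, say, the $1$-neighborhood of $\mathcal B$, and then work with uniform continuity moduli on that slightly enlarged compact set. This is a standard perturbation argument but needs to be stated carefully since $\sigma_n$ is only close to $\sigma$ on bounded sets, not globally. Everything else — Lipschitzness of affine maps, uniform continuity of $\sigma$ on compacts, the telescoping sum — is routine. I would also remark that the same argument shows any lower bound proved for $\varphi$ networks with $\varphi$ a continuous injection transfers to $\sigma$ networks, which is exactly how \cref{lem:inj-approx} will be used in the proof of \cref{thm:unif-lb-leakyrelu}: a width-$d_y$ $\sigma$ network approximating $f^*$ within $1/3$ would yield a width-$d_y$ $\varphi$ network approximating $f^*$ within, say, $1/3 + \text{tiny}$, contradicting the injective-activation case.
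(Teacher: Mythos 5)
Your proposal is correct and follows essentially the same route as the paper: replace $\sigma$ by a single continuous injection $\sigma_n$ in every layer while keeping the affine maps, and propagate the error through the $L$ layers using uniform continuity of the affine maps and activations on the relevant compact sets (the paper organizes this as a forward recursion on the truncated networks $f_\ell,g_\ell$ rather than a telescoping sum, which is an inessential difference). Your extra bookkeeping with the box $\mathcal B$ is actually not needed under the paper's reading of the hypothesis---the injections approximate $\sigma$ uniformly on all of $\mathbb R$---but it does no harm and makes the argument robust to the weaker assumption of uniform convergence on compacta.
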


{\bf Our choice of $f^*$.} We consider $f^*$ of the following form: for $r=d_y-d_x$, $x=(x_1,\dots,x_{d_x})\in[0,1]^{d_x}$, $\mathcal D_1=[0,1/3]^{d_x}$, and $\mathcal D_2=[2/3,1]^r\times\{1\}^{d_x-r}$,
\begin{align*}
    f^*(x)=
    \begin{cases}
        (1-6x_1,1-6x_2,\dots,1-6x_{d_x},0,\dots,0)~&\text{if}~x\in\mathcal D_1\\
        (0,\dots,0,6x_1-5,6x_2-5,\dots,6x_r-5)~&\text{if}~x\in\mathcal D_2\\
        g^*(x)~&\text{otherwise}
    \end{cases},
\end{align*}
where $g^*$ is some continuous function that makes $f^*$ continuous; such $g^*$ always exists by 
{the Tietze extension lemma and the pasting lemma \citep{munkres}}.
We note that $f^*|_{\mathcal D_1}$ and $f^*|_{\mathcal D_2}$ are injections whose images are $[-1,1]^{d_x}\times\{0\}^r$ and $\{0\}^{d_x}\times[-1,1]^{r}$, respectively, i.e., $f^*(\mathcal D_1)\cap f^*(\mathcal D_2)=\{(0,\dots,0)\}$.
See \cref{fig:proof-lb-a,fig:proof-lb-b} for illustrations of $\mathcal D_1,\mathcal D_2,f^*(\mathcal D_1)$, and $f^*(\mathcal D_2)$.

{\bf Assumptions on $\varphi$ network approximating $f^*$.} Suppose for a contradiction that there are a continuous injection $\varphi:\mathbb R\to\mathbb R$ and a $\varphi$ network $f:[0,1]^{d_x}\to\mathbb R^{d_y}$ such that $\|f^*-f\|_\infty\le1/3$~and
$f=t_L\circ\phi\circ\cdots\circ t_2\circ\phi\circ t_1$
where $t_1:\mathbb R^{d_x}\to\mathbb R^{d_y}$, $t_2,\dots,t_{L}:\mathbb R^{d_y}\to\mathbb R^{d_y}$ are some affine transformations and $\phi:\mathbb R^{d_y}\to\mathbb R^{d_y}$ is a pointwise application of $\varphi$.
Without loss of generality, we assume that
$t_2,\dots,t_{L}$ are invertible, as invertible affine transformations are dense in the space of affine transformations on bounded support, endowed with $\|\cdot\|_\infty$.
Likewise, we assume that $t_1$ is injective.
Since $\varphi$ is an injection, $f$ is also an injection, i.e., 
\begin{align}
    f(\mathcal D_1)\cap f(\mathcal D_2)=\emptyset.\label{eq:pfthm:unif-lb}
\end{align}
However, one can expect that such $f$ cannot be injective as illustrated in \cref{fig:proof-lb-c}. Based on this intuition, we now formally show a contradiction.

{\bf Proof by contradiction.} Define $h_1:[-1,1]^{d_x}\to\mathbb R^{d_y}$, $h_2:[-1,1]^{r}\to\mathbb R^{d_y}$, and $\psi:[-1,1]^{d_y}\to[-1,1]^{d_y}$ as follows: for $\alpha\in[-1,1]^{d_x}$, $\beta\in[-1,1]^{r}$, and $\gamma=(\alpha,\beta)\in[-1,1]^{d_y}$,
\begin{align*}
h_1(\alpha)=f\big((\alpha+1)/6\big),~~~
h_2(\beta)=f\big((\beta+5)/6,1,\dots,1\big),~~~
\psi(\gamma)=\frac{h_1(\alpha)-h_2(\beta)}{\|h_1(\alpha)-h_2(\beta)\|_\infty}.
\end{align*}
From the definitions of $h_1,h_2$ and \cref{eq:pfthm:unif-lb}, one can observe that
\begin{align*}
h_1([-1,1]^{d_x})\cap h_2([-1,1]^{r})=f(\mathcal D_1)\cap f(\mathcal D_2)=\emptyset,
\end{align*}
i.e., $\psi$ is well-defined (and continuous) as $\|h_1(\alpha)-h_2(\beta)\|_\infty>0$ for all $(\alpha,\beta)\in[-1,1]^{d_y}$.
From the definitions of $\psi$ and the infinity norm, it holds that 
\begin{itemize}[leftmargin=17pt]
\item[\it(i)] $\psi([-1,1]^{d_y})\subset[-1,1]^{d_y}$ and
\item[\it(ii)] for each $\gamma \in[-1,1]^{d_y}$, there exists $i\in[d_y]$ such that $\psi(\gamma)_i\in\{-1,1\}$. %
\end{itemize}
By {\it (i)}, continuity of $\psi$, and the Brouwer's fixed point theorem (\cref{lem:brouwer}), there exists $\gamma^*=(\alpha^*,\beta^*)\in[-1,1]^{d_y}$ such that $\psi(\gamma^*)=\gamma^*$.
Furthermore, by {\it (ii)}, there should be $i^*\in[d_y]$ such that $\gamma_{i^*}^*\in\{-1,1\}$.
However, we now show that such $i^*$ does not exist.

Suppose that there is $i^*\in[d_x]$ satisfying $\gamma_{i^*}^*=\psi(\gamma^*)_{i^*}=1$.
Then, by the definitions of $f,h_1,h_2$ and the assumption $\|f^*-f\|_\infty\le1/3$, the following holds for all $z=(x,y)\in[-1,1]^{d_y}$ with $z_{i^*}=1$: $h_1(x)_{i^*}\in[-4/3,-2/3]$ %
and $h_2(y)_{i^*}\in[-1/3,1/3]$. 
This implies that ${h_1(x)_{i^*}-h_2(y)_{i^*}}$ is negative for all $(x,y)\in[-1,1]^{d_y}$, and therefore, $\psi(\gamma^*)_{i^*}<0$ which contradicts $\psi(\gamma^*)_{i^*}=1$.
Using similar arguments, one can also show the contradiction for the two remaining cases: $i^*\in[d_x]$ and $\gamma_{i^*}^*=-1$; and $i^*\in[d_y]\setminus[d_x]$ and $\gamma_{i^*}^*\in\{-1,1\}$. In other words, $\gamma^*_i$ cannot be any of $\{-1,1\}$ for all $i\in[d_y]$.
This contradicts {\it (ii)} and proves \cref{thm:unif-lb-leakyrelu}.

\begin{figure}
     \centering
     \begin{subfigure}[b]{0.25\textwidth}
         \centering
         \includegraphics[width=\linewidth]{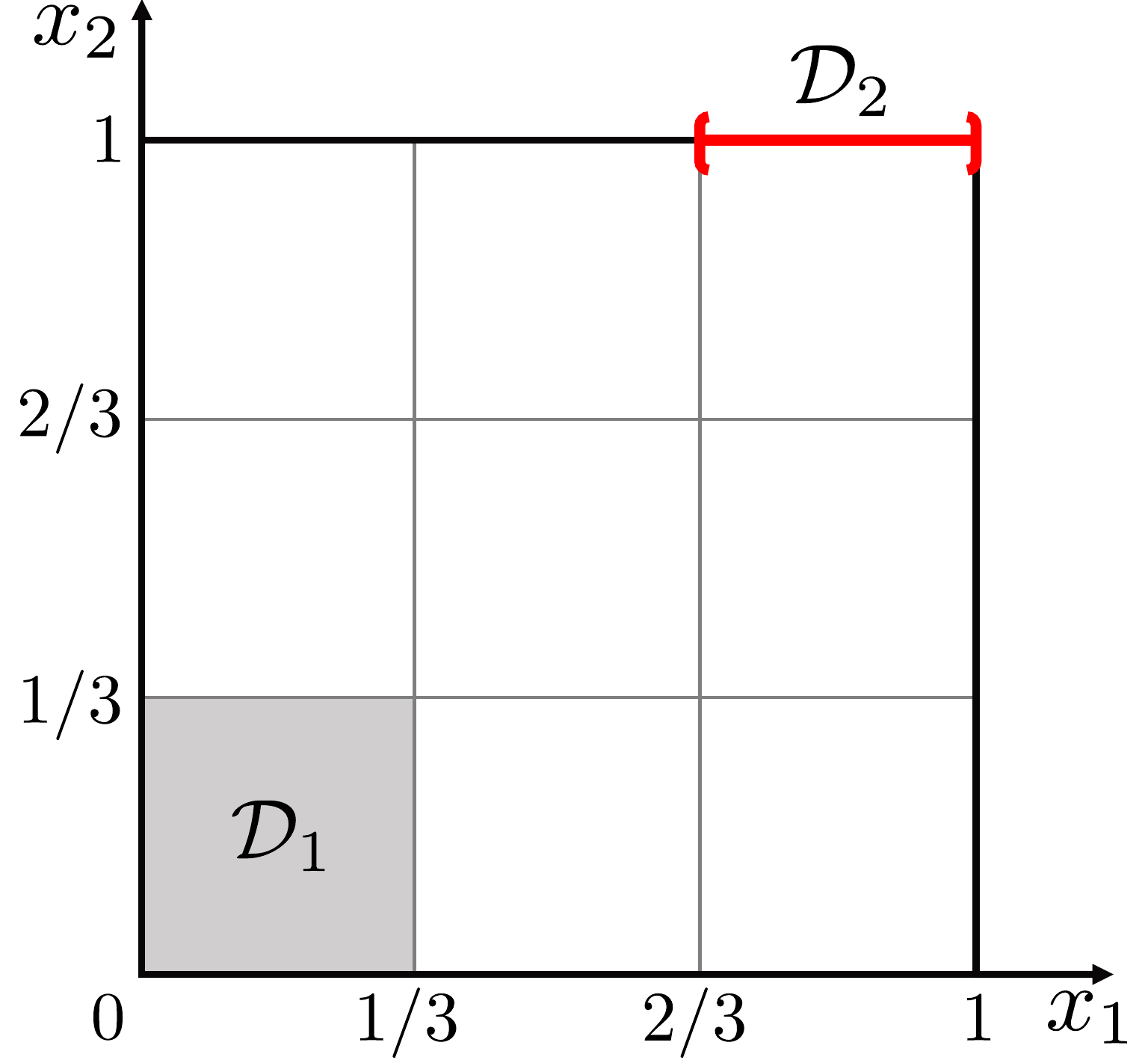}
         \caption{}
         \label{fig:proof-lb-a}
     \end{subfigure}
     \qquad
     \begin{subfigure}[b]{0.25\textwidth}
         \centering
         \includegraphics[width=\linewidth]{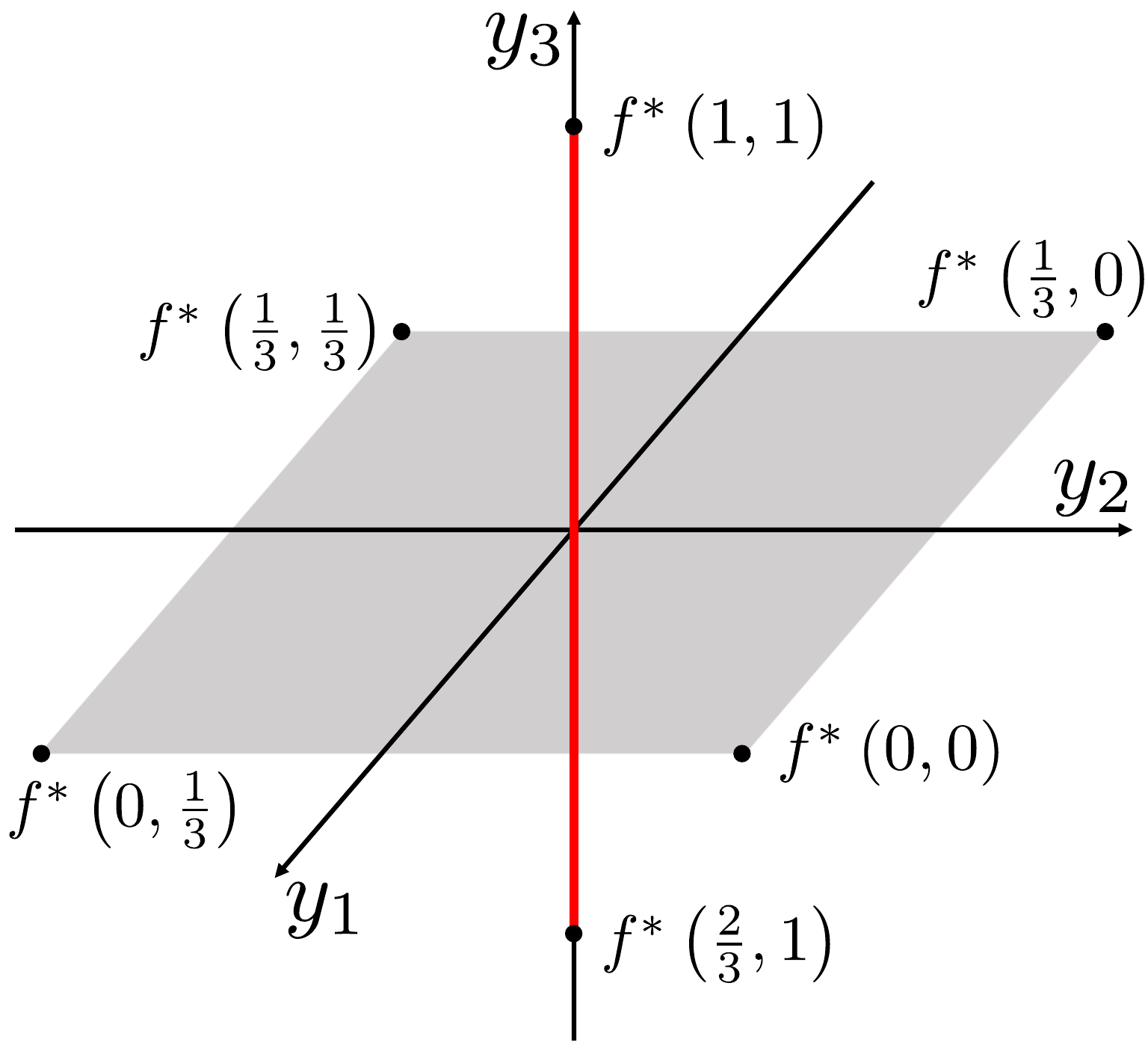}
         \caption{}
         \label{fig:proof-lb-b}
     \end{subfigure}
     \qquad
     \begin{subfigure}[b]{0.25\textwidth}
         \centering
         \includegraphics[width=\linewidth]{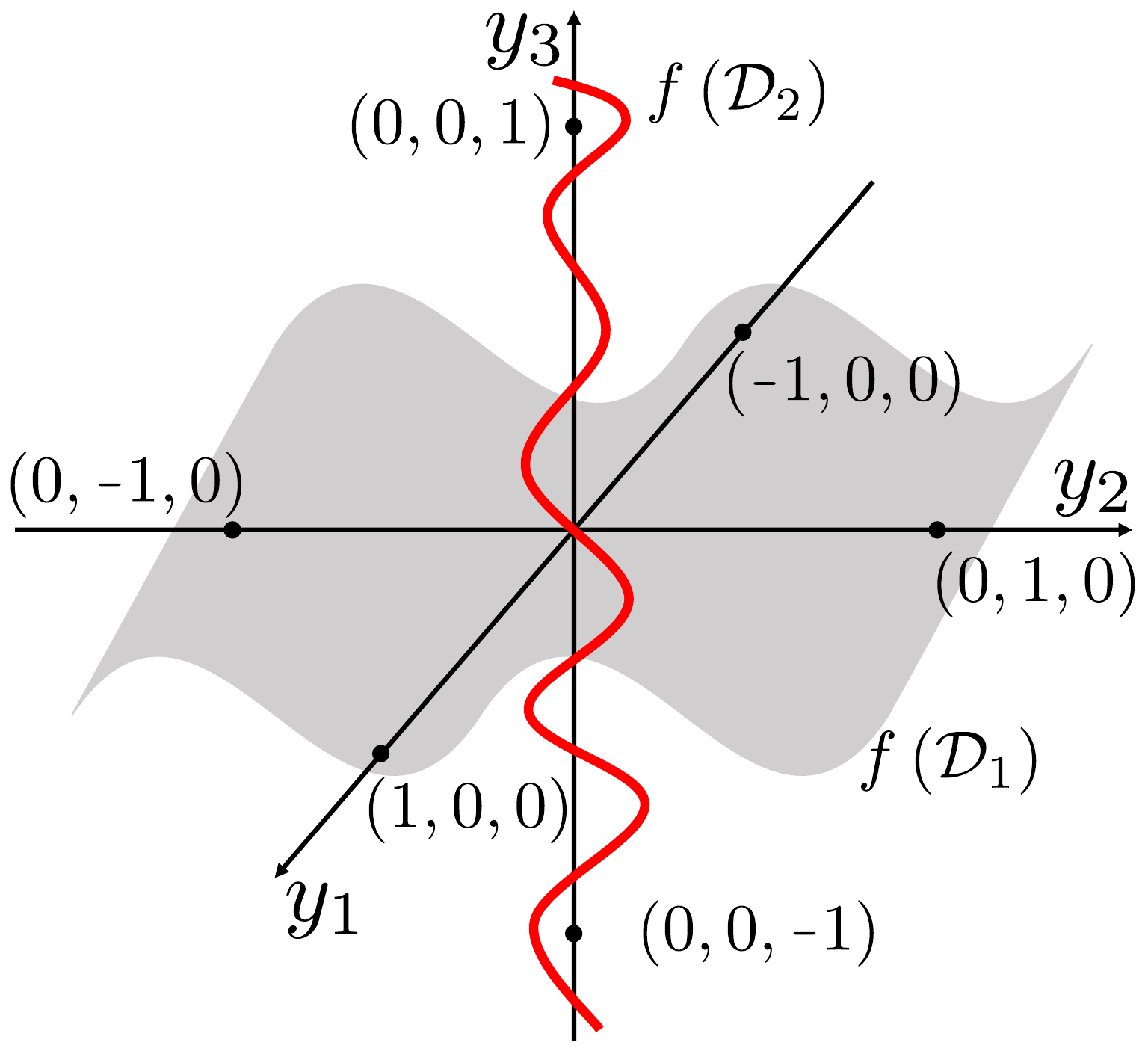}
         \caption{}
         \label{fig:proof-lb-c}
     \end{subfigure}
        \caption{%
        $\mathcal D_1,\mathcal D_2$ and their corresponding images of $f^*$ when $d_x=2$ and $d_y=3$ are illustrated by the grey squares and red lines in (a) and (b). One of the possible images of $f(\mathcal D_1)$ and $f(\mathcal D_2)$ are represented by the grey surface and red curve in (c).
        }
        \label{fig:proof-lb}
        \vspace{-0.1in}
\end{figure}

\begin{lemma}[Brouwer's fixed-point theorem~\citep{brouwer}]\label{lem:brouwer}
For any non-empty compact convex set $\mathcal K\subset\mathbb R^n$ and continuous function $f : \mathcal K \to \mathcal K$, there exists $x^* \in \mathcal K$ such that $f(x^*) = x^*$.
\end{lemma}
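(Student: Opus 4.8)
My plan is to reduce the statement to the special case where $\mathcal K$ is the closed Euclidean unit ball $\bar B^n$ and then obtain that case from the classical no-retraction theorem; convexity of $\mathcal K$ will enter only in the reduction, through the metric projection. Since $\mathcal K$ is compact it is bounded, so after a translation and a scaling I may assume $\mathcal K\subseteq\bar B^n$. Let $\pi:\mathbb R^n\to\mathcal K$ be the metric projection $\pi(y)=\argmin_{z\in\mathcal K}\|y-z\|_2$, which is well defined because $\mathcal K$ is nonempty, closed, and convex (existence by compactness, uniqueness since $\|\cdot\|_2^2$ is strictly convex and $\mathcal K$ is convex) and is $1$-Lipschitz, hence continuous, again by convexity. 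Given a continuous $f:\mathcal K\to\mathcal K$, I form $\tilde f:=f\circ\bigl(\pi|_{\bar B^n}\bigr):\bar B^n\to\mathcal K\subseteq\bar B^n$, which is continuous; any fixed point $x^*$ of $\tilde f$ lies in $\mathcal K$ (as $\tilde f$ maps into $\mathcal K$), where $\pi$ restricts to the identity, so $f(x^*)=\tilde f(x^*)=x^*$. Thus it suffices to treat continuous self-maps of $\bar B^n$.

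\textbf{The ball case via no retraction.} Suppose, toward a contradiction, that a continuous $f:\bar B^n\to\bar B^n$ has $f(x)\ne x$ for every $x$; by compactness $\delta:=\min_{x\in\bar B^n}\|x-f(x)\|_2>0$. For each $x$ I set $r(x):=x+t(x)\,(x-f(x))$, where $t(x)\ge0$ is the unique nonnegative root of $\|x+t\,(x-f(x))\|_2^2=1$ (the left side is an upward parabola in $t$ with value $\|x\|_2^2\le1$ at $t=0$, so the root exists and is unique, and it depends continuously on $x$ via the quadratic formula, using $\|x-f(x)\|_2\ge\delta$). Then $\|r(x)\|_2=1$ for all $x$, and $r(x)=x$ whenever $\|x\|_2=1$ (there the parabola already has positive slope at $t=0$ by Cauchy--Schwarz and $f(x)\ne x$, forcing $t(x)=0$). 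Hence $r$ is a continuous retraction of $\bar B^n$ onto $S^{n-1}$, which cannot exist: for $n=1$ it violates connectedness of $[-1,1]$; for $n\ge2$, applying reduced singular homology to the inclusion $\iota:S^{n-1}\to\bar B^n$ and to $r$ yields $r_*\circ\iota_*=\mathrm{id}$ on $\tilde H_{n-1}(S^{n-1})\cong\mathbb Z$, which is absurd since it factors through $\tilde H_{n-1}(\bar B^n)=0$. (Brouwer degree, or the fundamental group when $n=2$, gives the same contradiction.)

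\textbf{Main obstacle and alternative.} The only non-elementary ingredient is the no-retraction theorem, which needs a topological invariant separating $S^{n-1}$ from the contractible $\bar B^n$. To stay self-contained I would instead take the combinatorial route: triangulate the standard simplex $\Delta^n$ with mesh sizes tending to $0$, label each vertex by an index of a barycentric coordinate that $f$ does not increase there, invoke Sperner's lemma to extract a fully labelled sub-simplex at each scale, and pass to a limit point of their barycenters; continuity of $f$ and compactness of $\Delta^n$ make this limit a fixed point on $\Delta^n$, which transfers to $\bar B^n$ through a homeomorphism $\Delta^n\cong\bar B^n$. The remaining pieces---the projection-based reduction and the continuity of the ray map $r$---are routine. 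As the result is classical, in the paper it suffices to cite \citep{brouwer}; the above is the standard argument behind that citation.
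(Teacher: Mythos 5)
The paper does not prove this lemma at all: it is stated as a classical result and discharged by the citation to \citep{brouwer}, so there is no in-paper argument to compare against. Your proof is the standard and correct one — reduce to the closed ball via the (nonexpansive, hence continuous) metric projection onto the compact convex set, then derive the ball case from the no-retraction theorem by sending each non-fixed point along the ray from $f(x)$ through $x$ to the sphere. The one delicate point, uniqueness and continuity of the nonnegative root $t(x)$ including at boundary points where $\|x\|_2=1$, you handle correctly via the sign of the linear coefficient $2\langle x,\,x-f(x)\rangle>0$ forced by Cauchy--Schwarz and $f(x)\ne x$. The Sperner-lemma alternative you sketch is likewise a standard self-contained route; either suffices, and for the purposes of this paper the citation alone is appropriate.
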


\section{Conclusion}\label{sec:conclusion}

Identifying the universal approximation property of deep neural networks is a fundamental problem in the theory of deep learning. Several works have tried to characterize the minimum width enabling universal approximation; however, only a few of them succeed in finding the exact minimum width.
In this work, we first prove that the minimum width of networks using $\relu$ or $\relul$ activation functions is $\max\{d_x,d_y,2\}$ for universal approximation in $L^p([0,1]^{d_x},\mathbb R^{d_y})$. 
Compared to the existing result that width $\max\{d_x+1,d_y\}$ is necessary and sufficient for $\relu$ networks to be dense in $L^p(\mathbb R^{d_x},\mathbb R^{d_y})$, our result shows a dichotomy between universal approximation on a compact domain and the whole Euclidean space.
Furthermore, using a topological argument, we improve the lower bound on the minimum width for uniform approximation when the activation function can be uniformly approximated by a sequence of continuous one-to-one functions: the minimum width is at least $d_y+1$ if $d_x<d_y\le2d_x$, which is shown to be tight for Leaky-$\relu$ networks if $2d_x=d_y$.
This generalizes prior results showing a gap between $L^p$ and uniform approximations to general activation functions and input/output dimensions. 
We believe that our results and proof techniques can help better understand the expressive power of deep neural networks. 

\subsubsection*{Acknowledgements}
NK and SP were supported by Institute of Information \& communications Technology Planning \& Evaluation (IITP) grant funded by the Korea government (MSIT) (No. 2019-0-00079, Artificial Intelligence Graduate
School Program, Korea University) and Basic Science Research Program through the National Research Foundation of Korea (NRF) funded by the Ministry of Education (2022R1F1A1076180).

\bibliography{reference-a2}

\begin{thebibliography}{30}
\providecommand{\natexlab}[1]{#1}
\providecommand{\url}[1]{\texttt{#1}}
\expandafter\ifx\csname urlstyle\endcsname\relax
  \providecommand{\doi}[1]{doi: #1}\else
  \providecommand{\doi}{doi: \begingroup \urlstyle{rm}\Url}\fi

\bibitem[Baum(1988)]{baum88}
Eric~B. Baum.
\newblock On the capabilities of multilayer perceptrons.
\newblock \emph{Journal of Complexity}, 1988.

\bibitem[Boyd and Vandenberghe(2004)]{boyd04}
Stephen~P. Boyd and Lieven Vandenberghe.
\newblock \emph{Convex optimization}.
\newblock Cambridge university press, 2004.

\bibitem[Cai(2023)]{cai23}
Yongqiang Cai.
\newblock Achieve the minimum width of neural networks for universal
  approximation.
\newblock In \emph{International Conference on Learning Representations
  (ICLR)}, 2023.

\bibitem[Cybenko(1989)]{cybenko89}
G.~Cybenko.
\newblock {Approximation by superpositions of a sigmoidal function}.
\newblock \emph{Mathematics of Control, Signals, and Systems (MCSS)},
  2\penalty0 (4):\penalty0 303--314, 1989.

\bibitem[Daniely(2017)]{daniely17}
Amit Daniely.
\newblock Depth separation for neural networks.
\newblock In \emph{Conference on Learning Theory (COLT)}, 2017.

\bibitem[Duan et~al.(2022)Duan, Li, Ji, and Cai]{duan22}
Yifei Duan, Li'ang Li, Guanghua Ji, and Yongqiang Cai.
\newblock Vanilla feedforward neural networks as a discretization of dynamic
  systems.
\newblock \emph{arXiv preprint arXiv:2209.10909}, 2022.

\bibitem[Eldan and Shamir(2016)]{eldan16}
Ronen Eldan and Ohad Shamir.
\newblock The power of depth for feedforward neural networks.
\newblock In \emph{Conference on Learning Theory (COLT)}, 2016.

\bibitem[Florenzano(2003)]{brouwer}
M.~Florenzano.
\newblock \emph{General Equilibrium Analysis: Existence and Optimality
  Properties of Equilibria}.
\newblock Springer US, 2003.

\bibitem[Hanin and Sellke(2017)]{hanin17}
Boris Hanin and Mark Sellke.
\newblock Approximating continuous functions by {ReLU} nets of minimal width.
\newblock \emph{arXiv preprint arXiv:1710.11278}, 2017.

\bibitem[Hornik et~al.(1989)Hornik, Stinchcombe, and White]{hornik89}
K.~Hornik, M.~Stinchcombe, and H.~White.
\newblock Multilayer feedforward networks are universal approximators.
\newblock \emph{Neural Networks}, 2\penalty0 (5):\penalty0 359--366, 1989.

\bibitem[Huang and Babri(1998)]{huang98}
Guang-Bin Huang and Haroon~A Babri.
\newblock Upper bounds on the number of hidden neurons in feedforward networks
  with arbitrary bounded nonlinear activation functions.
\newblock \emph{IEEE Transactions on Neural Networks}, 1998.

\bibitem[Hwang(2023)]{hwang23}
Geonho Hwang.
\newblock Minimum width for deep, narrow mlp: A diffeomorphism and the whitney
  embedding theorem approach.
\newblock \emph{arXiv preprint arXiv:2308.15873}, 2023.

\bibitem[Johnson(2019)]{johnson18}
Jesse Johnson.
\newblock Deep, skinny neural networks are not universal approximators.
\newblock In \emph{International Conference on Learning Representations
  (ICLR)}, 2019.

\bibitem[Kidger and Lyons(2020)]{kidger20}
Patrick Kidger and Terry Lyons.
\newblock {Universal approximation with deep narrow networks}.
\newblock In \emph{Conference on Learning Theory (COLT)}, 2020.

\bibitem[Leshno et~al.(1993)Leshno, Lin, Pinkus, and
  Schocken]{journals/nn/Leshno93}
Moshe Leshno, Vladimir~Ya. Lin, Allan Pinkus, and Shimon Schocken.
\newblock Multilayer feedforward networks with a nonpolynomial activation
  function can approximate any function.
\newblock \emph{Neural Networks}, 6\penalty0 (6):\penalty0 861--867, 1993.

\bibitem[Li et~al.(2022)Li, Lin, and Shen]{li22}
Qianxiao Li, Ting Lin, and Zuowei Shen.
\newblock Deep learning via dynamical systems: An approximation perspective.
\newblock \emph{Journal of the European Mathematical Society}, 2022.

\bibitem[Lu(2021)]{lu21}
Zhou Lu.
\newblock A note on the representation power of ghhs.
\newblock \emph{arXiv preprint arXiv:2101.11286}, 2021.

\bibitem[Lu et~al.(2017)Lu, Pu, Wang, Hu, and Wang]{Lu17}
Zhou Lu, Hongming Pu, Feicheng Wang, Zhiqiang Hu, and Liwei Wang.
\newblock The expressive power of neural networks: A view from the width.
\newblock In \emph{Annual Conference on Neural Information Processing Systems
  (NeurIPS)}, 2017.

\bibitem[Munkres(2000)]{munkres}
James~R. Munkres.
\newblock \emph{Topology}.
\newblock Prentice Hall, Inc., 2000.

\bibitem[Park et~al.(2021{\natexlab{a}})Park, Lee, Yun, and Shin]{park21b}
Sejun Park, Jaeho Lee, Chulhee Yun, and Jinwoo Shin.
\newblock Provable memorization via deep neural networks using sub-linear
  parameters.
\newblock In \emph{Conference on Learning Theory (COLT)}, 2021{\natexlab{a}}.

\bibitem[Park et~al.(2021{\natexlab{b}})Park, Yun, Lee, and Shin]{park21}
Sejun Park, Chulhee Yun, Jaeho Lee, and Jinwoo Shin.
\newblock {Minimum width for universal approximation}.
\newblock In \emph{International Conference on Learning Representations
  (ICLR)}, 2021{\natexlab{b}}.

\bibitem[Pinkus(1999)]{pinkus99}
Allan Pinkus.
\newblock Approximation theory of the mlp model in neural networks.
\newblock \emph{Acta Numerica}, 8:\penalty0 143 -- 195, 1999.

\bibitem[Rudin(1987)]{rudin}
Walter Rudin.
\newblock \emph{Real and Complex Analysis}.
\newblock McGraw-Hill, Inc., 1987.

\bibitem[Song et~al.(2023)Song, Hwang, Lee, and Kang]{song23}
Changhoon Song, Geonho Hwang, Junho Lee, and Myungjoo Kang.
\newblock Minimal width for universal property of deep rnn.
\newblock \emph{Journal of Machine Learning Research}, 2023.

\bibitem[Telgarsky(2016)]{telgarsky16}
Matus Telgarsky.
\newblock Benefits of depth in neural networks.
\newblock In \emph{Conference on Learning Theory (COLT)}, 2016.

\bibitem[Vardi et~al.(2022)Vardi, Yehudai, and Shamir]{vardi22}
Gal Vardi, Gilad Yehudai, and Ohad Shamir.
\newblock On the optimal memorization power of {ReLU} neural networks.
\newblock In \emph{Conference on Learning Theory (COLT)}, 2022.

\bibitem[Vershynin(2020)]{vershynin20}
Roman Vershynin.
\newblock Memory capacity of neural networks with threshold and rectified
  linear unit activations.
\newblock \emph{SIAM Journal on Mathematics of Data Science}, 2020.

\bibitem[Wang and Qu(2022)]{wang22}
Ming-Xi Wang and Yang Qu.
\newblock Approximation capabilities of neural networks on unbounded domains.
\newblock \emph{Neural Networks}, 2022.

\bibitem[Yarotsky(2018)]{yarotsky18}
Dmitry Yarotsky.
\newblock Optimal approximation of continuous functions by very deep {ReLU}
  networks.
\newblock In \emph{Conference on Learning Theory (COLT)}, 2018.

\bibitem[Yun et~al.(2019)Yun, Sra, and Jadbabaie]{yun19}
Chulhee Yun, Suvrit Sra, and Ali Jadbabaie.
\newblock Small {ReLU} networks are powerful memorizers: a tight analysis of
  memorization capacity.
\newblock In \emph{Annual Conference on Neural Information Processing Systems
  (NeurIPS)}, 2019.

\end{thebibliography}
\bibliographystyle{plainnat}

\newpage
\appendix

\section{Definition of activation functions}\label{sec:activation}

In this section, we introduce the definitions of activation functions that we mainly focus on.

\begin{itemize}[leftmargin=15pt]

\item ReLU ($\relu$):
\begin{align*}
    \relu(x) = 
        \begin{cases}
        x~&\text{if}~x > 0\\
        0~&\text{if}~x \le 0
        \end{cases}.
\end{align*}

\item Softplus ($\softplus$): for $\alpha>0$,
\begin{align*}
    \softplus(x;\alpha) = \frac{1}{\alpha}\log(1+\exp(\alpha x)).
\end{align*}

\item LeakyReLU (Leaky-$\relu$): for $\alpha\in(0,1)$,
\begin{align*}
    \text{Leaky-}\relu(x;\alpha) = 
    \begin{cases}
    x~&\text{if}~x > 0\\
    \alpha x~&\text{if}~x \le 0
    \end{cases}.
\end{align*}

\item Exponential Linear Unit ($\elu$): for $\alpha>0$,
\begin{align*}
    \elu(x;\alpha) = 
    \begin{cases}
    x~&\text{if}~x > 0\\
    \alpha \left( \exp(x)-1 \right)~&\text{if}~x \le 0
    \end{cases}.
\end{align*}

\item Continuously differentiable Exponential Linear Unit ($\celu$): for $\alpha>0$,
\begin{align*}
    \celu(x;\alpha) = 
    \begin{cases}
    x~&\text{if}~x > 0\\
    \alpha \left( \exp(x/\alpha)-1 \right)~&\text{if}~x \le 0
    \end{cases}.
\end{align*}

\item Scaled Exponential Linear Unit ($\selu$): for $\lambda>1$ and $\alpha>0$,
\begin{align*}
    \selu(x;\lambda,\alpha) = \lambda \times
    \begin{cases}
    x~&\text{if}~x > 0\\
    \alpha \left( \exp(x)-1 \right)~&\text{if}~x \le 0
    \end{cases}.
\end{align*}

\item Gaussian Error Linear Unit ($\gelu$):
\begin{align*}
    \gelu(x) = x \times \Phi (x)
\end{align*}
where $\Phi(x)$ is the cumulative distribution function of the standard normal distribution.

\item Sigmoid Linear Unit ($\silu$):
\begin{align*}
    \silu(x) = x \times \sigmoid(x)
\end{align*}
where $\sigmoid(x) = 1/\left(1+\exp(-x)\right)$ is the sigmoid activation function.

\item Mish ($\mish$):
\begin{align*}
    \mish(x) = x \times \tanhh ( \softplus(x;1) )
\end{align*}
where $\tanhh(x) = (\exp(x)-\exp(-x)) / (\exp(x)+\exp(-x))$. %

\end{itemize} 

\newpage
\section{Proof of upper bound in \cref{thm:lp-ub}}\label{sec:pfthm:lp-ub}

\subsection{Additional notations}\label{sec:add-notation}
Prior to delving into the proof of \cref{thm:lp-ub}, we first introduce some notations that will be frequently employed in the subsequent sections.
Given a set $\mathcal S\subset\mathbb R^n$, $\interior(\mathcal S)$ denotes the interior of $\mathcal S$, and $\bd(\mathcal S)$ denotes the boundary of $\mathcal S$.
For $(a,b) \in \mathbb R^n \times \mathbb R$, $\mathcal H(a,b) \defeq \{ x \in \mathbb R^n : a^\top x + b = 0\}$ denotes the hyperplane parameterized by $a$ and $b$.
Likewise, we use $\mathcal H^+(a,b) \defeq \{ x \in \mathbb R^n : a^\top x + b \ge 0\}$ and $\mathcal H^-(a,b) \defeq \{ x \in \mathbb R^n : a^\top x + b \le 0\}$ for denoting corresponding upper half-space and lower half-space, respectively.

\subsection{Our choices of $\alpha,\beta,\gamma$}\label{sec:choice-abc}
We use a small enough $\alpha>0$ so that $\omega_{p,2,f^*}(\alpha)\le\varepsilon/2^{1+1/p}$, $\beta=\varepsilon^p/(2d_y)$, and $\gamma=\varepsilon/2^{1+1/p}$.
Here, $\omega_{p,2,f^*}$ denotes the modulus of continuity of $f^*$ in the $p$-norm and $2$-norm: $\|f^*(x)-f^*(x')\|_p\le\omega_{p,2,f^*}(\|x-x'\|_2)$ for all $x,x'\in[0,1]^{d_x}$. We note that such $\omega_{p,2,f^*}$ is well-defined on $[0,1]^{d_x}$ since $f^*$ is uniformly continuous on $[0,1]^{d_x}$ (continuous function on a compact set).
Namely, such $\alpha$ always exists for all $\varepsilon>0$.
Then, we have
\begin{align*}
\|f^*-f\|_p^p&=\int_{[0,1]^{d_x}}\|f^*(x)-f(x)\|_p^pd\mu_{d_x}\\
&=\int_{[0,1]^{d_x}\setminus\bigcup_{i=1}^k\mathcal T_i}\|f^*(x)-f(x)\|_p^pd\mu_{d_x}+\int_{\bigcup_{i=1}^k\mathcal T_i}\|f^*(x)-f(x)\|_p^pd\mu_{d_x}\\
&\le d_y\times\mu_{d_x}\left([0,1]^{d_x}\setminus\bigcup_{i=1}^k\mathcal T_i\right)+\sum_{i=1}^k\int_{\mathcal T_i}(\|f(x)-f^*(x)\|_p)^pd\mu_{d_x}\\
&\le d_y\times\beta+\sum_{i=1}^k\int_{\mathcal T_i}(\|f^*(x)- f^*(z_i)\|_p+\|f(x)- f^*(z_i)\|_p)^pd\mu_{d_x}\\
&\le d_y\times\beta+\sum_{i=1}^k\int_{\mathcal T_i}(\omega_{p,2,f^*}(\alpha)+\gamma)^pd\mu_{d_x}\\
&\le d_y\times\beta+(\omega_{p,2,f^*}(\alpha)+\gamma)^p\le\varepsilon^p
\end{align*}
where $z_i\in\mathcal T_i$ for all $i$.

This leads us to the statement of \cref{lem:ub-lp}.

\subsection{Proof of \cref{lem:encoder}}\label{sec:pflem:encoder}

We prove  \cref{lem:encoder} in this section.
To this end, we introduce the following lemma.
The proof of \cref{lem:poly-cut} is presented in \cref{sec:pflem:poly-cut}. Here, $\mathcal H^+(a,b)$ is defined in \cref{sec:add-notation}

\begin{lemma}\label{lem:poly-cut}
Let $\mathcal K_0=[0,1]^{n}$.
For any $\delta>0$, there exist $m\in\mathbb N$ and $(a_1,b_1),\dots,(a_m,b_m)\in\mathbb R^n\times\mathbb R$ such that for $\mathcal K_i = \mathcal K_0\cap\ \mathcal H_i^+$ where $ \mathcal H_i^+ = \bigcap_{j=1}^i \mathcal H^+(a_j,b_j)$ for all $i\in[m]$, 
\begin{align*}
\diam(\mathcal K_0\setminus\mathcal K_1),\diam(\mathcal K_{1}\setminus\mathcal K_2),\dots,\diam(\mathcal K_{m-1}\setminus\mathcal K_m)\le\delta\quad\text{and}\quad\mathcal K_m=\emptyset.
\end{align*}
\end{lemma}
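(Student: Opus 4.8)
\textbf{Proof plan for \cref{lem:poly-cut}.}
The plan is to construct the half-spaces greedily, peeling off thin slabs from the cube one at a time until nothing remains. Concretely, I would first reduce to the case $n=1$ by handling the coordinates one at a time: cut $[0,1]^n$ into slabs of the form $\{x : x_n \le \delta'\}$, then $\{x : x_n \le 2\delta'\}$, and so on, for a suitably small $\delta' > 0$. Each ``onion layer'' $\mathcal K_{i-1}\setminus\mathcal K_i$ is then of the form $[0,1]^{n-1}\times((i-1)\delta', i\delta']$ (intersected with the previously accumulated constraints, which here are vacuous in the first pass), whose diameter is $\sqrt{n-1 + \delta'^2}$ — too large. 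So a single pass in one coordinate is not enough; I need to slab along every coordinate direction successively.

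The cleaner way to organize this: fix an integer $N$ with $\sqrt{n}/N \le \delta$, partition $[0,1]^n$ into the $N^n$ subcubes $\prod_{j=1}^n [(\ell_j-1)/N, \ell_j/N]$ indexed by $(\ell_1,\dots,\ell_n)\in[N]^n$, enumerate these subcubes in some fixed order as $Q_1, Q_2, \dots, Q_{N^n}$, and arrange the half-spaces so that $\mathcal K_{i-1}\setminus\mathcal K_i$ is (essentially) $Q_i$. Each $Q_i$ is the intersection of $[0,1]^n$ with $2n$ half-spaces (two per coordinate, of the form $x_j \ge (\ell_j-1)/N$ and $x_j \le \ell_j/N$), but the difference set $\mathcal K_{i-1}\setminus\mathcal K_i$ in the lemma is carved off by the complement of a \emph{single} half-space at each step. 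The trick is that removing a box $Q$ from a region $\mathcal K$ cannot in general be done by removing one half-space; instead I peel $Q$ off in $2n$ substeps, one per facet of $Q$. That is, to remove $Q_i = \bigcap_{j} \{(\ell_j-1)/N \le x_j \le \ell_j/N\}$ from the current region $\mathcal K$, I successively intersect with $\mathcal H^+$'s corresponding to $x_j < (\ell_j-1)/N$ and $x_j > \ell_j/N$ — wait, this removes too much. Let me instead order the subcubes lexicographically and peel off \emph{one slab at a time}: after processing all subcubes with first coordinate in $[0,1/N]$, I have removed the slab $[0,1/N]\times[0,1]^{n-1}$; within that slab I remove sub-slabs along the second coordinate; and so on recursively. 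Each elementary removal is ``intersect with one half-space $x_j \ge c$,'' so the set removed at that step, $\mathcal K_{i-1}\setminus\mathcal K_i$, is a box all of whose side lengths except possibly the ``active'' ones are already $\le 1/N$, hence has diameter $\le \sqrt n /N \le \delta$; and after finitely many steps the region is empty because every coordinate has been shrunk to width $0$.

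To make the recursion precise I would prove \cref{lem:poly-cut} by induction on $n$: the base case $n=1$ is the slab partition of $[0,1]$ into $N$ pieces; for the inductive step, first cut along $x_n$ into $N$ slabs $S_1,\dots,S_N$ of width $1/N$ (using $N$ half-spaces, $\mathcal K_{i} = \mathcal K_{i-1}\cap\{x_n \ge i/N\}$), each removed slab $S_i$ being a box of the form $[0,1]^{n-1}\times[(i-1)/N, i/N]$; but $\diam(S_i) = \sqrt{n-1+1/N^2}$ is still too big, so within each slab $S_i$ I must further apply the $(n-1)$-dimensional construction to the first $n-1$ coordinates, refining $S_i$ into boxes of diameter $\le \delta$ using additional half-spaces in the variables $x_1,\dots,x_{n-1}$. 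The bookkeeping is that the half-spaces accumulate monotonically (each $\mathcal H_i^+ \subseteq \mathcal H_{i-1}^+$), which matches the lemma's requirement $\mathcal H_i^+ = \bigcap_{j=1}^i \mathcal H^+(a_j,b_j)$, and the order of operations ensures every difference set is a box with all sides $\le 1/N$. The main obstacle — and the only genuinely nontrivial point — is exactly this ordering issue: a naive ``remove one half-space per subcube'' fails because the set-difference of a convex region and a single half-space need not be a small box, so one has to interleave the cuts across coordinates (process an entire $x_n$-slab's worth of lower-dimensional cuts before moving to the next slab), and verify that with this interleaving each incremental difference $\mathcal K_{i-1}\setminus\mathcal K_i$ indeed has diameter at most $\sqrt n/N$. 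Everything else is routine: counting the half-spaces, checking $\mathcal K_m = \emptyset$, and choosing $N \ge \sqrt n/\delta$.
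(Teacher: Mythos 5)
Your plan has a genuine gap, and it sits exactly at the point you flag as ``the only genuinely nontrivial point'': no ordering of axis-aligned cuts can work. The half-spaces in \cref{lem:poly-cut} accumulate globally, so the set removed at step $i$ is $\mathcal K_{i-1}\cap\{x:a_i^\top x+b_i<0\}$, i.e., everything in the \emph{current} region on the wrong side of a single hyperplane. With an axis-aligned cut in coordinate $j$, this removed set spans the full extent of $\mathcal K_{i-1}$ in every coordinate other than $j$. In particular, the very first cut removes a set of the form $[0,1]^{j-1}\times[0,c)\times[0,1]^{n-j}$ (or its mirror image), whose diameter is at least $\sqrt{n-1}$ no matter how small $c$ is; for $n\ge2$ and $\delta<\sqrt{n-1}$ this already violates the required bound, and no lexicographic reordering of the later cuts can repair the first one. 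The chicken-and-egg problem is unavoidable for axis-aligned hyperplanes: to make a removed slab thin in the non-active coordinates you must first have shrunk $\mathcal K_{i-1}$ in those coordinates, which itself requires removing large slabs. Your proposed fix (process an entire slab's worth of lower-dimensional cuts before moving to the next slab) does not escape this, because those lower-dimensional cuts also delete the corresponding portions of all the \emph{other}, not-yet-removed slabs --- there is simply no single half-space whose removal from $[0,1]^2$ leaves behind exactly $[0,1]^2\setminus[0,1/2]^2$.

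The repair requires oblique cuts that shave off small \emph{corners} rather than full-width slabs, and this is what the paper does (Claim~\ref{claim:poly-cut}): enclose the current region in a ball of diameter $D$ and use half-spaces whose boundaries are tangent to a concentric ball of radius $\sqrt{D^2-\delta^2}/2$. Each such half-space removes a spherical cap of diameter at most $\delta$; by compactness finitely many caps cover the outer annulus, and after removing them the region fits inside a ball of diameter at most $D-\delta^2/(4D)$. Iterating drives the diameter below $\delta$ in finitely many rounds, after which a single additional half-space empties the set. If you prefer to keep a cube-based picture, the analogous fix is to chip off small corner simplices with hyperplanes of the form $\sum_j \pm x_j\ge c$, but you would still need an argument that finitely many such corner cuts strictly shrink the region in each round; the ball formulation makes that bookkeeping clean. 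Your choice of $N\ge\sqrt n/\delta$ and the base case $n=1$ are fine, but the inductive step does not go through as stated.
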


\cref{lem:poly-cut} ensures the existence of the partition $\{ \mathcal S_1, \dots, \mathcal S_k \}$ of the domain $[0,1]^{d_x}$ such that the diameter of each $\mathcal S_i$ is upper bounded by given $\alpha >0$ where each $\mathcal S_i$ can be represented as
\begin{align*}
\mathcal S_i=[0,1]^{d_x}\cap\bigg(\bigcap_{j=1}^{i-1} \mathcal H^+(a_j,b_j)\bigg)\cap\big(\mathcal H^+(a_i,b_i)\big)^c
\end{align*} 
for some $(a_1,b_1),\dots,(a_i,b_i) \in \mathbb R^n \times \mathbb R$.

To show the existence of an approximation of a partition $\{ \mathcal T_1, \dots, \mathcal T_k \}$ and a $\relu$ network of width $\max\{d_x,2\}$ that maps $\mathcal T_1, \dots, \mathcal T_k$ to some distinct points, we introduce the following lemma.
The proof of \cref{lem:basic-encoder} is presented in \cref{sec:pflem:basic-encoder}.

\begin{lemma}\label{lem:basic-encoder}
Let $n\in\mathbb N$, $\mathcal P\subset\mathbb R^n$ be a convex polytope and $u_1,\dots,u_k\in\mathbb R^n\setminus\mathcal P$ be distinct points. Let $\mathcal H^+\subset\mathbb R^n$ be a closed half-space such that $\mathcal P\cap\mathcal H^+\ne\emptyset$ and $\mathcal P\setminus\mathcal H^+\ne\emptyset$.
{Let $m = \max\{n,2\}$.}
Then for any $\delta\in(0,1)$, there exists a $\relu$ network 
{$f:\mathbb R^n\to\mathbb R^m$}
of width 
{$m$}
satisfying the following properties:
\begin{itemize}[leftmargin=15pt]
    \item $f(x)=x$ for all $x\in\mathcal P\cap\mathcal H^+$,
    \item there exist distinct $v_1,\dots,v_{k}\in
    {\mathbb R^m}
    \setminus(\mathcal P\cap\mathcal H^+)$ such that $f(u_i)=v_i$ for all $i\in[k]$, and
    \item there exist $\mathcal S\subset\mathcal P\setminus\mathcal H^+$ and $v_{k+1}\in
    {\mathbb R^m}
    \setminus((\mathcal P\cap\mathcal H^+)\cup\{v_1,\dots,v_k\})$ such that $\mu_n(\mathcal S)\ge\delta\cdot\mu_n(\mathcal P\setminus\mathcal H^+)$ and $f(\mathcal S)=\{v_{k+1}\}$.
\end{itemize}
\end{lemma}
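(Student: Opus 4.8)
The plan is to prove \cref{lem:basic-encoder} in two stages: first use \cref{lem:tool1} to collapse a large-measure subset of $\mathcal P\setminus\mathcal H^+$ onto a hyperplane and then down to a single point, while keeping $\mathcal P\cap\mathcal H^+$ fixed; then, separately, handle the finitely many auxiliary points $u_1,\dots,u_k$ by moving them one at a time into the active half-space before each projection step so they are not accidentally destroyed. Throughout I will work in $\mathbb R^m$ with $m=\max\{n,2\}$ by embedding $\mathcal P$ and the $u_i$'s into $\mathbb R^m$ via the first $n$ coordinates (the extra coordinate, if $n=1$, is a spare ``scratch'' dimension that \cref{lem:tool1}-style networks need to realize the projection on width-$2$ networks).

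First I would set up the projection. Write $\mathcal H^+=\mathcal H^+(a,b)$. Since $\mathcal P\setminus\mathcal H^+\neq\emptyset$ and $\mathcal P\cap\mathcal H^+\neq\emptyset$, pick a direction $c\in\mathbb R^m$ with $a^\top c>0$ that is \emph{not} parallel to any facet of $\mathcal P$; then \cref{lem:tool1} applied on the compact set $\mathcal K=\mathcal P\cup\{u_1,\dots,u_k\}$ gives a two-layer $\relu$ network $f_1$ of width $m$ fixing $\mathcal P\cap\mathcal H^+$ and mapping $\mathcal P\setminus\mathcal H^+$ onto a subset of the hyperplane $\bd\mathcal H^+$. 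By choosing $c$ generically (transverse to all facets), the image $f_1(\mathcal P\setminus\mathcal H^+)$ is a polytope of dimension $\le n-1$ sitting inside $\bd\mathcal H^+$, and a standard measure/co-area argument shows that for any $\delta\in(0,1)$ we can further arrange (by a second, slightly tilted projection, or by shrinking to a subset) that the preimage of a fixed small ball has $\mu_n$-measure at least $\delta\cdot\mu_n(\mathcal P\setminus\mathcal H^+)$ — this is exactly the ``measure can be made arbitrarily close'' remark used in the proof sketch of \cref{lem:encoder}. Iterating \cref{lem:tool1} $n-1$ more times, each time projecting the current image along a direction inside the current affine hull, collapses a $\delta^{1/n}$-fraction-controlled subset $\mathcal S\subset\mathcal P\setminus\mathcal H^+$ all the way to a single point $v_{k+1}$, while every stage fixes $\mathcal P\cap\mathcal H^+$ (each half-space used contains $\mathcal P\cap\mathcal H^+$ by construction). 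Composing, we get a width-$m$ $\relu$ network $g$ with $g|_{\mathcal P\cap\mathcal H^+}=\mathrm{id}$ and $g(\mathcal S)=\{v_{k+1}\}$.

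Next I would deal with the points $u_1,\dots,u_k$. They lie outside $\mathcal P$, so in general some of them are not in $\mathcal H^+$ and would get moved by $f_1$, possibly onto $\mathcal P\cap\mathcal H^+$ or onto each other. The fix, exactly as in the encoder sketch, is preprocessing: before applying each projection network, if $u_i\notin$ (the relevant half-space), first apply another \cref{lem:tool1}-type map along a direction that pushes $u_i$ strictly into the interior of that half-space while fixing $\mathcal P\cap\mathcal H^+$ and not disturbing the already-placed points $u_1,\dots,u_{i-1}$ — possible because $\mathcal P\cup\{u_1,\dots,u_{i-1}\}$ is a finite union of a polytope and points, and a suitable half-space separates $u_i$'s target region from that set (choose $(a,b)$ and $c$ so the fixed set is inside $\mathcal H^+$ and $u_i$ is nudged inward). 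Doing this sequentially, and at each step re-choosing the projection direction $c$ to be generic with respect to the finite point set $\{u_1,\dots,u_k\}$ as well (so distinct $u_i$'s stay distinct under projection), yields final images $v_1,\dots,v_k$ that are distinct, lie outside $\mathcal P\cap\mathcal H^+$, and are distinct from $v_{k+1}$ (again a genericity/perturbation choice: the collapse point $v_{k+1}$ can be taken in the relative interior of $f_1(\mathcal P\setminus\mathcal H^+)$, away from the finitely many $v_i$). Concatenating all these width-$m$ networks gives the desired $f$.

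The main obstacle I anticipate is the bookkeeping around \emph{simultaneously} (i) keeping $\mathcal P\cap\mathcal H^+$ pointwise fixed, (ii) keeping the $v_i$'s mutually distinct and separated from $\mathcal P\cap\mathcal H^+$ and from $v_{k+1}$, and (iii) retaining a $\delta$-fraction of the measure of $\mathcal P\setminus\mathcal H^+$ in $\mathcal S$ — all while each application of \cref{lem:tool1} only gives one specific piecewise-affine map of width $m$. Showing (iii) rigorously requires a quantitative statement: projecting a polytope $\mathcal Q$ along direction $c$ onto a hyperplane and then pulling back a subset, the measure ratio is controlled by how close $c$ is to lying in the hyperplane, and one must verify this ratio $\to 1$ as $c$ tends to a tangential direction (and that finitely many iterated projections compound only a bounded number of such ratios, so $\delta$ is achievable after $n$ steps). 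I would isolate this as a short sublemma about measures of slanted cylinders over polytopes. The separation requirements (i)--(ii) are, by contrast, ``generic position'' arguments: the bad choices of the finitely many direction vectors $c$ and offset parameters form a measure-zero (indeed, finite union of proper affine subspaces) set, so valid choices exist; I would phrase this cleanly rather than chasing explicit formulas.
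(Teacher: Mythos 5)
Your overall architecture coincides with the paper's: you handle $u_1,\dots,u_k$ by moving them one at a time into the relevant half-space via separating hyperplanes and \cref{lem:tool1} (this is the paper's \cref{lem:move-pt}), and you collapse a large subset of $\mathcal P\setminus\mathcal H^+$ by first projecting onto $\bd(\mathcal H^+)$ and then iterating projections down to a single point while fixing $\mathcal P\cap\mathcal H^+$ (the paper's \cref{lem:poly-to-point}). The genericity arguments for keeping the images distinct also match the paper.

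The gap is in the step you yourself flag as the main obstacle, and the sublemma you propose to fill it is aimed at the wrong quantity. The difficulty is not the measure of slanted cylinders or of preimages of small balls; indeed, the preimage of a fixed small ball under a single projection is a thin cylinder of \emph{small} measure, so your claim that it has measure at least $\delta\cdot\mu_n(\mathcal P\setminus\mathcal H^+)$ cannot be right. The real difficulty is that $f_1(\mathcal P\setminus\mathcal H^+)$ contains the facet $\mathcal P\cap\bd(\mathcal H^+)$ of the set that must stay pointwise fixed, so no subsequent layer can collapse the \emph{entire} image; one must identify in advance which part of $\mathcal P\setminus\mathcal H^+$ is sent far enough away from $\mathcal P\cap\mathcal H^+$ (and from $v_1,\dots,v_k$) to be collapsible. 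The paper resolves this by taking $\mathcal S=\mathcal P\cap\{x:a^\top x+b\le-\gamma\}$, whose measure tends to $\mu_n(\mathcal P\setminus\mathcal H^+)$ as $\gamma\to0$ by continuity of measure --- this is where $\delta$ enters, and all of the measure loss happens at this single step rather than being spread multiplicatively over $n$ projections --- and then choosing the projection direction $c$ with components of order $1/\gamma$ so that $f_1(\mathcal S)$ lands entirely outside $n-1$ auxiliary coordinate half-spaces containing $(\mathcal P\cap\mathcal H^+)\cup\{v_1,\dots,v_k\}$; the remaining $n-1$ coordinate projections then collapse $f_1(\mathcal S)$ to one point with no further loss while fixing everything inside those half-spaces. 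Your ``slightly tilted''/``tangential direction'' idea is the right instinct, but without coupling the tilt to the margin $\gamma$ and without the auxiliary half-spaces certifying separation from the fixed set, neither the $\delta$-measure bound nor the identity on $\mathcal P\cap\mathcal H^+$ is actually established. A further slip: placing $v_{k+1}$ in the relative interior of $f_1(\mathcal P\setminus\mathcal H^+)\subset\bd(\mathcal H^+)$ risks $v_{k+1}\in\mathcal P\cap\mathcal H^+$; the paper instead places it outside all the auxiliary half-spaces.
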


\cref{lem:basic-encoder} indicates that for each $\mathcal S_i$, there is a $\relu$ network $g_i$ of width $\max\{d_x,2\}$ such that $g_i$
(i) preserves the points in $\mathcal S_{i+1} \cup \cdots \cup \mathcal S_k$,
(ii) transfers the $u_1, \dots, u_{i-1}$ (containing ``information'' that we want to preserve) to some distinct points $v_1, \dots, v_{i-1}$ not contained in $\mathcal S_{i+1} \cup \cdots \cup \mathcal S_k$, and
(iii) embeds the most part of $\mathcal S_i$ (i.e. $\mathcal T_{i}$) to the point $v_{i}$ distinct to existing points $v_1, \dots, v_{i-1}$ and also not contained in $\mathcal S_{i+1} \cup \cdots \cup \mathcal S_k$. Hence, by repeatedly applying \cref{lem:basic-encoder}, we can find $\{ \mathcal T_1,\dots,\mathcal T_k \}$ and a $\relu$ network $g$ of width $\max\{d_x,2\}$ such that $\mu_n(\mathcal T_i)\ge\mu_n(\mathcal S_i)-\beta/k$ for all $i\in[k]$ and the network maps each $\mathcal T_i$ to 
{$u_i\in\mathbb R^{m}$}
for some distinct $u_1,\dots,u_k$
{where $m=\max\{d_x,2\}$ }.

Lastly, we introduce the following lemma, which demonstrates the existence of a projection map that maps finite distinct points to some distinct scalar values.
The proof of \cref{lem:distinct-innerprod} is presented in \cref{sec:pflem:distinct-innerprod}.

\begin{lemma}\label{lem:distinct-innerprod}
For any {$n \ge 2$ and} distinct $v_1,\dots,v_k\in\mathbb R^n$, there exists $a\in\mathbb R^n$ such that
$a^\top v_1,\dots,a^\top v_k$ are also distinct.
\end{lemma}

By \cref{lem:distinct-innerprod}, there exists an affine map $h$ that maps $u_1,\dots,u_k$ to distinct scalar values.
Then, choosing $f=h\circ g$ completes the proof of \cref{lem:encoder}.

\subsection{Proof of \cref{lem:decoder0}}\label{sec:pflem:decoder0}

In this section, we prove \cref{lem:decoder0} by explicitly constructing the target $f$.
As aforementioned, the proof of \cref{lem:decoder0} is a corollary of Lemma~9 and Lemma~10 in \citep{park21}.

Before describing our proof details, we first introduce some functions introduced in \citep{park21}.
A quantization function $q_n:[0,1] \to \mathcal C_n$ for $n\in \mathbb N$ and $\mathcal C_n\defeq\{0,2^{-n}, 2\times2^{-n}, 3\times2^{-n},\dots,1-2^{-n}\}$ is defined as
\begin{align*}
    q_n(x) = \max \{c \in \mathcal C_n : c \le x \},
\end{align*}
an encoder $\mathrm{encode}_K : \mathbb R^{d_x} \to \mathcal C_{d_x K}$ for some $K \in \mathbb N$ is defined as
\begin{align*}
    \mathrm{encode}_{K}(x) = \sum\nolimits_{i=1}^{d_{x}} q_K(x_i) \times 2^{-(i-1)K},
\end{align*}
and a decoder $\mathrm{decode}_M : \mathcal C_{d_y M} \to \mathcal C_M^{d_y}$ is defined as 
\begin{align*}
    \mathrm{decode}_M(c) = \hat{x} \quad \text{where} \quad \{ \hat{x} \} = \mathrm{encode}_M^{-1}(c) \cap \mathcal C_M^{d_y}.
\end{align*}

Namely, $\mathrm{encode}_K$ quantizes every coordinate of input up to $K$-bits and then concatenates whole coordinates into a one-dimensional scalar value.
And, $\mathrm{decode}_M$ decodes one-dimensional codewords to $d_y$-dimensional codewords.

Now, we introduce the following lemmas presented in \citep{park21}.

\begin{lemma}[Lemma~9 in \citep{park21}]\label{lem:encoder2}
For any $m\in\mathbb N$, $0\le\alpha_1<\alpha_2<\cdots<\alpha_m\le1$, and $\beta_1,\dots,\beta_m\in\mathbb R$, there exists a $\relu$ network $f:[0,1]\to\mathbb R$ of width $2$ such that 
\begin{align*}
f(\alpha_i)=\beta_i\quad\text{for all}~i\in[m].
\end{align*}
\end{lemma}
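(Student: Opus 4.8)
The plan is to construct $f$ explicitly as a width-$2$ ReLU network by processing the interpolation points $\alpha_1<\cdots<\alpha_m$ from left to right, using one ``working'' neuron to accumulate the output value and one ``bookkeeping'' neuron to track the input coordinate. First I would recall the standard fact that, on the line, a width-$2$ ReLU network can realize any continuous piecewise-linear function $[0,1]\to\mathbb R$: one coordinate carries (an affine image of) the input $x$ unchanged via the identity $x=\relu(x)-\relu(-x)$ trick (or, since $x\in[0,1]$, simply $x=\relu(x)$ after an affine shift), while the other coordinate adds successive ``hinge'' corrections $\relu(x-\alpha_i)$ at the breakpoints. So it suffices to exhibit a continuous piecewise-linear $g:[0,1]\to\mathbb R$ with $g(\alpha_i)=\beta_i$ for all $i$; the canonical choice is the piecewise-linear interpolant through the points $(\alpha_1,\beta_1),\dots,(\alpha_m,\beta_m)$, extended as a constant on $[0,\alpha_1]$ and on $[\alpha_m,1]$.

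The key steps, in order: (1) Define $g$ to be the continuous piecewise-linear interpolant of the data, so $g$ has at most $m$ breakpoints, at $\alpha_1,\dots,\alpha_m$ (the endpoints $0,1$ need no breakpoint since $g$ is constant there). (2) Write $g(x)=\beta_1+\sum_{i=1}^{m-1} s_i\,\relu(x-\alpha_i)$ where the slopes are chosen by telescoping: $s_1=\frac{\beta_2-\beta_1}{\alpha_2-\alpha_1}$ and $s_i=\frac{\beta_{i+1}-\beta_i}{\alpha_{i+1}-\alpha_i}-\frac{\beta_i-\beta_{i-1}}{\alpha_i-\alpha_{i-1}}$ for $i\ge2$ (treating the degenerate case $\alpha_2=\alpha_1$ etc.\ is excluded by strict inequality). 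Verify by induction that this formula gives $g(\alpha_j)=\beta_j$. (3) Implement the partial sums of this expression layer by layer in a width-$2$ network: at layer $\ell$, neuron~1 holds $x$ (maintained by $\relu$ since the relevant quantity stays nonnegative after an appropriate constant offset) and neuron~2 holds the accumulated value $\beta_1+\sum_{i<\ell} s_i\relu(x-\alpha_i)$; the affine map into layer $\ell+1$ computes $x-\alpha_\ell$ for neuron~1's argument and feeds $\relu(x-\alpha_\ell)$ into neuron~2's running total. (4) The final affine layer $t_L$ reads off neuron~2. Care must be taken that all pre-activations fed to neuron~1 remain in a region where $\relu$ acts as the identity (shift everything by a large enough constant, then subtract it back at the end) — this is the only genuinely fiddly bookkeeping.

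The main obstacle is the width constraint: with only two neurons per layer there is no ``spare'' channel, so one must verify that the input-tracking channel can be reconstructed at every layer using $\relu$ alone (not a general linear unit). This is handled by the observation that on the bounded domain $[0,1]$ one can always add a large constant $C$ so that $x+C-\text{(shift)}>0$ throughout, making $\relu$ coincide with a shifted identity; the constant is then cancelled in the final affine map. Once this is in place, the verification that $f(\alpha_i)=\beta_i$ is the elementary telescoping computation from step~(2), and the construction uses $L=m+1$ layers of width $2$, as required.
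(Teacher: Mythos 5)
Your proposal is correct and follows essentially the same route as the paper: reduce to the continuous piecewise-linear interpolant of the data and realize it by a width-$2$ ReLU network in which one channel carries the hinge $\relu(x-\alpha_\ell)$ at the current breakpoint while the other accumulates the running value, shifted by a constant so that $\relu$ acts as the identity on it. The only cosmetic difference is that the paper's inductive construction makes explicit that the input-tracking neuron holds $\relu(x-\alpha_\ell)$ rather than $x$ itself (which is all that is needed, and resolves the bookkeeping issue you flag), but this is the same construction.
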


\begin{lemma}[Lemma~10 in \citep{park21}]\label{lem:decoder}
For any $d_y,M\in\mathbb{N}$, there exists a $\relu$ network $f:\mathbb R\to\mathbb R^{d_y}$ of width $d_y$ such that for any $c\in\mathcal C_{d_yM}$, 
\begin{align*}
    f(c)=(b_1,\dots,b_{d_y})
\end{align*}
where $b_1,\dots,b_{d_y}\in\mathcal C_M$ satisfying $c=\sum_{i=1}^{d_y}b_i\times2^{-(i-1)M}$.
Furthermore, it holds that $f(\mathbb R)\subset[0,1]^{d_y}$.
\end{lemma}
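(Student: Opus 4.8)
The plan is to realize $f$ by iteratively ``unpacking'' the $M$-bit blocks of $c$. Set $r_1=c$ and, for $i\in[d_y]$, $b_i=q_M(r_i)$ and $r_{i+1}=2^M\bigl(r_i-q_M(r_i)\bigr)$. The telescoping identity $r_i=b_i+2^{-M}r_{i+1}$, together with the fact that $c$ has only $d_yM$ dyadic digits, gives $r_i\in\mathcal C_{(d_y-i+1)M}\subseteq[0,1)$, $b_i\in\mathcal C_M$, $c=\sum_{i=1}^{d_y}b_i2^{-(i-1)M}$, and crucially $r_{d_y}=b_{d_y}$. So it suffices to implement, for each $i\in[d_y-1]$, a sub-network mapping a state $(b_1,\dots,b_{i-1},r_i,*,\dots,*)$ to $(b_1,\dots,b_{i-1},b_i,r_{i+1},*,\dots,*)$; the leading coordinates are then passed through unchanged (they are nonnegative, so a single $\relu$ copies them), and the last block requires no work.

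The core object is a bounded-width $\relu$ network computing $r\mapsto\bigl(q_M(r),\,2^M(r-q_M(r))\bigr)$ correctly on each dyadic grid $\mathcal C_{nM}$ that arises. Restricted to such a finite grid both maps are merely functions on a finite set, so by \cref{lem:encoder2}—which produces, for arbitrarily many sorted sample points and prescribed values, a width-two $\relu$ network matching them—each is computed by a width-two $\relu$ network; composing with the $\relu$-clamp $t\mapsto\min\{1,\max\{0,t\}\}$ forces all intermediate values into $[0,1]$ and yields $f(\mathbb R)\subseteq[0,1]^{d_y}$. (One may also think of the gadget as realizing continuous piecewise-linear surrogates of $q_M$ and $2^M(\cdot-q_M(\cdot))$ obtained by steepening the jumps; they agree with the true maps on every $\mathcal C_{nM}$ since consecutive dyadic points there are at least $2^{-d_yM}$ apart.)

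The delicate point, and what I expect to be the main obstacle, is fitting everything into width exactly $d_y$. When the $i$-th unpacking sub-network runs, coordinates $1,\dots,i-1$ carry $b_1,\dots,b_{i-1}$, and the coordinate holding $r_i$ together with the $d_y-i$ unused coordinates—$d_y-i+1$ coordinates in all—must host the core gadget; the tight case is $i=d_y-1$, where only two coordinates remain, so one must pipeline the bit/block extraction so that no layer ever uses more than $d_y$ neurons, reusing the residual's coordinate (and, at earlier stages, coordinates of already-extracted blocks) as scratch space, and using the relation $q_M(r)+2^{-M}\cdot 2^M(r-q_M(r))=r$ to recover one output coordinate of the gadget from the other. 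Granting this, the composition of the $d_y-1$ sub-networks, preceded by the affine embedding $c\mapsto(c,0,\dots,0)$ and followed by the (affine) identification $r_{d_y}=b_{d_y}$ and a reordering of coordinates, is a width-$d_y$ $\relu$ network with $f(c)=(b_1,\dots,b_{d_y})$ on $\mathcal C_{d_yM}$ and $f(\mathbb R)\subseteq[0,1]^{d_y}$; the case $d_y=1$ is simply $f=\mathrm{id}$.
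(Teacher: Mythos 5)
Your overall skeleton is the same as the paper's: peel off one $M$-bit block per stage via the recursion $b_i=q_M(r_i)$, $r_{i+1}=2^M(r_i-q_M(r_i))$, pass the already-extracted nonnegative coordinates through single $\relu$ units, and observe that the last stage needs no work, so the width budget is $(i-1)+(\text{gadget width})$. But the step you explicitly defer (``Granting this\dots'') is precisely the mathematical content of the lemma, and the tool you cite does not deliver it. \cref{lem:encoder2} only produces a width-$2$ network with a \emph{scalar} output that interpolates prescribed values; it consumes its input. Your gadget, however, must output \emph{two} quantities, $q_M(r)$ and $2^M(r-q_M(r))$, from the single input $r$ within total width $2$, because at the tight stage $i=d_y-1$ only two coordinates are free. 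Running two interpolants from \cref{lem:encoder2} in parallel, or keeping $r$ alive alongside one of them so as to use the identity $q_M(r)+2^{-M}\cdot 2^M(r-q_M(r))=r$, costs at least a third neuron at that stage, which pushes the width to $d_y+1$. So as written the construction does not close.

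What is missing is exactly the paper's \cref{lem:mini_decoder}: a width-$2$ $\relu$ network that, on $[0,1]$ minus small neighborhoods $\mathcal D_{M,\delta}$ of the jump points (with $\delta<2^{-d_yM}$ so that $\mathcal C_{d_yM}$ is untouched), simultaneously outputs $q_M(x)$ and $2^M(x-q_M(x))$, and whose range is contained in $[0,1-2^{-M}]\times[0,1]$. The paper obtains this by writing a continuous staircase surrogate of $q_M$ as a max-min string and invoking Proposition~2 of \citet{hanin17} (in the spirit of \cref{lemma:piecewise-relu}), which yields a width-$2$ network computing the \emph{pair} $(x,\text{surrogate}(x))$ — input preservation and interpolation inside the same two neurons — after which both gadget outputs are affine in the stored pair; the built-in range bound also makes your extra clamping unnecessary for $f(\mathbb R)\subset[0,1]^{d_y}$. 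Without an argument of this kind (or an equivalent width-$2$ pair-output construction), the claim that everything pipelines into width exactly $d_y$ remains an assertion rather than a proof.
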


Using \cref{lem:encoder2}, we first exactly construct a $\relu$ network $g$ of width $2$ which maps each codeword $c_i$ to the corresponding encoded target vector $\mathrm{encode}_{M}(v_i) \in \mathcal C_{d_yM}$ for some $M \in \mathbb N$; we will assign an explicit value to $M$ later.
Next, by \cref{lem:decoder}, we explicitly construct a $\relu$ network $h$ of width $d_y$ which maps each $\mathrm{encode}_{M}(v_i)$ to being the $d_y$-dimensional quantized target vector $v_i^\dagger$ where $v_i^\dagger = (v_{i,1}^\dagger, \dots, v_{i, d_y}^\dagger)$ and $v_{i,j}^\dagger=q_M(v_{i,j})$ for each $j \in [d_y]$.

Let $f$ be the composition of $\relu$ networks $g$ and $h$.
That is, $f$ is a $\relu$ network of width $\max \{ d_y, 2\}$.
From the construction of $g$ and $h$, the error between $f(c_i)  =v_i^\dagger$ and $v_i$ is only incurred from the quantization process.
Hence, choosing sufficiently large $M \in \mathbb N$ such that $d_y^{1/p}\times2^{-M} \le \gamma$ completes the proof of \cref{lem:decoder0}.

For the sake of completeness, we provide proofs of \cref{lem:encoder2} and \cref{lem:decoder}, which are from \citep{park21}.
\begin{proof}[Proof of \cref{lem:encoder2}]
Consider the following piecewise linear function $f^*:[0,1]\to\mathbb R$ with $m+1$ pieces which satisfies the statement of \cref{lem:encoder2}:
\begin{align*}
f(x)=\begin{cases}
\beta_1~&\text{if}~x \in [0, \alpha_1)\\
\beta_i + \dfrac{\beta_{i+1} - \beta_i}{\alpha_{i+1} - \alpha_i}(x - \alpha_i) ~&\text{if}~x \in [\alpha_{i}, \alpha_{i+1})~\text{for some}~ i \in[m-1]\\
\beta_m~&\text{if}~x \in [\alpha_m, 1]\\
\end{cases}.
\end{align*}

From \cref{lemma:piecewise-relu}, we can construct a $\relu$ network $f:[0,1]\to\mathbb R$ of width $2$ satisfying $f^*(x) = f(x)$ for all $x \in [0,1].$ This completes the proof of \cref{lem:encoder2}.
\end{proof}

\begin{lemma}[Lemma~14 in \citep{park21}]\label{lemma:piecewise-relu}
For any compact interval $\mathcal I \subset \mathbb R$, for any continuous piecewise linear function $f^* : \mathcal I \subset \mathbb R$ with $P$ linear pieces, there exists a $\relu$ network $f$ of width $2$ such that $f(x) = f^*(x)$ for all $x \in \mathcal I$.
\end{lemma}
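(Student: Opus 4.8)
The plan is to realize $f^*$ \emph{exactly} as a composition of affine maps and coordinatewise $\relu$, using only two neurons in every hidden layer. First I would put the problem in a convenient normal form: after an affine change of the input variable I may assume $\mathcal I=[a,b]$ with $a>0$, and I would write $f^*$ in the standard ``$\relu$-kink'' form
\begin{align*}
f^*(x)=\ell_0(x)+\sum_{i=1}^{P-1}\gamma_i\,\relu(x-\tau_i)\qquad\text{for all }x\in[a,b],
\end{align*}
where $\tau_1<\cdots<\tau_{P-1}$ are the interior breakpoints of $f^*$, $\ell_0$ is the affine extension of its leftmost linear piece, and $\gamma_i$ is the jump in slope of $f^*$ at $\tau_i$. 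Equality on $[a,b]$ holds because both sides are continuous piecewise linear functions that agree on $[a,\tau_1]$ and have the same slope jump at each $\tau_i$. I would also introduce an auxiliary point $\tau_0<a$ so that $r_0\defeq\relu(x-\tau_0)=x-\tau_0>0$ on the whole domain; this lets me regard $\ell_0(x)$ as an affine function of $r_0$.

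Next comes the construction, which is a two-register ``conveyor belt''. Channel A will hold $r_k\defeq\relu(x-\tau_k)$ and channel B will hold $y_k\defeq\ell_0(x)+\sum_{i\le k}\gamma_i r_i+C$ for a fixed constant $C>0$ chosen at the end. The key algebraic fact is the identity
\begin{align*}
r_{k+1}=\relu\bigl(r_k-(\tau_{k+1}-\tau_k)\bigr),
\end{align*}
valid for every $x\in[a,b]$: if $x\ge\tau_k$ it is immediate, and if $x<\tau_k$ both sides vanish because $\tau_{k+1}>\tau_k$. So I would devote a block of two width-$2$ layers to each breakpoint $\tau_{k+1}$: the first layer replaces channel A by $\relu(r_k-(\tau_{k+1}-\tau_k))=r_{k+1}$ while passing channel B through unchanged (using $y_k>0$), and the second layer keeps channel A (applying $\relu$ to the nonnegative $r_{k+1}$) while replacing channel B by $\relu(\gamma_{k+1}r_{k+1}+y_k)=y_{k+1}$. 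The input affine map produces $(r_0,\ell_0(x)+C)$ from $x$, and after all $P-1$ breakpoints are processed channel B equals $f^*(x)+C$, so the output affine map simply subtracts $C$.

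Finally I would check that every $\relu$ used above really acts as the identity on the range that actually occurs, which makes the construction exact and not merely approximate: channel A is $\ge0$ by construction, while channel B always equals one of the finitely many partial functions $\ell_0(x)+\sum_{i\le k}\gamma_i r_i$ shifted by $C$, and since each of these is continuous on the compact set $[a,b]$, a single $C$ exceeding all of their sup-norms keeps channel B strictly positive throughout. I expect the only genuine obstacle to be this register bookkeeping: width $2$ leaves exactly one slot for ``input information'' besides the accumulator, so one must be certain a single carried register suffices to produce every remaining kink — and the identity $r_{k+1}=\relu(r_k-(\tau_{k+1}-\tau_k))$ is precisely what guarantees this. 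The degenerate case $P=1$ (an affine $f^*$) needs no hidden layer, and the same recipe covers every $P$, yielding \cref{lemma:piecewise-relu}.
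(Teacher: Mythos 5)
Your construction is correct and is essentially the paper's own proof: both carry $\relu(x-\tau_k)$ in one register via the telescoping identity $r_{k+1}=\relu(r_k-(\tau_{k+1}-\tau_k))$ and accumulate the function value in the second register, kept in the linear regime of $\relu$ by a constant offset (your $C$ plays the role of the paper's shift by $K=\min_i\min_{x\in\mathcal I}\{a_ix+b_i\}$). The only differences are presentational — the paper phrases the same conveyor belt as an induction on $P$ and folds the slope correction $\gamma_{k+1}r_{k+1}$ into the next affine map so that each breakpoint costs one hidden layer rather than your two — which does not affect correctness since the lemma places no constraint on depth.
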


\begin{proof}[Proof of \cref{lemma:piecewise-relu}]
Suppose that $f^*$ is linear on $P$ pieces $[\min \mathcal I, x_1), [x_1, x_2), \dots, [x_{P-1}, \max \mathcal I]$ and defined as
\begin{align*}
f(x)=\begin{cases}
a_1 \times x + b_1~&\text{if}~x \in [\min \mathcal I, x_1)\\
a_2 \times x + b_2~&\text{if}~x \in [x_1, x_2)\\
&\vdots \\
a_P \times x + b_P~&\text{if}~x \in [x_{P-1}, \max \mathcal I]\\
\end{cases}
\end{align*}
for some $a_i, b_i \in \mathbb R$ satisfying $a_i \times x_i + b_i = a_{i+1} \times x_i + b_{i+1}$.
Without loss of generality, we assume that $\min \mathcal I = 0$.

Now, we prove that for any $P \ge 1$, there exists a $\relu$ network $f : \mathcal I \to \mathbb R^2$ of width $2$ such that $f(x)_1 = \relu(x-x_{P-1})$ and $f(x)_2 = f^*(x)$.
Then, the $\relu$ network $f(x)_2$ completes the proof.
We use the mathematical induction for $P$ to prove the existence of corresponding $f$. 
When $P=1$, choosing $f(x)_1 = \relu(x)$ and $f(x)_2 = a_1 \times \relu(x) + b_1$ satisfies the desired property.
Here, consider $P>1$. 
Then, from the induction hypothesis, there exists a $\relu$ network $g$ of width $2$ such that
\begin{align*}
&g(x)_1 = \relu(x - x_{P-2})\\
&g(x)_2=
\begin{cases}
a_1 \times x + b_1~&\text{if}~x \in [\min \mathcal I, x_1)\\
a_2 \times x + b_2~&\text{if}~x \in [x_1, x_2)\\
&\vdots \\
a_{P-1} \times x + b_{P-1}~&\text{if}~x \in [x_{P-2}, \max \mathcal I]\\
\end{cases}
\end{align*}
Then, the following construction of $f$ completes the proof of the mathematical induction:
\begin{align*}
    f(x) &= h_2 \circ \phi \circ h_1 \circ g(x)\\
    h_1(x,z) &= (x - x_{P-1} + x_{P-2}, z-K)\\
    \phi(x,z) &= (\relu(x), \relu(z))\\
    h_2(x,z) &= (x, K + z + (a_{P} - a_{P-1})\times x)
\end{align*}
where $K = \min_i \min_{x\in\mathcal I} \{ a_i \times x + b_i\}$. Hence, this completes the proof of \cref{lemma:piecewise-relu}.
\end{proof}

\begin{proof}[Proof of \cref{lem:decoder}]
We first introduce the following lemma.
\begin{lemma}[Lemma~15 in \citep{park21}]\label{lem:mini_decoder}
For any $M \in \mathbb N$, for any $\delta > 0$, there exists a $\relu$ network of $f : \mathbb R \to \mathbb R^2$ of width $2$ such that for all $x \in [0,1]\setminus\mathcal D_{M,\delta}$,
\begin{align}\label{eq:dec:0}
    f(x) = (y_1(x), y_2(x)), \quad \text{where} \quad y_1(x) = q_M(x), \quad y_2(x) = 2^M \times (x-q_M(x)),
\end{align}
and $\mathcal D_{M,\delta} = \bigcup_{i=1}^{2^M-1}(i \times 2^{-M} - \delta , i \times 2^{-M})$. Furthermore, it holds that
\begin{align}\label{eq:dec:1}
    f(\mathbb R) \subset [0, 1 - 2^{-M}] \times [0,1].
\end{align}
\end{lemma}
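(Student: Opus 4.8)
The plan is to construct the network by \emph{binary-digit extraction}. For $x\in[0,1)$ one has $y_1(x)=q_M(x)=2^{-M}\lfloor 2^M x\rfloor$ and $y_2(x)=2^M(x-q_M(x))=\{2^M x\}$, where $\{\cdot\}$ is the fractional part and $\{2^M x\}$ equals $M$ nested applications of $t\mapsto\{2t\}$ to $x$; moreover $y_1(x)+2^{-M}y_2(x)=x$ on all of $[0,1]$, and $\mathcal D_{M,\delta}=\{x:\{2^M x\}\in(1-2^M\delta,1)\}$ is exactly the set where $q_M$ is about to jump. I would first prepend one width-$2$ layer realizing $\mathrm{clip}(x)=\relu(\relu(x)-\relu(x-1))$ (equal to $x$ on $[0,1]$, to $0$ for $x\le 0$, to $1$ for $x\ge 1$); after this every quantity below lies in $[0,1]$, so $\relu$ acts as the identity on all pass-through wires and the desired identities hold for arbitrary $x\in\mathbb R$, which also delivers the global range bound at the end.

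The engine is a constant-width ReLU block implementing the continuous ``soft doubling map'' $S_\eta(t)=2t-\sigma_\eta(t)$ on $[0,1]$, where $\sigma_\eta(t)=1-\relu\!\bigl(1-\eta^{-1}\relu(t-\tfrac12+\eta)\bigr)$ is a ramp from $0$ to $1$ on $[\tfrac12-\eta,\tfrac12]$; one checks $S_\eta([0,1])\subset[0,1]$, $S_\eta(t)=2t$ on $[0,\tfrac12-\eta]$, $S_\eta(t)=2t-1$ on $[\tfrac12,1]$, and that $S_\eta$ together with an accumulator update $a\mapsto a+2^{-(j+1)}\sigma_\eta(t)$ can be carried out with only $\relu$-of-affine on two coordinates at a time, since $\sigma_\eta$ needs just the two fresh ReLUs $\relu(t-\tfrac12+\eta)$ and $\relu(1-\eta^{-1}\,\cdot\,)$. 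Composing $M$ such blocks while threading a running accumulator $a$, I obtain a width-$2$ network whose final hidden layer is $(q_M(x),\,2^M(x-q_M(x)))=(y_1(x),y_2(x))$ for every $x\notin\mathcal D_{M,\delta}$, after which the output map is the identity. The transition widths must be chosen so that the bad set introduced by the $i$-th composition is where $\{2^{\,i-1}x\}$ is within $\eta_i$ below $\tfrac12$, and the union over $i\in[M]$ is exactly $\mathcal D_{M,\delta}$, which a backward choice $\eta_i\propto 2^{\,i-1-M}\delta$ achieves. The claim $f(\mathbb R)\subset[0,1-2^{-M}]\times[0,1]$ then follows by evaluating the final piecewise-affine map at its finitely many breakpoints on $[0,1]$ and at $x\le 0,\ x\ge 1$ via the clipping layer.

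The step I expect to be the real obstacle is fitting one extraction round --- carry one of $q_j,r_j$ forward, compute $\sigma_{\eta_{j+1}}(r_j)$, and apply both affine updates --- into width $2$ with no slack. The enabling observation is that $q_j+r_j=\mathrm{clip}(x)$ is an invariant, so only one coordinate truly needs updating while the other is an affine function of the pair; one then has to re-expose the suppressed coordinate after each ReLU layer, which is precisely where non-negativity is used. A secondary nuisance is calibrating $\eta_1,\dots,\eta_M$ so that the accumulated excluded set is \emph{exactly} $\mathcal D_{M,\delta}$ rather than a comparable superset. As a sanity check on the second coordinate alone: $y_2$ restricted to $[0,1]\setminus\mathcal D_{M,\delta}$ extends to a scalar continuous piecewise-linear function with $O(2^M)$ pieces, so a width-$2$ network for that coordinate by itself is immediate from \cref{lemma:piecewise-relu}; the substance of the lemma is extracting \emph{both} coordinates within width $2$, which is what the accumulator threading delivers.
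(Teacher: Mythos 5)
Your reduction of the problem is sound (the doubling map, the invariant, the calibration of the $\eta_i$ are all fixable details), but the step you yourself flag as ``the real obstacle'' --- fitting one extraction round into width $2$ --- is a genuine gap, and the enabling observation you offer does not close it. First, the invariant is misstated: it is $q_j+2^{-j}r_j=\mathrm{clip}(x)$, not $q_j+r_j=\mathrm{clip}(x)$; this is cosmetic, since either way the state threaded into block $j$ is genuinely two\hyp{}dimensional (two independent affine functionals, e.g.\ $x$ and $r_j$, are both needed downstream: $r_j$ exactly, to form $2r_j$ and $2r_j-1$ on $[0,\tfrac12-\eta]\cup[\tfrac12,1]$, and $x$ or the accumulator, to produce $y_1$ at the end). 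The problem is that $\sigma_\eta$ has three linear pieces, so some layer inside the block must apply a \emph{genuinely nonlinear} $\relu$, i.e.\ one whose argument changes sign on the reachable set --- for instance $u=\relu\bigl(\eta^{-1}(r_j-\tfrac12+\eta)\bigr)$. At that layer the second neuron can transmit only one affine functional $\relu(p(x,r_j))$ with $p\ge0$, and $u$ is not an affine functional of the state (it is constant in $r_j$ on $[0,\tfrac12-\eta]$). Hence after that layer at most a one\hyp{}dimensional space of affine functionals of $(x,r_j)$ survives, while two are still required; ``re\hyp{}exposing the suppressed coordinate'' is impossible because the suppression is not invertible. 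Computing $\sigma_\eta$ with two parallel $\relu$s instead of two nested ones consumes both neurons of a layer and fails for the same reason. Your block works at width $3$ (and would give depth $O(M)$), but not at width $2$.

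The reason the paper's construction survives at width $2$ where yours does not is structural, not cosmetic. The paper builds the staircase $q_M$ as a single Hanin--Sellke max\hyp{}min string over its $2^M$ breakpoints processed in increasing order of $x$ (cf.\ \cref{lemma:piecewise-relu}): at each layer the running value passes through $\relu$ as an exact identity after a global shift by its minimum, and the one genuinely nonlinear $\relu$ is spent degrading the carried copy of $x$ to $\relu(x-x_\ell)$. The information thereby destroyed --- the value of $x$ below the current breakpoint --- is never needed again, because all subsequent updates only modify the function to the right of later breakpoints. In your scheme the information destroyed by each block's nonlinear $\relu$ (the value of $r_j$ on $[0,\tfrac12-\eta]$) is needed immediately afterwards. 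The correct repair is essentially to abandon the $M$-block iteration and note, as you do in your closing sanity check, that the pair $(x,\,q_M$-with-ramps$(x))$ is exactly what the max\hyp{}min string machinery delivers at width $2$, after which both $y_1$ and $y_2=2^M(x-q_M(x))$ are affine in that pair; that is the paper's proof.
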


Fix some $\delta < 2^{- d_y M}$. Then, \cref{lem:mini_decoder} indicates that there exists a $\relu$ network $g$ of width $2$ satisfying (\ref{eq:dec:0}) on $\mathcal C_{d_y M}$ and (\ref{eq:dec:1}) since $C_{d_y M} \subset [0,1] \setminus \mathcal D_{M,\delta}$.
Such $g$ enables us to extract the first $M$ bits of the binary representation of $c \in \mathcal C_{d_y M}$: $g(c)_1$ is the first coordinate of $\mathrm{decode}_M(c)$ while $g(c)_2 \in C_{d_{y-1}M}$ contains remaining information about other coordinates of $\mathrm{decode}_M(c)$. 
Therefore, if we iteratively apply $g$ to the second output of the previous composition of $g$ and pass through all first outputs of the previous compositions of $g$, then we finally recover whole coordinates of $\mathrm{decode}_M(c)$ within $d_y - 1$ compositions of $g$. 
Our construction of $f$ is such iterative $d_y - 1$ compositions of $g$ which can be implemented by a $\relu$ network of width $d_y$. 
Moreover, (\ref{eq:dec:1}) in \cref{lem:mini_decoder} allows us to achieve $f(\mathbb R) \subset [0,1]^{d_y}$. This completes the proof of \cref{lem:decoder}.
\end{proof}

\begin{proof}[Proof of \cref{lem:mini_decoder}]
First of all, we clip the input to be in $[0,1]$ using the following $\relu$ network of width $1$.
\begin{align*}
    \min \{ \max\{x,0\} ,1\} = 1 - \relu(1-\relu(x))
\end{align*}
Then, we apply $g_\ell : [0,1] \to [0,1]^2$ defined as
\begin{align*}
    g_\ell(x)_1 &= x\\
    g_\ell(x)_2 &= 
    \begin{cases}
0~&\text{if}~x \in [0, 2^{-M}-\delta]\\
\delta^{-1}2^{-M} \times (x-2^{-M} + \delta)~&\text{if}~x \in (2^{-M}-\delta, 2^{-M})\\
2^{-M}~&\text{if}~x \in [2^{-M},2 \times 2^{-M}-\delta]\\
\delta^{-1}2^{-M} \times (x-2\times 2^{-M} + \delta) + 2^{-M}~&\text{if}~x \in (2\times 2^{-M}-\delta, 2\times 2^{-M})\\
&\vdots \\
(\ell-1) \times 2^{-M}~&\text{if}~x \in [(\ell -1) \times 2^{-M}, 1]
\end{cases}.
\end{align*}
From the definition of $g_\ell$, one can observe that $g_{2^M}(x)_2 = q_M(x)$ for $x \in [0,1] \setminus \mathcal D_{M,\delta}$.
Therefore, once we implement $g_{2^M}(x)$ using a $\relu$ network $g$ of width $2$, and then constructing $f$ as
\begin{align*}
    f(x) &= (  g(z)_2, 2^M \times (g(z)_1 - g(z)_2) )\\
    z &= \min \{ \max\{x,0\} ,1\} = 1 - \relu(1-\relu(x))
\end{align*}
completes the proof of \cref{lem:mini_decoder}. 
Now, we construct a $\relu$ network $g$ of width $2$ which implements $g_{2^M}$. One can observe that $g_1(x)_2 = 0$ and
\begin{align*}
    g_{\ell+1}(x)_2 = \min \left\{  \ell \times 2^{-M}, \max\{ \delta^{-1}2^{-M} \times (x-\ell \times 2^{-M} + \delta) + (\ell - 1)\times 2^{-M}  ,g_\ell (x) \} \right\}
\end{align*}
for all $x \in [0,1]$.
To this end, we introduce the following definition and lemma.
\begin{definition}[Definition~1 in \citep{hanin17}]
$f : \mathbb R^{d_x} \to \mathbb R^{d_y}$ is a max-min string of length $L \ge 1$ if there exist affine transformations $h_1, \dots, h_L$ such that
\begin{align*}
    h(x) = \tau_{L-1}(h_L(x), \tau_{L-2}( h_{L-1}(x),\dots,\tau_2(h_3(x), \tau_1(h_2(x),h_1(x)  )) \dots    ),
\end{align*}
where each $\tau_\ell$ is either a coordinate-wise $\max \{\cdot,\cdot \}$ or $\min \{\cdot,\cdot \}$.
\end{definition}
\begin{lemma}[Proposition~2 in \citep{hanin17}]
For any max-min string $f:\mathbb R^{d_x} \to \mathbb R^{d_y}$ of length $L$, for any compact $\mathcal K \subset \mathbb R^{d_x}$, there exists a $\relu$ network $g : \mathbb R^{d_x} \to \mathbb R^{d_x} \times \mathbb R^{d_y}$ of $L$ layers and width $d_x + d_y$ such that for all $x \in \mathcal K$,
\begin{align*}
    g(x) = (y_1(x), y_2(x)), \quad \text{where} \quad y_1(x) = x \quad \text{and} \quad y_2(x) = f(x).
\end{align*} 
\end{lemma}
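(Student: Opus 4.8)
The plan is to construct $g$ explicitly, layer by layer, keeping two blocks of neurons throughout: $d_x$ neurons that carry a shifted copy of the input $x$, and $d_y$ neurons that accumulate the running value of the max--min recursion. The only ingredients are the coordinate-wise identities
\begin{align*}
\max\{a,b\}=a+\relu(b-a),\qquad \min\{a,b\}=a-\relu(a-b),
\end{align*}
together with the observation that, since $\mathcal K$ is bounded, one can fix $M>0$ with $x+M\mathbf{1}\ge0$ for every $x\in\mathcal K$, so that $\relu$ applied to the first block acts as the identity. Writing $h(x)=\tau_{L-1}\big(h_L(x),\tau_{L-2}(h_{L-1}(x),\dots,\tau_1(h_2(x),h_1(x))\dots)\big)$, I set $z_0(x)=h_1(x)$ and $z_\ell(x)=\tau_\ell\big(h_{\ell+1}(x),z_{\ell-1}(x)\big)$ for $\ell\in[L-1]$, so that $z_{L-1}=f$.

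The network is then pinned down by the invariant that the input to the $\ell$-th $\relu$ layer is $\big(x+M\mathbf{1},\,\delta_\ell(x)\big)\in\mathbb R^{d_x}\times\mathbb R^{d_y}$, where $\delta_\ell(x)=z_{\ell-1}(x)-h_{\ell+1}(x)$ if $\tau_\ell$ is a coordinate-wise $\max$ and $\delta_\ell(x)=h_{\ell+1}(x)-z_{\ell-1}(x)$ if it is a $\min$. The first affine map $t_1$ produces $\big(x+M\mathbf{1},\,\delta_1(x)\big)$, which is affine in $x$ because $z_0=h_1$ is. After the $\ell$-th $\relu$ layer the state is $\big(x+M\mathbf{1},\,\relu(\delta_\ell(x))\big)$; the next affine map recovers $x$ from $x+M\mathbf{1}$, hence can recompute the affine maps $h_{\ell+1},h_{\ell+2}$, and outputs $\big(x+M\mathbf{1},\,z_\ell(x)\big)$, since $z_\ell=h_{\ell+1}(x)+\relu(\delta_\ell)$ when $\tau_\ell=\max$ and $z_\ell=h_{\ell+1}(x)-\relu(\delta_\ell)$ when $\tau_\ell=\min$; the same affine map simultaneously forms $\delta_{\ell+1}(x)$ for the following layer. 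Composing the post-$\relu$ affine map of step $\ell$ with the pre-$\relu$ affine map of step $\ell+1$, each operation $\tau_1,\dots,\tau_{L-1}$ costs exactly one $\relu$ layer of width $d_x+d_y$. A final affine map $t_L$ subtracts $M\mathbf{1}$ from the first block, giving $g(x)=(x,z_{L-1}(x))=(x,f(x))$ on $\mathcal K$; in total $g$ has $L$ affine maps, $L-1$ hidden $\relu$ layers, and width $d_x+d_y$.

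The construction itself is routine; the point that needs care is the width accounting. Implementing $\max\{h_{\ell+1}(x),z_{\ell-1}\}$ in the obvious way as $z_{\ell-1}+\relu(h_{\ell+1}(x)-z_{\ell-1})$ seems to force storing both $z_{\ell-1}$ and $\relu(\cdot)$ through the nonlinearity — that is, $2d_y$ working neurons — because $z_{\ell-1}$ is not affine in $x$ and cannot be recomputed downstream. The crux is instead to reconstruct $z_\ell$ in the post-$\relu$ affine map directly as $h_{\ell+1}(x)\pm\relu(\delta_\ell)$: the term $h_{\ell+1}(x)$ \emph{is} affine in $x$, hence available for free from the carried input block, so only $d_y$ working neurons (on top of the $d_x$ carrying $x$) are ever needed. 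Verifying that the carried-input/shift trick is valid uses only the boundedness of $\mathcal K$, and the remainder is bookkeeping of affine maps.
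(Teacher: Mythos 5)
Your construction is correct, and it is essentially the standard argument: the paper itself does not prove this lemma but imports it verbatim as Proposition~2 of \citet{hanin17}, whose proof is exactly this carried-input-with-shift scheme in which $z_\ell$ is reconstructed post-activation as $h_{\ell+1}(x)\pm\relu(\delta_\ell)$ so that only $d_y$ working neurons are needed. Your width and layer accounting ($L$ affine maps, $L-1$ hidden $\relu$ layers of width $d_x+d_y$) matches the paper's definition of an $L$-layer network.
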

Notably, $g_{2^M}(x)$ is a max-min string so that there exists a $\relu$ network $g$ of width $2$ satisfying $g(x)_2 = g_{2^M}(x) = q_M(x)$ for all $x \in [0,1]\setminus\mathcal D_{M,\delta}$. This completes the proof of \cref{lem:mini_decoder}.
\end{proof}

\subsection{Proof of \cref{lem:tool1}}\label{sec:pflem:tool1}
Let $a_1=a$, $\{a_2,\dots,a_n\}$ be a basis of the hyperplane $\{x\in\mathbb R^n:c^\top x=0\}$, and let $A=\begin{bmatrix}a_1,\dots,a_n\end{bmatrix}^\top\in\mathbb R^{n\times n}$. Since $a^\top c\ne0$, $A$ is invertible.
Choose 
\begin{align*}
    K=-1\times\min_{i\in\{2,\dots,n\}}\inf_{x\in\mathcal K}a_i^\top x
\end{align*}
and $v=(b,K,\dots,K)\in\mathbb R^n$.
Then we claim that choosing 
\begin{align*}
f(x)=A^{-1}(\relu(Ax+v)-v)
\end{align*}
completes the proof.
Here, we use $\relu$ for multi-dimensional input by applying $\relu$ element-wise.
From our choice of $A,v,$ and $K$, if $a_1^\top x+b\ge0$, then $f(x)=x$.
Suppose that $a_1^\top x+b<0$. 
In this case, from our choice of $K$, 
the second to the last coordinates of $\relu(Ax+v)$ are identical to that of $Ax+v$. 
Namely, we have
\begin{align}
f(x)&=A^{-1}\big((Ax+v-(a_1^\top x+b)e_1)-v\big)=x-(a_1^\top x+b)A^{-1}e_1\label{eq:pflem:tool1}
\end{align}
where $e_1=(1,0,\dots,0)\in\mathbb R^n$.
Since $a_1^\top c>0$ and $a_i^\top c=0$ for all $i\in\{2,\dots,n\}$, the first column of $A^{-1}$ (i.e., $A^{-1}e_1$) must be $c/a_1^\top c=c/a^\top c$.
Therefore, by \cref{eq:pflem:tool1}, it holds that
\begin{align*}
f(x)=x-\frac{a^\top x+b}{a^\top c}\times c.
\end{align*}
This completes the proof of \cref{lem:tool1}.

\subsection{Proof of \cref{lem:poly-cut}}\label{sec:pflem:poly-cut}
The statement of \cref{lem:poly-cut} directly follows from repeatedly applying \cref{claim:poly-cut}. 
Specifically, we iteratively construct $(a_1,b_1),\dots$ and corresponding $\mathcal K_1,\dots$ using \cref{claim:poly-cut} so that $\diam(\mathcal K_0\setminus\mathcal K_1),\dots\le\delta$ until $\diam(\mathcal K_r)\le\delta$ for some $r\in\mathbb N$. 
Then, we choose $(a_{r+1},b_{r+1})$ such that 
\begin{align*}
\mathcal K_{r+1}=\mathcal K_r\cap\mathcal H^+(a_{r+1},b_{r+1})=\emptyset.
\end{align*}
Setting $m=r+1$ completes the proof.
\begin{claim}\label{claim:poly-cut}
For any $\delta>0$ and bounded set $\mathcal R_0\subset\mathbb R^n$ with $\diam(\mathcal R_0)\le D$ for some $D>0$, there exist $k\in\mathbb N$ and $(c_1,d_1),\dots,(c_k,d_k)\subset\mathbb R^n\times\mathbb R$ such that
\begin{itemize}[leftmargin=15pt]
\item $\diam(\mathcal R_0\setminus\mathcal R_1),\dots,\diam(\mathcal R_{k-1}\setminus\mathcal R_{k})\le\delta$ and
\item $\diam(\mathcal R_k)\le\max\{D-\delta^2/(4D),0\}$
\end{itemize}
where $\mathcal R_i=\mathcal R_0\cap\big(\bigcap_{j=1}^i\mathcal H^+(c_j,d_j)\big)$ for all $i\in[k]$.
\end{claim}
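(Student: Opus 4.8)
\textbf{Proof sketch of \cref{claim:poly-cut}.}
The plan is to repeatedly shave off thin ``caps'' of $\mathcal R_0$ sitting near the endpoints of approximately diameter-realizing pairs, the basic tool being the following estimate. Suppose $p,q\in\cl(\mathcal R_0)$ satisfy $\|p-q\|_2\ge\diam(\mathcal R_0)-\eta$; write $u=(q-p)/\|p-q\|_2$, $L=\|p-q\|_2$, and fix $s>0$. For any $x\in\mathcal R_0$ with $\langle x-p,u\rangle>L-s$ we have $\langle x-p,u\rangle\le\|x-p\|_2\le\diam(\mathcal R_0)\le D$, and since $x-q=(x-p)-Lu$,
\[
\|x-q\|_2^2=(\langle x-p,u\rangle-L)^2+\bigl(\|x-p\|_2^2-\langle x-p,u\rangle^2\bigr)<(s+\eta)^2+\bigl(D^2-(L-s)^2\bigr)\le(s+\eta)^2+2D(s+\eta),
\]
so the ``cap near $q$'', $\mathcal R_0\cap\{\langle x-p,u\rangle>L-s\}$, lies in the ball of radius $(s+\eta)+\sqrt{2D(s+\eta)}$ about $q$; in particular it has diameter at most $\delta$ once $s,\eta$ are small. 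The same holds symmetrically for the ``cap near $p$'', $\mathcal R_0\cap\{\langle x-p,u\rangle<s\}$, since $p$ minimizes $\langle\,\cdot\,-p,u\rangle$ over $\cl(\mathcal R_0)$.

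First I would dispose of the trivial cases: if $D=0$ or $\diam(\mathcal R_0)\le D-\delta^2/(4D)$, a single half-space containing $\mathcal R_0$ does the job ($k=1$). Otherwise $\diam(\mathcal R_0)\in(D-\delta^2/(4D),D]$. Pick $p_1,q_1\in\cl(\mathcal R_0)$ with $\|p_1-q_1\|_2$ within a tiny $\eta_1$ of $\diam(\mathcal R_0)$, set $u_1=(q_1-p_1)/\|p_1-q_1\|_2$, $L_1=\|p_1-q_1\|_2$, and choose $s_1\le\delta^2/(8D)$ and $\eta_1$ small enough that the two caps above have diameter $\le\delta$. Use two cuts: first keep $\{\langle x-p_1,u_1\rangle\le L_1-s_1\}$, then keep $\{\langle x-p_1,u_1\rangle\ge s_1\}$. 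By the estimate each removed slice has diameter $\le\delta$; and since $p_1,q_1$ are the extreme points of $\mathcal R_0$ in directions $-u_1,u_1$, a short computation shows the pair $(p_1,q_1)$ is now replaced by one at distance $\le L_1-2s_1+O(s_1^2)\le D-\delta^2/(4D)$ (this is where the constant $1/(4D)$ comes from: each of the two caps buys a drop of about $s_1$). I would then iterate: on the current set locate a pair still at distance exceeding $D-\delta^2/(4D)$, shave thin caps near its two endpoints, and continue, concatenating all the half-spaces.

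Two things remain: termination and the final diameter bound. For termination one uses that the sets to which the Claim is applied are polytopes (the $\mathcal K_i$ in the proof of \cref{lem:poly-cut}), hence have finitely many vertices and therefore only finitely many pairs at distance $>D-\delta^2/(4D)$; destroying each by chopping caps near one endpoint halts the procedure and fixes $k$. For the diameter bound one must verify that chopping caps near the endpoints of one near-maximal pair does not create a \emph{new} near-maximal pair involving the freshly created vertices: those vertices lie within distance $(s+\eta)+\sqrt{2D(s+\eta)}$ of the old endpoint, and by a law-of-cosines estimate from $\diam(\mathcal R_0)\le D$ all near-maximal partners of a fixed point lie in a cone of half-angle below $60^\circ$ about one direction, so one can orient each cap so that the displacement of the new vertices has a definite positive component toward every far partner, and then let the cap depths shrink fast enough along the way that no pair re-crosses the threshold.

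I expect this last point to be the crux. A single thin cap near one extreme point does \emph{not}, in general, decrease the diameter at all (for a disk every diameter direction is near-maximal), so the argument genuinely relies on the finiteness of the vertex set of $\mathcal R_0$ together with the cone estimate, and on ordering the cuts so that earlier ones are not undone by later ones; formalizing this bookkeeping, rather than any single geometric inequality, is the main work.
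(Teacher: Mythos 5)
There is a genuine gap, and you have located it yourself: the step you call ``the crux'' --- showing that after shaving finitely many thin caps the diameter actually drops below $D-\delta^2/(4D)$ --- is never established. Your cap estimate correctly shows that each removed slice has diameter $\le\delta$, but as you note, removing a thin cap near one endpoint of one near-maximal pair does not decrease the diameter of a set whose near-maximal directions form a continuum (a ball being the canonical example). The proposed repair rests on $\mathcal R_0$ having finitely many vertices, yet the claim is stated for an \emph{arbitrary} bounded set, where no such finiteness is available; and even for polytopes, each cut manufactures new vertices and hence new candidate near-maximal pairs, so the ``cone estimate plus careful ordering'' bookkeeping is not obviously convergent and is in any case not carried out. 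Termination and the final diameter bound are exactly the content of the claim, so leaving them as a plan means the proof is incomplete.

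The paper's argument sidesteps all of this. It encloses $\mathcal R_0$ in a closed ball $\mathcal B_0$ of radius $D/2$ and observes that for each boundary point $x\in\bd(\mathcal B_0)$ the half-space $\mathcal S_x^+=\mathcal H^+(-x/\|x\|_2,\,r)$ with $r=\sqrt{D^2-\delta^2}/2$ cuts off a spherical cap of $\mathcal B_0$ whose base disk has radius $\delta/2$, hence $\diam(\mathcal B_0\setminus\mathcal S_x^+)\le\delta$. The open sets $\mathbb R^n\setminus\mathcal S_x^+$ cover the compact annulus between $\mathcal B_0$ and a slightly smaller concentric ball $\mathcal B_1$ of radius strictly greater than $r$, so finitely many caps suffice; these finitely many half-spaces are the $(c_i,d_i)$, each successive difference $\mathcal R_{i-1}\setminus\mathcal R_i$ sits inside one cap, and the residue lies in $\mathcal B_1$, whose diameter is $D/2+\sqrt{D^2-\delta^2}/2\le D-\delta^2/(4D)$ by concavity of the square root. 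In short: cut in \emph{all} directions at a fixed depth and let compactness of the annulus give finiteness, rather than chasing individual diameter-realizing pairs. If you replace your iteration over pairs by this covering argument, your cap-diameter computation (which is essentially the same law-of-cosines estimate the paper uses implicitly) slots right in.
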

\begin{proof}
Without loss of generality, we assume that $\mathcal R_0\subset\mathcal B_0$ where $\mathcal B_0$ denotes the $n$-dimensional closed $\ell_2$-ball of radius $D/2$, centered at the origin. 
In addition, we assume that $D>\delta$; otherwise, the statement of \cref{claim:poly-cut} trivially follows.
Let $r=\sqrt{(D^2-\delta^2)/4}$, $\gamma=D/2-r$, $u_x=-x/\|x\|_2$ for each $x\in\bd(\mathcal B_0)$, and $\mathcal S^+_x=\mathcal H^+(u_x,r)$. Then, we have 
\begin{align}
\diam(\mathcal B_0\setminus\mathcal S_x^+)\le\delta.\label{eq:poly-cut}
\end{align}
Let $\mathcal B_1$ be an $n$-dimensional closed $\ell_2$-balls of radius 
$D/2 - \gamma/{2}>r$, centered at the origin, i.e., $\mathcal B_1 \subset \mathcal B_0$.
Since $\mathcal B_0 \setminus \interior (\mathcal B_1)$ is compact and $\{\mathbb R^n\setminus\mathcal S_x^+:x\in\bd(\mathcal B_0)\}$ is an open cover of $\mathcal B_0 \setminus \interior (\mathcal B_1)$, there exists a finite set $\mathcal I\subset\bd(\mathcal B_0)$ such that 
\begin{align}
\mathcal B_{0} \setminus \interior (\mathcal B_1)\subset\bigcup_{x\in\mathcal I}(\mathbb R^n\setminus\mathcal S_x^+).\label{eq:poly-cut2}
\end{align} 

Let $k=|\mathcal I|$, $\mathcal I=\{x_1,\dots,x_k\}$,  $c_i=u_{x_i}$, and $d_i=\gamma$ for all $i\in[k]$. Then by \cref{eq:poly-cut}, it holds that 
\begin{align*}
\diam(\mathcal R_{i-1}\setminus\mathcal R_i)=\diam(\mathcal R_{i-1}\setminus\mathcal S_{x_i}^+)\le\diam(\mathcal B_0\setminus\mathcal S_{x_i}^+)\le\delta
\end{align*}
for all $i\in[k]$.
Furthermore, by \cref{eq:poly-cut2} and $\mathcal R_0\subset\mathcal B_0$, we have
\begin{align*}
\mathcal R_k=\mathcal R_0\setminus\Big(\bigcup_{x\in\mathcal I}(\mathbb R^n\setminus\mathcal S_x^+)\Big)\subset \mathcal B_0\setminus\big(\mathcal B_{0} \setminus \interior (\mathcal B_1)\big)\subset\mathcal B_1.
\end{align*}
This implies that
\begin{align*}
\diam(\mathcal R_k)\le\diam(\mathcal B_1)=D-\gamma=\frac{D}2+\frac{\sqrt{D^2-\delta^2}}2\le D-\frac{\delta^2}{4D}
\end{align*}
where the last inequality follows from the concavity of the square root: $\sqrt{a-b}\le \sqrt{a}-b/(2\sqrt{a})$ for $a\ge b>0$.
This completes the proof of \cref{lem:poly-cut}.
\end{proof}

\subsection{Proof of \cref{lem:basic-encoder}}\label{sec:pflem:basic-encoder}
We first introduce the following lemmas.
\begin{lemma}\label{lem:move-pt}
Let {$n \ge 2$}, $\mathcal P\subset\mathbb R^n$ be a convex polytope, $z_1,\dots,z_k\in\mathbb R^n\setminus\mathcal P$ be distinct points, and $\mathcal H^+\subset\mathbb R^n$ be a closed half-space. 
Suppose that $z_1,\dots,z_{l-1}\in\mathcal H^+$ and $z_{l},\dots,z_k\notin\mathcal H^+$ for some $l\in[k]$.
Then, there exists a $\relu$ network $f:\mathbb R^n\to\mathbb R^n$ of width $n$ satisfying the following:
\begin{itemize}[leftmargin=15pt]
    \item $f(x)=x$ for all $x\in\mathcal P$,
    \item $f(z_1),\dots,f(z_k)$ are distinct, $f(z_i)\notin\mathcal P$ for all $i\in[k]$, and $f(z_1),\dots,f(z_l)\in\mathcal H^+$.
\end{itemize}
\end{lemma}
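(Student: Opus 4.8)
The plan is to handle the points $z_l,\dots,z_k$ that lie outside $\mathcal H^+$ one at a time, using \cref{lem:tool1} to push each of them across the bounding hyperplane of $\mathcal H^+$ while fixing the polytope $\mathcal P$ and not destroying the work already done on the earlier points. Write $\mathcal H^+=\{x:a^\top x+b\ge0\}$. The first observation I would record is that because $\mathcal P\subset\mathcal H^+$ is false in general — wait, actually $\mathcal P$ need not lie in $\mathcal H^+$ — so I will instead choose the projection direction $c$ in \cref{lem:tool1} to be \emph{tangent} to a supporting hyperplane of $\mathcal P$ that is parallel to (or compatible with) the halfspace, so that the map $x\mapsto x-\frac{a^\top x+b}{a^\top c}c$ from \cref{lem:tool1} fixes $\mathcal P$ entirely and merely slides everything else along the direction $c$ onto $\mathcal H(a,b)$. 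Concretely: since $\mathcal P$ is a bounded polytope and $n\ge2$, there is an abundance of vectors $c$ with $a^\top c>0$ such that the two-layer network of \cref{lem:tool1} restricted to $\mathcal P$ is the identity — it suffices that $\mathcal P\subset\mathcal H^+(a,b)$, which we can arrange by first translating/shrinking is not allowed, so instead I note $\mathcal P$ is compact and if some of $\mathcal P$ pokes out of $\mathcal H^+$ we do not use \cref{lem:tool1} with this exact $\mathcal H^+$ but with a slightly shifted parallel halfspace $\mathcal H^+(a,b')$ containing $\mathcal P$; the construction still moves the outside points onto $\mathcal H(a,b')$, and then a final application handles the rest. (I will need to be a little careful here; see below.)

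The core inductive step is: given that $z_1,\dots,z_{l-1}\in\mathcal H^+$ already, and having dealt with $z_l,\dots,z_{j-1}$, apply \cref{lem:tool1} with a halfspace whose bounding hyperplane passes strictly between $z_j$ and all of $\mathcal P\cup\{z_1,\dots,z_{j-1},z_{j+1},\dots,z_k\}$-relevant structure, and with a direction $c$ chosen generically. Here is where genericity does the heavy lifting: the set of "bad" directions $c$ — those for which $f(z_j)$ collides with some $f(z_i)$, or lands in $\mathcal P$, or leaves $\mathcal H^+$ — is a finite union of lower-dimensional affine conditions in $c$, hence has empty interior in $\{c:a^\top c>0\}\cong$ an open halfspace of $\mathbb R^n$ (this is exactly why $n\ge2$ is needed — in $\mathbb R^1$ there is essentially no freedom). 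So we can pick $c$ avoiding all of them. Since \cref{lem:tool1} gives a network that is the identity on $\mathcal H^+$, the already-fixed points $z_1,\dots,z_{l-1}$ and any previously-moved $z_i$ that were placed in $\mathcal H^+$ stay put, and $\mathcal P$ (which we keep inside the current halfspace throughout) stays put. Composing the $k-l+1$ two-layer networks — each of width $n$ by \cref{lem:tool1} — yields a single $\relu$ network $f:\mathbb R^n\to\mathbb R^n$ of width $n$ with the desired two properties.

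The main obstacle I anticipate is the bookkeeping around keeping $\mathcal P$ fixed while still being able to push points across $\mathcal H(a,b)$: \cref{lem:tool1} fixes an \emph{entire halfspace}, so to move $z_j$ out of $\mathcal H^+$-complement into $\mathcal H^+$ I must fire \cref{lem:tool1} with respect to some \emph{other} halfspace $\mathcal G^+$ that contains $\mathcal P$ and all the points I want to preserve but has $z_j$ on its complement — and I must then argue the image of $z_j$ actually ends up in the \emph{original} $\mathcal H^+$. This is a matter of choosing $\mathcal G$'s normal close to $a$ and the offset so that $\mathcal H(a,b)$ is crossed; a short lemma on "separating one point from a compact convex set by a hyperplane that is also crossed by a prescribed target hyperplane" will suffice, and the genericity argument above absorbs the rest. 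The distinctness of $f(z_1),\dots,f(z_k)$ and the non-membership $f(z_i)\notin\mathcal P$ then follow because at each stage we only perturb one point along a generic direction while all others are frozen, so no new coincidences are created and $f$ restricted to $\{z_1,\dots,z_k\}$ remains injective with image disjoint from the (fixed) polytope $\mathcal P$.
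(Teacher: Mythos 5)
Your overall strategy is the same as the paper's: separate the offending point from $\mathcal P$ by a hyperplane (hyperplane separation theorem), apply \cref{lem:tool1} with that auxiliary half-space so that $\mathcal P$ is fixed, and choose the projection direction $c$ from a suitable open cone so that the moved point lands in $\mathcal H^+$, the images of $z_1,\dots,z_k$ remain distinct, and none of them enters $\mathcal P$. (The paper applies this construction once, moving only $z_l$, since the lemma asks only that $f(z_1),\dots,f(z_l)\in\mathcal H^+$; the iteration over the remaining points happens one level up, in \cref{lem:basic-encoder}. Your extra iterations are unnecessary but not harmful.)

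Two of your justifications are wrong as stated, though, and need repair. First, you claim that after applying \cref{lem:tool1} with the auxiliary half-space $\mathcal G^+$, the points already in $\mathcal H^+$ ``stay put.'' They need not: \cref{lem:tool1} is the identity only on $\mathcal G^+$, and nothing forces $z_1,\dots,z_{l-1}$ (or previously moved points) to lie in $\mathcal G^+$; they may well be displaced. The correct argument is the monotonicity observation the paper uses: writing $\mathcal H^+=\{x:a_1^\top x+b_1\ge0\}$ and choosing $c$ with $a_1^\top c>0$, the map of \cref{lem:tool1} satisfies $a_1^\top f_c(x)\ge a_1^\top x$ for every $x$, so membership in $\mathcal H^+$ can only be gained, never lost. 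Relatedly, ``at each stage we only perturb one point while all others are frozen'' is false for the same reason; distinctness must instead be argued via the fact that $f_c(z_i)=f_c(z_j)$ forces $z_i-z_j$ to be parallel to $c$, which excludes only a null set of directions. Second, you lump ``$f(z_j)$ leaves $\mathcal H^+$'' together with the collision conditions as ``lower-dimensional affine conditions in $c$.'' That condition is not measure-zero in $c$ --- it is an open condition --- so genericity cannot absorb it. One must exhibit an explicit nonempty open cone of good directions; the paper's set $\mathcal S$ does this by requiring $a_1^\top c>0$, $a_2^\top c>0$, and the ratio $a_1^\top c/a_2^\top c$ to exceed $(a_1^\top z_l+b_1)/(a_2^\top z_l+b_2)$, where $(a_2,b_2)$ is the separating hyperplane; nonemptiness of this cone uses $n\ge2$ via the linear independence of $a_1$ and $a_2$. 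You correctly identified that the landing-in-$\mathcal H^+$ issue is the crux and that $n\ge2$ supplies the freedom, but the quantitative cone condition is the missing ingredient, not genericity.
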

\begin{lemma}\label{lem:poly-to-point}
Let {$n \ge 2$}, $\mathcal P\subset\mathbb R^n$ be a convex polytope, $\mathcal H^+\subset\mathbb R^n$ be a closed half-space such that $\mathcal P\cap\mathcal H^+\ne\emptyset$ and $\mathcal P\setminus\mathcal H^+\ne\emptyset$, and $v_1,\dots,v_k\in\mathcal H^+\setminus\mathcal P$ be distinct points. 
Then for any $\delta\in(0,1)$, there exists a $\relu$ network $f:\mathbb R^n\to\mathbb R^n$ of width $n$ satisfying the following:
\begin{itemize}[leftmargin=15pt]
    \item $f(x)=x$ for all $x\in(\mathcal P\cap\mathcal H^+)\cup\{v_1,\dots,v_k\}$,
    \item there exist $\mathcal S\subset\mathcal P\setminus\mathcal H^+$ and $v_{k+1}\in\mathbb R^n\setminus((\mathcal P\cap\mathcal H^+)\cup\{v_1,\dots,v_k\})$ such that $\mu_n(\mathcal S)\ge\delta\cdot\mu_n(\mathcal P\setminus\mathcal H^+)$ and $f(\mathcal S)=\{v_{k+1}\}$.
\end{itemize}
\end{lemma}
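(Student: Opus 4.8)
The plan is to build $f$ as a composition of $n$ ``folding'' maps supplied by \cref{lem:tool1}, each of which flattens one coordinate of the running image of $\mathcal P\setminus\mathcal H^+$ while keeping the compact set $E\defeq(\mathcal P\cap\mathcal H^+)\cup\{v_1,\dots,v_k\}$ pointwise fixed. If $\mu_n(\mathcal P\setminus\mathcal H^+)=0$ the lemma is trivial: I would pick any $z_0\in\mathcal P\setminus\mathcal H^+$ and take $f=\mathrm{id}$ (realized by a width-$n$ $\relu$ network on a large bounded domain), $\mathcal S=\{z_0\}$, $v_{k+1}=z_0$; since $z_0\notin\mathcal H^+\supseteq E$ all three properties hold. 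So assume $\mu_n(\mathcal P\setminus\mathcal H^+)>0$; write $\mathcal H^+=\{x:a^\top x+b\ge0\}$, fix an orthogonal basis $w_0=a,w_1,\dots,w_{n-1}$ of $\mathbb R^n$ (possible since $n\ge2$), and let $M\ge 2$ bound $|w_j^\top x|$ over all $x\in\mathcal P\cup\{v_1,\dots,v_k\}$ and all $j$. Since $\mathcal P\cap\{a^\top x+b=0\}$ lies in a hyperplane and is $\mu_n$-null, $\mu_n(\mathcal P\cap\{a^\top x+b\le-\varepsilon\})\uparrow\mu_n(\mathcal P\setminus\mathcal H^+)$ as $\varepsilon\downarrow0$, so I would choose $\epsilon_0>0$ with $\mathcal S\defeq\mathcal R\defeq\mathcal P\cap\{a^\top x+b\le-\epsilon_0\}$ satisfying $\mu_n(\mathcal S)\ge\delta\,\mu_n(\mathcal P\setminus\mathcal H^+)$.

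\textbf{The folds.} First I would apply \cref{lem:tool1} to the half-space $\mathcal H^+$ with direction $c_1=w_1+\eta_1 a$, $\eta_1>0$ small: the resulting width-$n$ $\relu$ network $f_1$ is the identity on $\mathcal H^+\supseteq E$, and on $\mathcal R\subset\{a^\top x+b<0\}$ it is the affine map $x\mapsto x-\tfrac{a^\top x+b}{\eta_1\|a\|^2}(w_1+\eta_1 a)$, for which $a^\top f_1(x)=-b$, $w_j^\top f_1(x)=w_j^\top x$ when $j\ge 2$, and $w_1^\top f_1(x)\ge\epsilon_0\|w_1\|^2/(\eta_1\|a\|^2)-M$; taking $\eta_1$ small forces every point of $\mathcal Q_1\defeq f_1(\mathcal R)$ to have $w_1$-coordinate $>2M$, with $w_0$-coordinate frozen at $-b$ and $w_2,\dots,w_{n-1}$-coordinates still in $[-M,M]$. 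Next, for $j=2,\dots,n$, I would apply \cref{lem:tool1} to the half-space $\mathcal H_j^+\defeq\{x:w_{j-1}^\top x\le\tfrac32 M\}$ with direction $c_j=-w_{j-1}+\lambda_j w_j$ ($\lambda_j>0$ large) if $j<n$ and $c_j=-w_{j-1}$ if $j=n$: because $E\subset\mathcal H_j^+$ (its $w_{j-1}$-coordinates are $\le M<\tfrac32 M$) while the running image $\mathcal Q_{j-1}$ lies strictly outside $\mathcal H_j^+$ (its $w_{j-1}$-coordinates exceed $2M$), the network $f_j$ fixes $E$, freezes the $w_{j-1}$-coordinate of the image at $\tfrac32 M$, leaves the already-frozen coordinates $w_0,\dots,w_{j-2}$ and the bounded coordinates $w_{j+1},\dots,w_{n-1}$ untouched, and---for $j<n$, by taking $\lambda_j$ large---pushes the $w_j$-coordinate of $\mathcal Q_j\defeq f_j(\mathcal Q_{j-1})$ above $2M$. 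After $f_n$ every coordinate is frozen, so $\mathcal Q_n$ is a single point $v_{k+1}$ with $w_{n-1}^\top v_{k+1}=\tfrac32 M>M\ge\sup_{x\in E}w_{n-1}^\top x$, hence $v_{k+1}\notin E$. Taking $f\defeq f_n\circ\cdots\circ f_1$ (a width-$n$ $\relu$ network; at each step \cref{lem:tool1} is invoked on a ball large enough to contain $E$ and the running image) then yields $f|_E=\mathrm{id}$, $\mathcal S\subset\mathcal P\setminus\mathcal H^+$ with the required measure bound, and $f(\mathcal S)=\{v_{k+1}\}$ with $v_{k+1}\notin(\mathcal P\cap\mathcal H^+)\cup\{v_1,\dots,v_k\}$. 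The companion \cref{lem:move-pt} can be proved by the same folding device.

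\textbf{Main obstacle.} The delicate part is the order in which the constants $\epsilon_0,\eta_1,\lambda_2,\dots,\lambda_{n-1}$ are fixed together with the invariant that, at each stage, the fold's half-space contains $E$ while the running image of $\mathcal R$ lies strictly on the far side with a uniform margin governed by $M$. This is exactly why an intermediate fold must not merely flatten a coordinate but simultaneously ``shoot'' the next coordinate of the image far out (the $\lambda_j w_j$ term) and why $f_1$ uses a direction almost parallel to $\partial\mathcal H^+$ (small $\eta_1$). No $\mu_n$-mass is lost because $\mathcal S=\mathcal R$ is fixed at the outset and never shrunk; the remainder---checking, after each $f_j$, precisely which coordinates are frozen, which exceed $2M$, and which stay in $[-M,M]$---is routine bookkeeping.
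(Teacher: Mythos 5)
Your proposal is correct and follows essentially the same strategy as the paper: take $\mathcal S$ to be the portion of $\mathcal P\setminus\mathcal H^+$ at distance at least $\epsilon_0$ from the boundary hyperplane (capturing a $\delta$-fraction of the measure by continuity of the Lebesgue measure), then compose $n$ applications of \cref{lem:tool1}, the first flattening the normal coordinate while displacing the image far from the set to be preserved, and the remaining ones collapsing the other coordinates one by one while fixing $(\mathcal P\cap\mathcal H^+)\cup\{v_1,\dots,v_k\}$, which always stays inside each fold's half-space. The only cosmetic difference is that the paper's first fold pushes all remaining coordinates out of range at once (so the later folds are plain projections), whereas you shoot one coordinate out per fold via the $\lambda_j w_j$ term; both variants work.
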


Consider the case $n\ge2$.
By repeatedly applying \cref{lem:move-pt}, one can construct a $\relu$ network $g_1$ of width $n$ such that
$g_1(x)=x$ for all $x\in\mathcal P$, $g_1(u_1),\dots,g_1(u_k)\in\mathcal H^+\setminus\mathcal P$, and $g_1(u_1),\dots,g_1(u_k)$ are distinct.
Let $v_i=g_1(u_i)$ for all $i\in[k]$.
Next, we apply \cref{lem:poly-to-point} to $\mathcal P$, $v_1,\dots,v_k$, and $\mathcal H^+$. Then, one can construct a $\relu$ network $g_2$ such that $g_2(x)=x$ for all $x\in(\mathcal P\cap\mathcal H^+)\cup\{v_1,\dots,v_k\}$ and $g_2(\mathcal S)=\{v_{k+1}\}$ for some $v_{k+1}\notin(\mathcal P\cap\mathcal H^+)\cup\{v_1,\dots,v_k\}$ and $\mathcal S\subset\mathcal P\setminus\mathcal H^+$ with $\mu_n(\mathcal S)\ge\delta\cdot\mu_n(\mathcal P\setminus\mathcal H^+)$.
Choosing $f=g_2\circ g_1$ completes the proof.

For the case $n=1$, we can not directly apply \cref{lem:move-pt} and \cref{lem:poly-to-point}.
Nonetheless, by exploiting the inclusion map $\iota : x \mapsto (x,0)$ for $x \in \mathbb R$, we can also yield the $\relu$ network $f=g_2\circ g_1 \circ \iota$ of width $2$, {which satisfies the statement of \cref{lem:basic-encoder}}.

\begin{proof}[Proof of \cref{lem:move-pt}]
Let $a_1\in\mathbb R^n\setminus\{0\}$ and $b_1\in\mathbb R$ such that $\mathcal H^+=\{x\in\mathbb R^n:a_1^\top x+b_1\ge0\}$.
Since $\mathcal P$ and $\{z_l\}$ are disjoint closed convex sets, by the hyperplane separation theorem \citep{boyd04}, there exist $a_2\in\mathbb R^n\setminus\{0\}$ and $b_2\in\mathbb R$ such that 
\begin{align}
a_2^\top x+b_2>0~~\text{for all}~~x\in\mathcal P~~\text{and}~~a_2^\top z_l+b_2<0.\label{eq:lem:move-pt1}
\end{align}
Without loss of generality, we assume that $a_2$ is not in the span of $a_1$. Otherwise, we can consider a slightly perturbed version of $a_2$ that is not in the span of $a_1$ and achieves \eqref{eq:lem:move-pt1};
such perturbation always exists since $\mathcal P$ and $\{z_l\}$ are bounded.

Now, for $c\in\mathbb R^n\setminus\{0\}$ such that $a_2^\top c>0$, we apply \cref{lem:tool1} to construct a $\relu$ network $f_c$ of width $n$ of the following form:
\begin{align*}
f_c(x)=\begin{cases}
x~&\text{if}~a_2^\top x+b_2\ge 0\\
x-\frac{a_2^\top x+b_2}{a_2^\top c}\times c~&\text{if}~a_2^\top x+b_2<0
\end{cases}.
\end{align*}
Then, from our choice of $a_2$, $b_2$, and $f_c$, (i) $f_c(x)=x$ for all $x\in\mathcal P$.
Furthermore, consider $c\in\mathcal S$ where 
\begin{align*}
    \mathcal S\defeq\left\{x\in\mathbb R^n\setminus\{0\}:\frac{a_1^\top z_l+b_1}{a_2^\top z_l+b_2}\le\frac{a_1^\top x}{a_2^\top x},a_1^\top x>0,a_2^\top x>0\right\}.
\end{align*}
Then, we have
\begin{align*}
a_1^\top f_c(z_l)+b_1=a_1^\top z_l-\frac{a_2^\top z_l+b_2}{a_2^\top c}(a_1^\top c)+b_1\ge0
\end{align*}
where the inequality is from the definition of $\mathcal S$ and $a_2^\top z_l+b_2<0$.
In addition, for any $x\in\mathbb R^n$, it holds that $a_1^\top f_c(x)\ge a_1^\top x$. These inequalities imply that (ii) $f_c(z_1),\dots,f_c(z_l)\in\mathcal H^+$.
Furthermore, we have (iii) $f_c(z_1),\dots,f_c(z_k)\notin\mathcal P$: if $a_2^\top z_i+b_2\ge0$, then $f_c(z_i)=z_i\notin\mathcal P=f_c(\mathcal P)$; otherwise, then $a_2^\top f_c(z_i)+b_2=0<a_2^\top x+b_2=a_2^\top f_c(x)+b_2$ for all $x\in\mathcal P$.
Here, one can observe that $\mu_n(\mathcal S)>0$ (i.e., $\mathcal S$ is non-empty) since $a_1$ and $a_2$ are linearly independent. 

Lastly, we show that there exists $c\in\mathcal S$ such that $f_c(z_1),\dots,f_c(z_k)$ are distinct. From the definition of $f_c$ and since $z_i\ne z_j$ for all $i\ne j$, $f_c(z_i)=f_c(z_j)$ only if $z_i-z_j$ and $c$ are linearly dependent.
However, the set of vectors that is in the span of $z_i-z_j$ has zero measure with respect to $\mu_n$; however, $\mu_n(\mathcal S)>0$. This implies that there exists $c\in\mathcal S$ such that $f_c(z_1),\dots,f_c(z_k)$ are distinct.
By (i)--(iii), choosing such $c\in\mathcal S$ and $f=f_c$ completes the proof.
\end{proof}
\begin{proof}[Proof of \cref{lem:poly-to-point}]
Without loss of generality, we assume that the normal vector of the boundary of $\mathcal H^+$ is $(1,0,\dots,0)$, i.e.,
we consider
\begin{align*}
\mathcal H^+_1=\mathcal H^+=\{(x_1,\dots,x_n)\in\mathbb R^n:x_1+b_1\ge0\}
\end{align*}
for some $b_1\in\mathbb R$.
Let $\mathcal T=(\mathcal P\cap\mathcal H_1^+)\cup\{v_1,\dots,v_k\}$,
\begin{align*}
b_i=1-1\times\min_{(x_1,\dots,x_n)\in\mathcal T}x_i~~\text{and}~~\mathcal H_i^+=\{(x_1,\dots,x_n)\in\mathbb R^n:x_i+b_i\ge0\}
\end{align*}
for all $i\in[n]\setminus\{1\}$. We note that $b_i$ is well-defined since $\mathcal T$ is compact. Furthermore, from the definition of $\mathcal H_i^+$, it holds that $\mathcal T\subset\bigcap_{i=1}^n\mathcal H_i^+$.

For $\gamma>0$, let $\mathcal K_\gamma= {\mathcal P \cap} \{(x_1,\dots,x_n)\in\mathbb R^n:x_1+b_1+\gamma\le0\}$, i.e., $\mathcal K_\gamma\subset\mathcal P\setminus\mathcal H_1^+$.
Due to {the continuity of Lebesgue measure}, there exists a small enough $\gamma^*>0$ such that $\mu_n(\mathcal K_{\gamma^*})\ge\delta\cdot\mu_n(\mathcal P\setminus\mathcal H_1^+)$.
Here, we choose $\mathcal S=\mathcal K_{\gamma^*}$.
In detail, let $\gamma_n = 1/n$ and consider corresponding $\mathcal K_{\gamma_n}$ and an indicator function $\indc_{\mathcal K_{\gamma_n}}$.
Then, our choice $\indc_{\mathcal K_{\gamma_n}}$ is monotonically increasing on $\mathcal P\setminus \mathcal H_1^+$ and converges almost everywhere on $\mathcal P\setminus \mathcal H_1^+$ to $\indc_{\mathcal K_0}$ = $\indc_{\mathcal P \setminus \mathcal H_1^+}$.
Then, by Lebesgue's Monotone Convergence Theorem \citep{rudin},
$\mu_n(K_{\gamma_n})$ converges to $\mu_n(\mathcal P \setminus \mathcal H_1^+)$.
Namely, we can choose $N\in\mathbb N$ such that $\mu_n(\mathcal K_{\gamma_{N}}) \ge \delta \cdot \mu_n(\mathcal P \setminus \mathcal H_1^+)$.

Let $\beta_i=\sup_{(x_1,\dots,x_n)\in\mathcal K_{\gamma^*}}|x_i|$ for all $i\in[n]\setminus\{1\}$;  each $\beta_i$ is finite since $\mathcal K_{\gamma^*}$ is bounded. 
Now, using \cref{lem:tool1}, we construct a $\relu$ network $f_1$ of width $n$ of the following form: for $x=(x_1,\dots,x_n)\in\mathbb R^n$ and $c_1=\big(1,-\max\{b_2+\beta_2+1,1\}/\gamma^*,\dots,-\max\{b_n+\beta_n+1,1\}/\gamma^*\big)$,
\begin{align*}
f_1(x)=\begin{cases}
x~&\text{if}~x_1+b_1\ge 0\\
x-(x_1+b_1) c_1~&\text{if}~x_1+b_1<0
\end{cases}.
\end{align*}
From the definition of $f_1$, we have $f_1(x)=x\in\bigcap_{i=1}^n\mathcal H_i^+$ for all $x\in\mathcal T$.
In addition, from the definition of $\mathcal K_{\gamma^*}$, we have $x_1+b_1+\gamma^*\le0$ for all $(x_1,\dots,x_n)\in\mathcal K_{\gamma^*}$. This implies that $f_1(x)_1=-b_1$ and 
\begin{align*}
    f_1(x)_i&=x_i-\frac{-(x_1+b_1)\max\{b_i+\beta_i+1,1\}}{\gamma^*}\\
    &\le x_i-{\max\{b_i+\beta_i+1,1\}}\\
    &\le\beta_i-{\max\{b_i+\beta_i+1,1\}}\\
    &\le -b_i-1
\end{align*} for all $i\in[n]\setminus\{1\}$ and for all $x\in\mathcal K_{\gamma^*}$.
Here, the first inequality is from $-(x_1+b_1)\ge\gamma^*\ge0$ and $\max\{b_i+\beta_i+1,1\}>0$. We use $\beta_i\ge x_i$ for the second inequality.
This implies that
$f_1(\mathcal K_{\gamma^*})\cap(\bigcup_{i=2}^n\mathcal H_i^+)=\emptyset$.

Now, we construct $\relu$ networks $f_2,\dots,f_n$ of width $n$ using \cref{lem:tool1} as follows: for $x=(x_1,\dots,x_n)\in\mathbb R^n$ and $i\in[n]\setminus\{1\}$,
\begin{align*}
f_i(x)=\begin{cases}
x~&\text{if}~x_i+b_i\ge 0\\
(x_1,\dots,x_{i-1},-b_i,x_{i+1},\dots,x_n)~&\text{if}~x_i+b_i<0
\end{cases}.
\end{align*}
Here, one can observe that $f_i(x)=x\in\bigcap_{i=1}^n\mathcal H_i^+$ for all $x\in\mathcal T$ and $i\in[n]\setminus\{1\}$.
Furthermore, since $f_1(x)_1=-b_1$ for all $x\in\mathcal K_{\gamma^*}$ and
$f_1(\mathcal K_{\gamma^*})\cap(\bigcup_{i=2}^n\mathcal H_i^+)=\emptyset$, we have 
\begin{align*}
f_n\circ\cdots\circ f_2\circ f_1(\mathcal K_{\gamma^*})=(-b_1,-b_2,\dots,-b_n).
\end{align*}
Since $(-b_1,\dots,-b_n)\notin\mathcal T$ (e.g., $\min_{(x_1,\dots,x_n)\in\mathcal T}x_2=1-b_2>-b_2$), choosing $f=f_n\circ\cdots\circ f_1$ and $v_{k+1}=(-b_1,\dots,-b_n)$ completes the proof.
\end{proof}

\subsection{Proof of \cref{lem:distinct-innerprod}}\label{sec:pflem:distinct-innerprod}

Let $\mathcal H_{ij}=\{x\in\mathbb R^n:x^\top(v_i-v_j)=0\}$ for all $i<j$.
Then,
$a^\top v_1,\dots,a^\top v_k$ are distinct if and only if $a\notin\bigcup_{i<j}\mathcal H_{ij}$.
However, since $\mu_n(\mathcal H_{ij})=0$, we have $\mu_n(\bigcup_{i<j}\mathcal H_{ij})=0$, i.e., there exists $a\in\mathbb R^n\setminus(\bigcup_{i<j}\mathcal H_{ij})$.
This completes the proof.

\newpage
\section{Proof of lower bounds in \cref{thm:lp-ub} and \cref{cor:general-lp}}\label{sec:pf:general-lb}

We first introduce the following lemmas.

\begin{lemma}[Lemma~1 in \citep{cai23}]\label{lem:dxdy}
For any activation function, networks of width $\max\{d_x,d_y\}$ are not dense in both $L^p(\mathcal K, \mathbb R^{d_y})$ and $C(\mathcal K, \mathbb R^{d_y})$.
\end{lemma}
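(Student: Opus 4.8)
The plan is to prove the (equivalent) statement that $\sigma$ networks of width at most $\max\{d_x,d_y\}-1$ are not dense in $L^p(\mathcal K,\mathbb R^{d_y})$ nor in $C(\mathcal K,\mathbb R^{d_y})$, i.e.\ $w_{\min}\ge\max\{d_x,d_y\}$, for \emph{every} activation $\sigma$. Write $w=\max\{d_x,d_y\}-1=\max\{d_x-1,\,d_y-1\}$, so that a network of width $\le w$ has either all hidden widths $\le d_x-1$ or all hidden widths $\le d_y-1$. I would split into these two regimes and exhibit, in each, a continuous target $f^*$ that no network in that regime can approximate well. The guiding principle is that the lower bounds I derive depend only on the dimension constraints forced by the form $f=t_L\circ\phi_{L-1}\circ\cdots\circ\phi_1\circ t_1$ --- not on the depth, the precise hidden widths, or $\sigma$ --- and in particular never use continuity of $\sigma$; this is what makes the argument activation-agnostic.

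\textbf{Regime: first hidden width $d_1\le d_x-1$.} Here $f=F\circ t_1$ with $t_1:\mathbb R^{d_x}\to\mathbb R^{d_1}$ affine and $F$ an arbitrary map, so $f$ is constant on each fibre of $t_1$; the fibres are parallel affine subspaces of dimension $\ge d_x-d_1\ge1$, all sharing a unit direction $v$ in the kernel of the linear part of $t_1$. I would take $f^*(x)=(\|x\|_2^2,0,\dots,0)$. Along any segment of length $\ell$ in direction $v$, the first coordinate of $f^*$ is $t\mapsto t^2+(\text{affine in }t)$, whose $L^p$-distance to the constant functions is a fixed positive number $c_p(\ell)$ (increasing in $\ell$) and whose $\sup$-distance to the constants is $\ge\ell^2/8$, since $t\mapsto t^2$ is not affine. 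Fixing a small ball $B(x^\star,\delta)\subset\mathcal K$, the $v$-lines that meet $B(x^\star,\delta)$ in a segment of length $\ge\delta$ form, when parametrized by their intersection with $v^\perp$, a set of positive $(d_x-1)$-dimensional measure; integrating the per-segment bound over this set (Fubini along $v$) gives $\|f-f^*\|_p\ge\varepsilon_p$ and $\|f-f^*\|_\infty\ge\delta^2/8$, with $\varepsilon_p>0$ depending only on $\delta,p,d_x$. (The degenerate case $L=1$, where $f$ is affine, is covered too, since an affine function cannot approximate $\|x\|_2^2$.)

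\textbf{Regime: last hidden width $d_{L-1}\le d_y-1$.} Here $f=t_L\circ G$ with $t_L:\mathbb R^{d_{L-1}}\to\mathbb R^{d_y}$ affine, so $f(\mathbb R^{d_x})$ lies in a fixed affine subspace $A\subset\mathbb R^{d_y}$ with $\dim A\le d_y-1$, whence $\|f(x)-f^*(x)\|\ge\dist(f^*(x),A)$ pointwise. I would build $f^*$ with $d_y+1$ ``plateaus'': pick distinct $z_0,\dots,z_{d_y}\in\mathcal K$ with disjoint balls $B_i=B(z_i,\eta)\subset\mathcal K$, let $s_0,\dots,s_{d_y}$ be the vertices of a regular $d_y$-simplex, and let $f^*$ be any continuous map with $f^*\equiv s_i$ on $B_i$ (which exists by the pasting/Tietze lemma). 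Since $s_0,\dots,s_{d_y}$ are affinely independent, no affine hyperplane contains all of them, and a compactness argument over hyperplanes gives $\rho\defeq\inf_H\max_i\dist(s_i,H)>0$; as any $A$ with $\dim A\le d_y-1$ lies in some hyperplane $H$, we get $\max_i\dist(s_i,A)\ge\rho$. Therefore $\|f-f^*\|_\infty\ge\max_i\dist(s_i,A)\ge\rho$ and $\|f-f^*\|_p^p\ge\sum_i\mu_{d_x}(B_i)\,\dist(s_i,A)^p\ge\mu_{d_x}(B_{i^\star})\,\rho^p>0$, uniformly over all such $f$. Combining the two regimes proves \cref{lem:dxdy}.

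\textbf{Main obstacle.} The real difficulty is \emph{uniformity}: I must lower bound $\|f-f^*\|$ by one positive constant valid simultaneously for all networks of width $\le w$ --- all depths, all sub-threshold hidden-width profiles, all $\sigma$ --- rather than for a fixed $f$. This is exactly why two distinct targets are needed, each relying on a single structural fact (a sub-$d_x$-width net is constant on $\ge1$-dimensional parallel fibres; a sub-$d_y$-width net has range in a proper affine subspace). The remaining technical points are routine: the Fubini/ball argument turning the per-segment bound into a domain-wide bound, and the compactness argument for $\rho>0$; both need only that $\mathcal K$ is a compact set with nonempty interior (the standard setting, e.g.\ $\mathcal K=[0,1]^{d_x}$), which I would assume from the start.
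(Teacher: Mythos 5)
Your proposal is correct and takes essentially the same route the paper indicates for this imported lemma (cited from \citet{cai23} and sketched in \cref{sec:main}): split on whether the first hidden width is below $d_x$, forcing $f=g(Mx)$ to be constant along fibres of a rank-deficient $M$ and failing against $\|x\|_2^2$, or the last hidden width is below $d_y$, confining the range to a proper affine subspace that must miss one of $d_y+1$ affinely independent target values; your quantitative details (the second-difference/Fubini bound and the compactness argument for $\rho>0$) are sound. The only point to flag is that the lemma as transcribed reads ``width $\max\{d_x,d_y\}$'' where the intended statement, consistent with its use to conclude $w_{\min}\ge\max\{d_x,d_y\}$, is ``width at most $\max\{d_x,d_y\}-1$,'' and that is exactly what you prove.
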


\begin{lemma}\label{lem:counter-ex-monotone}
For any continuous monotone $\psi$, $\psi$ networks of width $1$ are not dense in $L^p([0,1],\mathbb R)$.
\end{lemma}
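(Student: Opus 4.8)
The plan is to show, in two steps, that a width-$1$ $\psi$ network is always a monotone function on $[0,1]$, and that no such function can come $L^p$-close to a suitable bounded target.

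\textbf{Step 1: every width-$1$ $\psi$ network is monotone.} A width-$1$ $\psi$ network $f:[0,1]\to\mathbb R$ has the form $f=t_L\circ\psi\circ t_{L-1}\circ\cdots\circ\psi\circ t_1$, where each $t_\ell:\mathbb R\to\mathbb R$ is a scalar affine map $x\mapsto a_\ell x+b_\ell$ and each hidden application of $\phi_\ell$ is just the scalar map $\psi$. Each $t_\ell$ is monotone (non-decreasing if $a_\ell\ge0$, non-increasing if $a_\ell\le 0$; constant if $a_\ell=0$), and $\psi$ is monotone by hypothesis; since a composition of monotone maps is monotone, $f$ is a (continuous) monotone function, possibly constant. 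This is the only place where the width restriction is used, and the degenerate zero-slope cases are harmless since constants are monotone.

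\textbf{Step 2: a bounded target far from all monotone functions.} I would take $f^*=\indc_{[1/3,2/3]}\in L^p([0,1],\mathbb R)$ and claim $\|f^*-g\|_p\ge \tfrac12\cdot 3^{-1/p}$ for every monotone $g:[0,1]\to\mathbb R$. By the reflection $x\mapsto 1-x$, which fixes $f^*$, it suffices to treat non-decreasing $g$. Set $t=g(2/3)$. If $t\le \tfrac12$, then $g\le t\le\tfrac12$ on $[1/3,2/3]$, so there $|f^*-g|=|1-g|\ge\tfrac12$, giving $\|f^*-g\|_p^p\ge\int_{1/3}^{2/3}(\tfrac12)^p\,dx=\tfrac{1}{3\cdot 2^p}$. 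If $t>\tfrac12$, then $g\ge t>\tfrac12$ on $[2/3,1]$, so there $|f^*-g|=|g|>\tfrac12$, again giving $\|f^*-g\|_p^p\ge\tfrac{1}{3\cdot 2^p}$. Hence $\|f^*-g\|_p\ge(3\cdot 2^p)^{-1/p}=\tfrac12\cdot 3^{-1/p}$ in all cases.

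Combining the two steps: every width-$1$ $\psi$ network is a monotone function, hence lies at $L^p$-distance at least $\tfrac12\cdot 3^{-1/p}$ from $f^*$; in particular no such network approximates $f^*$ within $\varepsilon:=\tfrac14\cdot 3^{-1/p}$, so width-$1$ $\psi$ networks are not dense in $L^p([0,1],\mathbb R)$. There is no real obstacle in this argument; the only points requiring a little care are verifying that the composition stays monotone in the degenerate zero-slope cases and remembering to reduce the non-increasing case to the non-decreasing one by the reflection $x\mapsto 1-x$.
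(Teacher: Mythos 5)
Your proof is correct and takes essentially the same approach as the paper: both arguments reduce to the observation that a width-$1$ network is a continuous monotone function, take the indicator of the middle third of $[0,1]$ as the hard target, and obtain a uniform lower bound via a case analysis on the network's value at $2/3$. The only cosmetic difference is that the paper bounds the $L^1$ distance and invokes H\"older's inequality, whereas you bound the $L^p$ distance directly.
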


From \cref{lem:dxdy} and \cref{lem:counter-ex-monotone}, the proof of the lower bounds in \cref{thm:lp-ub} and \cref{cor:general-lp} directly follow.

\begin{proof}[Proof of \cref{lem:counter-ex-monotone}]
In this proof, we show that for any continuous monotone $\psi$, there exists a $L^p$ measurable function $f^* : [0,1] \to \mathbb R$ that cannot be approximated by any $\psi$ network of width $1$, say $f$, within $1/6$ error measured by $L^p$ norm.
Here, by the H{\"o}lder's inequality, it suffices to show that $ \| f - f^*\|_1 \ge 1/6$.

Since any compositions of continuous monotone functions are continuous monotone, without loss of generality, we assume that a $\varphi$ network $f$ is a continuous and monotonically increasing function.

Consider $f^* : [0,1] \to \mathbb R$ defined as
\begin{align*}
    f^*(x) = 
    \begin{cases}
    0~&\text{if}~x \in [0,1/3] \cup [2/3,1]\\
    1~&\text{if}~x \in (1/3,2/3)
    \end{cases},
\end{align*}
and let $f(2/3) = c$ for some $c \in \mathbb R$.
Then if $c \le 0$, 
\begin{align*}
    \int_{[0,1]} | f - f^* | dx \ge \int_{[1/3,2/3]} |  f - f^* | dx \ge {\int_{[1/3,2/3]} |f^*| dx = \frac{1}{3}}.
\end{align*}

Likewise, if $c \ge 1$,
\begin{align*}
    \int_{[0,1]} | f - f^* | dx \ge \int_{[2/3,1]} | f - f^* | dx \ge {\int_{[2/3,1]} | 1 - f^* | = \frac{1}{3}}.
\end{align*}

Furthermore, if $c \in (0,1)$,
\begin{align*}
    \int_{[0,1]} | f - f^* | dx &\ge \int_{[1/3,2/3]} | f - f^* | dx + \int_{[2/3,1]} | f - f^* | dx \\
    &\ge \int_{[1/3,2/3]} | c - f^* | dx + \int_{[2/3,1]} | c - f^* | dx = \frac{1}{3}(1-c) + \frac{1}{3}c = \frac{1}{3}.
\end{align*}

Hence, for any continuous monotone $\varphi$ network can not approximate $f^*$ within $1/6$ error, which completes the proof.
\end{proof}

\newpage
\section{Proof of upper bound in \cref{cor:general-lp}}\label{sec:pfcor:general-lp-ub}

\subsection{Additional notations}

Throughout this section, we use $\relul$ for the set of $\relu$-like activation functions of our interest, defined as
\begin{align*}
\relul\defeq\{\softplus,\text{Leaky-}\relu,\elu,\celu,\selu,\gelu,\silu,\mish\}.
\end{align*}
For $n \in \mathbb N$ and $f : \mathbb R \to \mathbb R$, we denote $f^n(x) \defeq f \circ \cdots \circ f$ the $n$-th iterate of the function $f$.
For any continuous function $f$ on a compact domain $\mathcal K \subset \mathbb R^{n}$, $\omega_{p,f}$ denotes the modulus of continuity of $f$ in the $p$-norm: $\|f(x)-f(x')\|_p\le\omega_{p,f}(\|x-x'\|_p)$ for all $x,x'\in\mathcal K$. We note that such $\omega_{p,f}$ is well-defined on a compact domain since $f$ is uniformly continuous on a compact domain.

\subsection{Proof of upper bound in \cref{cor:general-lp}}\label{sec:pfcor:general-lp}

In this proof, we show that for any $\varepsilon > 0$, $\varphi\in\relul$, and $\relu$ network $f$ of width $w$, there exists a $\varphi$ network $g$ with the same width $w$ such that
\begin{align*}
    \| f - g \|_p \le \varepsilon.
\end{align*}
Then, combining the above bound and the upper bound $w_{\min}\le\{ d_x, d_y, 2\}$ in \cref{thm:lp-ub} completes the proof of the upper bound in \cref{cor:general-lp}.

To this end, we first introduce the following lemma.
The proof of \cref{lem:relu-like-converge} is presented in \cref{sec:pflem:relu-like-converge}.

\begin{lemma}\label{lem:relu-like-converge}
For any given compact set $\mathcal K \subset \mathbb R$ and activation function $\varphi \in \relul$, there exists a sequence $\{h_n\}$ of $\varphi$ networks of width $1$ such that it uniformly converges to $\relu$ on $\mathcal K$.
\end{lemma}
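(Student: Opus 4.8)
Recall that a $\varphi$ network of width $1$ is precisely an iterated composition $x\mapsto a_L\,\varphi\!\big(a_{L-1}\varphi(\cdots\varphi(a_1x+b_1)\cdots)+b_{L-1}\big)+b_L$, so we are free to rescale the input and output of $\varphi$ and to compose $\varphi$ with itself. Fix $M>0$ with $\mathcal K\subset[-M,M]$. The plan is to split $\relul$ into two families: a ``rescaling'' family, handled by a single application of $\varphi$ with a large input gain, and $\text{Leaky-}\relu$, handled by self-iteration.

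\textbf{Rescaling family.} For $\varphi\in\{\softplus,\elu,\celu,\selu,\gelu,\silu,\mish\}$, let $s_\varphi>0$ be the slope of the positive-side asymptote of $\varphi$ (so $s_\varphi=1$ for all of these except $s_\selu=\lambda$), and set
\[
h_n(x)\;=\;\frac{1}{s_\varphi\,n}\,\varphi(nx),
\]
a width-one network (input map $x\mapsto nx$, one application of $\varphi$, output map $y\mapsto y/(s_\varphi n)$). Writing $\varphi(y)=s_\varphi y+\rho_\varphi(y)$, I would reduce the claim to two elementary facts: (i) $\sup_{y\ge0}|\rho_\varphi(y)|<\infty$ and (ii) $\sup_{y\le0}|\varphi(y)|<\infty$. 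Indeed, (i) gives $|h_n(x)-\relu(x)|=\tfrac{1}{s_\varphi n}|\rho_\varphi(nx)|$ for $x\in[0,M]$ and (ii) gives $|h_n(x)-\relu(x)|=\tfrac{1}{s_\varphi n}|\varphi(nx)|$ for $x\in[-M,0]$, so $\sup_{x\in\mathcal K}|h_n(x)-\relu(x)|\to0$. Both facts are read off from the formulas in \cref{sec:activation}: for $\softplus$, $|\rho_\varphi(y)|=\tfrac1\alpha\log(1+e^{-\alpha y})\le\tfrac{\log 2}{\alpha}$ and $\varphi$ maps $(-\infty,0]$ into $[0,\tfrac{\log 2}{\alpha}]$; for $\elu,\celu,\selu$ one has $\rho_\varphi\equiv0$ on $[0,\infty)$ and $\varphi$ valued in a bounded interval on $(-\infty,0]$; for $\gelu,\silu,\mish$ both facts follow from the standard tail estimates $1-\Phi(t)\le e^{-t^2/2}$, $\sigmoid(-t)\le e^{-t}$, $1-\tanh(t)\le 2e^{-2t}$ (for $t>0$), together with $\softplus(t;1)\ge t$ and $\sup_{t\ge0}te^{-t}<\infty$.

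\textbf{Leaky-ReLU via iteration.} For $\varphi=\text{Leaky-}\relu(\cdot;\alpha)$ with $\alpha\in(0,1)$, input rescaling alone fails because the negative branch of $\varphi$ is unbounded; instead I take $h_n=\varphi\circ\cdots\circ\varphi$ ($n$-fold, all affine layers the identity), again a width-one network. A one-line induction shows $h_n(x)=x$ for $x\ge0$ and $h_n(x)=\alpha^n x$ for $x<0$, hence $\sup_{x\in\mathcal K}|h_n(x)-\relu(x)|\le\alpha^n M\to0$.

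\textbf{Main obstacle.} The construction is elementary; the only nontrivial bookkeeping is verifying facts (i)--(ii) for the three non-monotone smooth activations $\gelu,\silu,\mish$, where one must control the sub-linear growth on $[0,\infty)$ and the super-polynomial decay on $(-\infty,0]$ via Gaussian, sigmoid, and $\tanh$ tail bounds. The $\text{Leaky-}\relu$ case is the only one that genuinely needs the separate iteration idea, precisely because it is the sole member of $\relul$ whose negative branch grows without bound.
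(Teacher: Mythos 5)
Your proof is correct, and for most of the eight activations it coincides with the paper's: the paper also takes $h_n(x)=\tfrac1n\varphi(nx)$ for $\softplus$, $\gelu$, $\silu$, $\mish$ and the $n$-fold self-composition for $\text{Leaky-}\relu$. The genuine difference is your treatment of $\elu$, $\celu$, and $\selu$. The paper places these in the iteration family and argues that the sequence of uniform bounds $B_1=\alpha$, $B_{n+1}=\alpha(1-e^{-B_n})$ decreases with infimum $0$; your proof instead observes that these activations are exactly linear on $[0,\infty)$ and uniformly bounded on $(-\infty,0]$, so the same input-gain rescaling $h_n(x)=\tfrac{1}{s_\varphi n}\varphi(nx)$ that works for $\softplus$ works for them too. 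This buys two things. First, uniformity: your reduction to the two facts $\sup_{y\ge0}|\varphi(y)-s_\varphi y|<\infty$ and $\sup_{y\le0}|\varphi(y)|<\infty$ handles seven of the eight cases by one computation, isolating $\text{Leaky-}\relu$ as the single activation whose unbounded negative branch forces the iteration idea. Second, robustness: the paper's iteration argument for $\elu$ (and hence $\selu$) is delicate when $\alpha>1$, since the recursion $B\mapsto\alpha(1-e^{-B})$ then has a strictly positive attracting fixed point and the iterates of the negative branch converge to a nonzero limit rather than to $0$; your rescaling construction is insensitive to the value of $\alpha>0$ and sidesteps this issue entirely. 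The remaining bookkeeping you flag for $\gelu$, $\silu$, $\mish$ (Gaussian, sigmoid, and $\tanh$ tail bounds plus $\sup_{t\ge0}te^{-t}<\infty$) matches the estimates the paper carries out explicitly.
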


\cref{lem:relu-like-converge} states that for any given compact $\mathcal K \subset \mathbb R$, for each $\relu$-like activation function $\varphi$, and some fixed $\delta >0$, there exist a sequence $\{ h_n\}$ of $\varphi$ networks of width $1$ and $N \in \mathbb N$ such that $\| \relu(x) - h_N(x) \|_\infty \le \delta$ on all $n\ge N$ and $x \in \mathcal K$; we will assign an explicit value to $\delta$ later.
We note that it suffices to show uniform convergence on an arbitrary compact set $\mathcal K$ since functions of our interests are defined on compact domains.

Here, we denote a $\relu$ network $f$ as below, recalling (\ref{eq:def-nn}):
\begin{align*}
    f = t_L \circ \phi_{L-1} \circ \cdots \circ t_2 \circ \phi_1 \circ t_1 
\end{align*}
where $L\in \mathbb N$ is the number of layers,  $t_\ell : \mathbb R^{d_{\ell-1}} \to \mathbb R^{d_\ell}$ is an affine transformation, and $\phi_\ell(x_1,\dots,x_{d_\ell}) = \left( \relu(x_1), \dots, \relu(x_{d_\ell}) \right)$ 
for all $\ell\in[L]$.
And, for each $\relu$-like activation function $\varphi$, we choose a $\varphi$ network $g$ via applying the same affine maps $t_1, \dots, t_L$ and $h_n$, which is satisfying \cref{lem:relu-like-converge}, such that
\begin{align*}
    g = t_L \circ \rho_{L-1} \circ \cdots \circ t_2 \circ \rho_1 \circ t_1 
\end{align*}
where $\rho_\ell(x_1,\dots,x_{d_\ell}) = \left( h_n(x_1), \dots, h_n(x_{d_\ell}) \right)$ for all $\ell\in[L]$.
We further denote $f_\ell$ and $g_\ell$ by the first $\ell-1$ layers of $f$ and $g$ with the subsequent affine layer $t_\ell$, respectively:
\begin{align*}
    f_{\ell} =  t_{\ell} \circ \phi_{\ell-1} \circ \cdots \circ \phi_1 \circ t_1 \quad\text{and} \quad g_{\ell} = t_{\ell} \circ \rho_{\ell-1} \circ \cdots \circ \rho_1 \circ t_1
\end{align*}
Then, for each $\ell \in [L] \setminus \{ 1 \}$, we have
\begin{align}
\|f_\ell-g_\ell\|_p&= \| t_\ell \circ \phi_{\ell-1} \circ f_{\ell-1} - t_\ell \circ \rho_{\ell-1} \circ g_{\ell-1} \|_p \nonumber \\
&\le \omega_{t_\ell,p} \left( \| \phi_{\ell-1} \circ f_{\ell-1} - \rho_{\ell-1} \circ g_{\ell-1} \|_p \right) \nonumber \\
&\le \omega_{t_\ell,p} \left(  \| \phi_{\ell-1} \circ f_{\ell-1} - \phi_{\ell-1} \circ g_{\ell-1} \|_p +  \| \phi_{\ell-1} \circ g_{\ell-1} - \rho_{\ell-1} \circ g_{\ell-1} \|_p \right) \nonumber \\
&\le \omega_{t_\ell,p} \Bigg( \| \phi_{\ell-1} \circ f_{\ell-1} - \phi_{\ell-1} \circ g_{\ell-1} \|_p \nonumber\\
&\qquad \qquad \qquad + \left( \int_{[0,1]^{d_x}} \| \phi_{\ell-1} \circ g_{\ell-1} (x) - \rho_{\ell-1} \circ g_{\ell-1} (x) \|_p^p d \mu_{d_x}   \right)^{1/p} \Bigg)\nonumber\\
&= \omega_{t_\ell,p} \Bigg( \| \phi_{\ell-1} \circ f_{\ell-1} - \phi_{\ell-1} \circ g_{\ell-1} \|_p \nonumber\\
&\qquad \qquad \qquad + \Bigg( \int_{[0,1]^{d_x}} \sum_{i=1}^{d_{\ell-1}} \Big( \relu( g_{\ell-1}(x)_i ) - h_n(g_{\ell -1}(x)_i) \Big)^p d \mu_{d_x}   \Bigg)^{1/p} \Bigg)\nonumber
\end{align}
We note that $\omega_{t_\ell,p}$ is well-defined on $[0,1]^{d_x}$ since $t_\ell$ is uniformly continuous on $[0,1]^{d_x}$.

For each $i \in [d_{\ell-1}]$, by \cref{lem:relu-like-converge}, there exists $N_i \in \mathbb N$ such that
\begin{align*}
    \| \relu( g_{\ell -1}(x)_i ) - h_n( g_{\ell -1}(x)_i ) \|_\infty \le \delta
\end{align*}
for all $n \ge N_i$ and $x \in [0,1]^{d_x}$. 
Moreover, from the definition of $\phi_{\ell-1}$, we have
\begin{align*}
    \| \phi_{\ell-1} \circ f_{\ell-1} - \phi_{\ell-1} \circ g_{\ell-1} \|_p \le \omega_{\phi_{\ell-1},p}(\| f_{\ell-1} -g_{\ell-1}\|_p) \le \| f_{\ell-1} -g_{\ell-1}\|_p.
\end{align*}
Therefore, for $n \ge \max \{ N_1, \dots, N_{d_{\ell -1}} \}$, we have
\begin{align}
\|f_\ell-g_\ell\|_p
&\le \omega_{t_\ell,p} \Bigg( \| \phi_{\ell-1} \circ f_{\ell-1} - \phi_{\ell-1} \circ g_{\ell-1} \|_p \nonumber\\
&\qquad \qquad \qquad + \Bigg( \int_{[0,1]^{d_x}} \sum_{i=1}^{d_{\ell-1}} \Big( \relu( g_{\ell-1}(x)_i ) - h_n(g_{\ell -1}(x)_i) \Big)^p d \mu_{d_x}   \Bigg)^{1/p} \Bigg)\nonumber\\
& \le  \omega_{t_\ell,p} \left( \| f_{\ell-1} - g_{\ell-1} \|_p +  \delta \times d_{\ell-1}^{1/p}\right), \label{ineq:lp-norm}
\end{align}
with $\| f_1 - g_1 \|_p = \| t_1 - t_1 \|_p = 0.$

Consequently, by iteratively applying (\ref{ineq:lp-norm}), we get
\begin{align}
\|f-g\|_p & \le  \omega_{t_L,p} \left( \| f_{L-1} - g_{L-1} \|_p +  \delta \times d_{L-1}^{1/p} \right) \nonumber \\
&\le \omega_{t_L,p} \left( \omega_{t_{L-1},p} \left( \| f_{L-2} - g_{L-2} \|_p +  \delta \times d_{L-2}^{1/p} \right) +  \delta \times d_{L-1}^{1/p}\right) \nonumber \\
&~~\vdots \nonumber \\
&\le\omega_{t_L,p} \left( \omega_{t_{L-1},p} \left( \cdots \omega_{t_2,p}\left( \| f_{1} - g_{1} \|_p +  \delta \times d_1^{1/p}\right) \cdots +  \delta \times d_{L-2}^{1/p} \right) +  \delta \times d_{L-1}^{1/p}\right) \nonumber \\
&=\omega_{t_L,p} \left( \omega_{t_{L-1},p} \left( \cdots \omega_{t_2,p}\left( \delta \times d_1^{1/p}\right) \cdots +  \delta \times d_{L-2}^{1/p} \right) +  \delta \times d_{L-1}^{1/p}\right) .\label{ineq:genearl-lp-ub}
\end{align}
Thus, we can bound the right-hand side (\ref{ineq:genearl-lp-ub}) of the above inequality within any $\varepsilon >0 $, by choosing sufficiently small $\delta > 0$.
Hence, it completes the proof of the upper bound in \cref{cor:general-lp}.

\subsection{Proof of \cref{lem:relu-like-converge}}\label{sec:pflem:relu-like-converge}

In this section, we explicitly construct a sequence of $\varphi$ network $\{ h_n \}$ satisfying \cref{lem:relu-like-converge}.
Namely, we show that for any $\varepsilon > 0$, for any compact set $\mathcal K \subset \mathbb R$, there exists $N\in\mathbb N$ such that $| h_n(x) - \relu(x) | \le \varepsilon$ for all $n \ge N$ and $x \in \mathcal K$.
Without loss of generality, we assume that $\mathcal K = [-m, M]$ for some $m,M > 0$.

$1$. $\varphi = \softplus$:

In this case, we claim that $h_n(x) = (t_2 \circ \varphi \circ t_1) (x)$ completes the proof for $\softplus$, where $t_1(x)=nx$ and $t_2(x) = x/n$. If $x \ge 0$, by the Mean Value Theorem, we have
\begin{align*}
    | h_n(x) - \relu(x) | &= \frac{1}{\beta n} \log(1+\exp(\beta n x)) - \frac{1}{\beta n} \log(\exp(\beta n x))\\
    &\le \frac{1}{\beta n}.
\end{align*}
Otherwise, if $x < 0$,
\begin{align*}
    | h_n(x) - \relu(x) | = \frac{1}{\beta n} \log(1+\exp(\beta n x)) < \frac{1}{\beta n} \log(2)
\end{align*}
since $\softplus$ is strictly increasing. 
Hence, choosing sufficiently large $N \in \mathbb N$ such that $1/\beta N \le \varepsilon$, which completes the proof for $\softplus$.

$2$. $\varphi = \text{\rm Leaky-}\relu$:

In this case, we claim that $h_n(x) = \varphi^n(x)$ completes the proof for $\text{\rm Leaky-}\relu$.
Here, from the definition of $\text{\rm Leaky-}\relu$, we only consider for $x < 0$. Then,
\begin{align*}
    | h_n(x) - \relu(x) | = \alpha^n | x | \le \alpha^n \times m
\end{align*}
We note that $\alpha \in (0,1)$.
Hence, choosing sufficiently large $N \in \mathbb N$ such that $\alpha^N \times m \le \varepsilon$, which completes the proof for $\text{Leaky-}\relu$.

$3$. $\varphi = \elu$:

Similar to the case $\varphi = \text{\rm leaky-}\relu$, we claim that $h_n(x) = \varphi^n(x)$ completes the proof for $\elu$ and only consider for $x < 0$ from the definition of $\elu$.
Since $\elu$ is bounded below by $-\alpha$, strictly increasing, and $\elu(0) = 0$, we have
\begin{align*}
|h_1(x)| < \alpha &\Rightarrow |h_2(x)| < \alpha\left(1-\exp(-\alpha)\right) < \alpha   \\
&\Rightarrow |h_3(x)|< \alpha(1 - \exp(-\alpha\left(1-\exp(-\alpha)\right)))< \alpha\left(1-\exp(-\alpha)\right) 
\end{align*}
We note that the upper bound of the sequence $\{|h_n|\}$ is strictly decreasing and its infimum is equal to $0$.
Thus, by the monotone convergence theorem, there exists $N \in \mathbb N$ such that $| h_n(x) - \relu(x)|=| h_n(x) | \le \varepsilon$ for all $n \ge N$.
Hence, this completes the proof for $\elu$.

$4$. $\varphi = \celu$:

Since $\celu$ is a smooth variant of $\elu$, the proof technique for $\celu$ is the same as $\elu$.
Consider $h_n(x) = \varphi^n(x)$.
Then, for $x<0$, we have
\begin{align*}
|h_1(x)| < \alpha &\Rightarrow |h_2(x)| < \alpha(1 - 1/ e) < \alpha   \\
&\Rightarrow |h_3(x)|< \alpha(1-\exp(1/e-1)) < \alpha(1 - 1/ e)
\end{align*}
Likewise, we note that the upper bound of the sequence $\{|h_n|\}$ is strictly decreasing and its infimum is equal to $0$.
Thus, by the monotone convergence theorem, there exists $N \in \mathbb N$ such that $| h_n(x) - \relu(x)|=| h_n(x) | \le \varepsilon$ for all $n \ge N$.
Hence, this completes the proof for $\celu$.

$5$. $\varphi = \selu$:

From the definition of $\elu$ and $\selu$, we can represent $\selu$ as $\lambda \times \elu$.
Hence, from the proof for $\elu$, $h_n(x) = (t_{\lambda}\circ \varphi)^n(x)$ completes the proof for $\selu$, where $t_\lambda(x) = x/\lambda$.

$6$. $\varphi = \gelu$:

In this case, we claim that $h_n(x) = (t_2 \circ \varphi \circ t_1) (x)$ completes the proof for $\gelu$, where $t_1(x)=nx$ and $t_2(x) = x/n$.
If $x \ge 0$, we have
\begin{align*}
    | \varphi_n(x) - \relu(x) | = x (1 - \Phi(nx)) \le x\exp(-n^2x^2/2) \le \frac{1}{n\sqrt{e}}
\end{align*}
since $P(Z \ge t) \le \exp(-t^2/2)$ for all $t \ge 0$ where $Z \sim \mathcal N(0,1)$.
We note that the last inequality is derived from its derivative.
Otherwise, if $x < 0$,
\begin{align*}
    | \varphi_n(x) - \relu(x) | &= x \Phi(nx) \le x\exp(-n^2x^2/2) \le \frac{1}{n\sqrt{e}}
\end{align*}
since $P(Z \le t) \le \exp(-t^2/2)$ for all $t \le 0$ where $Z \sim \mathcal N(0,1)$. Hence, choosing sufficiently large $N \in \mathbb N$ such that $1/N \le \varepsilon$, which completes the proof for $\gelu$.

$7$. $\varphi = \silu$:

In this case, we claim that $h_n(x) = (t_2 \circ \varphi \circ t_1) (x)$ completes the proof for $\silu$, where $t_1(x)=nx$ and $t_2(x) = x/n$. 
If $x \ge 0$, we have
\begin{align*}
    | \varphi_n(x) - \relu(x) | &= x \left( 1 - \frac{1}{1+\exp(-nx)}  \right) = \frac{x \exp(-nx)}{1+\exp(-nx)}\\
    &< x \exp(-nx) \le \frac{1}{en}.
\end{align*}
Note that the last inequality is derived from its derivative. 
Otherwise, if $x < 0$,
\begin{align*}
    | \varphi_n(x) - \relu(x) | = \frac{-x}{1+\exp(-nx)}  \le \frac{1}{n}.
\end{align*}
since $1-x \le \exp(-x)$ for all $x \in \mathbb R$.
Hence, choosing sufficiently large $N \in \mathbb N$ such that $1/N \le \varepsilon$, which completes the proof for $\silu$.

$8$. $\varphi = \mish$:

In this case, we claim that $h_n(x) = (t_2 \circ \varphi \circ t_1) (x)$ completes the proof for $\mish$, where $t_1(x)=nx$ and $t_2(x) = x/n$.
If $x \ge 0$, we have
\begin{align*}
    | \varphi_n(x) - \relu(x) | &= x \left( 1 - \frac{(1 + \exp(nx))^2 -1}{(1 + \exp(nx))^2 +1}  \right)\\ &= \frac{2x}{(1 + \exp(nx))^2 +1} \le \frac{x}{1 + \exp(nx)} \le \frac{1}{n}.
\end{align*}
since $1+x \le \exp(x)$ for all $x\in\mathbb R$.
Otherwise, if $x < 0$,
\begin{align*}
    | \varphi_n(x) - \relu(x) | &= -x \times \frac{(1 + \exp(nx))^2 -1}{(1 + \exp(nx))^2 +1} \\ &\le -x \times \frac{(1 + \exp(nx))^2 -1}{2 + \exp(nx) }\\ &= -x\exp(nx) \le \frac{1}{en}.
\end{align*}
Note that the last inequality is derived from its derivative.
Hence, choosing sufficiently large $N \in \mathbb N$ such that $1/N\le \varepsilon$, which completes the proof for $\mish$.

\newpage
\section{Proof of \cref{lem:inj-approx}}\label{sec:pflem:inj-approx}

In this proof, we show that for any $\varepsilon>0$, $\sigma$ that can be uniformly approximated by some sequence of continuous injection, say $\varphi_n$, and $\sigma$ network $f$ of width $w$, there exists $\varphi_n$ network $g$ of the same width $w$ such that 
\begin{align*}
    \| f-g \|_\infty \le \varepsilon.
\end{align*}
From the assumption, there exists a sequence of continuous injection $\{ \varphi_n \}$ that uniformly converges to $\sigma$ on $\mathbb R$.
Namely, for any $\delta>0$, there exists $N \in \mathbb N$ such that $\| \varphi_n(x) - \sigma(x) \|_\infty \le \delta$ for all $n \ge N$ and $x \in \mathbb R$; we will assign an explicit value to $\delta$ later.

Now, we denote a $\sigma$ network $f$ as below, recalling (\ref{eq:def-nn}):
\begin{align*}
    f = t_L \circ \phi_{L-1} \circ \cdots \circ t_2 \circ \phi_1 \circ t_1 
\end{align*}
where $L\in \mathbb N$ is the number of layers,  $t_\ell : \mathbb R^{d_{\ell-1}} \to \mathbb R^{d_\ell}$ is an affine transformation, and $\phi_\ell(x_1,\dots,x_{d_\ell}) = \left( \sigma(x_1), \dots, \sigma(x_{d_\ell}) \right)$ 
for all $\ell\in[L]$.
And, we choose a $\varphi_n$ network 
such that
\begin{align*}
    g = t_L \circ \rho_{L-1} \circ \cdots \circ t_2 \circ \rho_1 \circ t_1 
\end{align*}
where $\rho_\ell(x_1,\dots,x_{d_\ell}) = \left( \varphi_n(x_1), \dots, \varphi_n(x_{d_\ell}) \right)$ for all $\ell\in[L]$.
We further denote $f_\ell$ and $g_\ell$ by the first $\ell-1$ layers of $f$ and $g$ with the subsequent affine layer $t_\ell$, respectively:
\begin{align*}
    f_{\ell} =  t_{\ell} \circ \phi_{\ell-1} \circ \cdots \circ \phi_1 \circ t_1 \quad\text{and} \quad g_{\ell} = t_{\ell} \circ \rho_{\ell-1} \circ \cdots \circ \rho_1 \circ t_1
\end{align*}

Then, for each $\ell \in [L] \setminus \{ 1 \}$, we have
\begin{align*}
\|f_\ell-g_\ell\|_\infty&= \| t_\ell \circ \phi_{\ell-1} \circ f_{\ell-1} - t_\ell \circ \rho_{\ell-1} \circ g_{\ell-1} \|_\infty \nonumber \\
&\le \omega_{t_\ell,\infty} \left( \| \phi_{\ell-1} \circ f_{\ell-1} - \rho_{\ell-1} \circ g_{\ell-1} \|_\infty \right) \nonumber \\
&\le \omega_{t_\ell,\infty} \left(  \| \phi_{\ell-1} \circ f_{\ell-1} - \phi_{\ell-1} \circ g_{\ell-1} \|_\infty +  \| \phi_{\ell-1} \circ g_{\ell-1} - \rho_{\ell-1} \circ g_{\ell-1} \|_\infty \right) \nonumber \\
&\le \omega_{t_\ell,\infty} \Big( \| \phi_{\ell-1} \circ f_{\ell-1} - \phi_{\ell-1} \circ g_{\ell-1} \|_\infty \nonumber\\
&\qquad \qquad \qquad + \sup_{x\in [0,1]^{d_x}} \| \phi_{\ell-1} \circ f_{\ell-1}(x) - \rho_{\ell-1} \circ g_{\ell-1}(x)  \|_\infty \Big)\nonumber\\
&=\omega_{t_\ell,\infty} \Big( \| \phi_{\ell-1} \circ f_{\ell-1} - \phi_{\ell-1} \circ g_{\ell-1} \|_\infty \nonumber\\
&\qquad \qquad \qquad + \sup_{x\in [0,1]^{d_x}} \max_{i \in [d_{\ell-1}]} \left\{ \sigma( g_{\ell -1}(x)_i ) - \varphi_n( g_{\ell -1}(x)_i )  \right\} \Big).\nonumber
\end{align*}
We note that $\omega_{t_\ell,\infty}$ is well-defined on $[0,1]^{d_x}$ since $t_\ell$ is uniformly continuous on $[0,1]^{d_x}$.

Since $\sigma$ is uniformly approximated by $\varphi_n$, for each $i \in [d_{\ell-1}]$, there exists $N_i \in \mathbb N$ such that
\begin{align*}
    \| \sigma( g_{\ell -1}(x)_i ) - \varphi_n( g_{\ell -1}(x)_i ) \|_\infty \le \delta
\end{align*}
for all $n \ge N_i$ and $x \in [0,1]^{d_x}$. 
Therefore, for $n \ge \max \{ N_1, \dots, N_{d_{\ell -1}} \}$, we have
\begin{align}
\|f_\ell-g_\ell\|_\infty
&\le \omega_{t_\ell,\infty} \Big( \| \phi_{\ell-1} \circ f_{\ell-1} - \phi_{\ell-1} \circ g_{\ell-1} \|_\infty \nonumber\\
&\qquad \qquad \qquad + \sup_{x\in [0,1]^{d_x}} \max_{i \in [d_{\ell-1}]} \left\{ \sigma( g_{\ell -1}(x)_i ) - \varphi_n( g_{\ell -1}(x)_i )  \right\} \Big)\nonumber\\
& \le  \omega_{t_\ell,\infty} \left( \omega_{\phi_{\ell-1},\infty} (\| f_{\ell-1} - g_{\ell-1} \|_\infty) +  \delta \right), \label{ineq:sup-norm}
\end{align}
with $\| f_1 - g_1 \|_p = \| t_1 - t_1 \|_p = 0.$
Again, note that $\omega_{\phi_{\ell-1},\infty}$ is well-defined on $[0,1]^{d_x}$ since $\varphi_{\ell-1}$ is uniformly continuous on $[0,1]^{d_x}$.

Consequently, by iteratively applying (\ref{ineq:sup-norm}), we get
\begin{align}
\|f-g\|_\infty & \le  \omega_{t_L,\infty} \left( \omega_{\phi_{L-1},\infty}\left( \| f_{L-1} - g_{L-1} \|_\infty\right) + \delta \right) \nonumber \\
&\le  \omega_{t_L,\infty} \left( \omega_{\phi_{L-1},\infty}\left( \omega_{t_{L-1},\infty} \left( \omega_{\phi_{L-2},\infty} \left( \| f_{L-2} - g_{L-2} \|_\infty \right) + \delta \right) \right) +\delta \right) \nonumber \\
&~~\vdots \nonumber \\
&\le  \omega_{t_L,\infty} \left( \omega_{\phi_{L-1},\infty} \left( \cdots \left( \omega_{t_3,\infty} \left( \omega_{\phi_{2},\infty} ( \| f_2 - g_2 \|_\infty) + \delta \right)\right) \cdots \right)+\delta\right) \nonumber \\ 
&\le  \omega_{t_L,\infty} \left( \omega_{\phi_{L-1},\infty} \left( \cdots \left( \omega_{t_3,\infty} \left( \omega_{\phi_{2},\infty} \left( 
\omega_{t_2,\infty} \left( \delta  \right)
\right) + \delta \right)\right) \cdots \right)+\delta\right) \label{ineq:mono-unif-ub}
\end{align}

Therefore, we can bound the right-hand side (\ref{ineq:mono-unif-ub}) of the above inequality within arbitrary $\varepsilon >0 $, by choosing sufficiently small $\delta > 0$.
Hence, it completes the proof of \cref{lem:inj-approx}.

\section{Minimum width for $L^p$ approximation of RNNs}\label{sec:rnn}
In \cref{thm:lp-ub,cor:general-lp}, we prove the exact minimum width for $L^p$ approximation via networks using  $\relu$ or $\relul$ activation functions.
Using similar proof techniques, we investigate the minimum width for $L^p$ approximation for other network architectures: recurrent neural networks (RNNs) and bidirectional RNNs (BRNNs) in this section.

\subsection{Additional notations}

We first introduce additional notations that will be used throughout this section.
Given a length $T$ sequence of $d$-dimensional vectors $x \in \mathbb R^{d \times T}$ (i.e., $x$ is a matrix of $d$ rows and $T$ columns), we denote a token at index $t\in[T]$ by $x[t] \in \mathbb R^d$ (i.e., the {$t$}-th column of $x$) and tokens from index $t_1\in[T]$ to $t_2 \in [T]$ by $x[t_1 : t_2] \in \mathbb R^{d \times {(t_2-t_1+1)}}$ for $t_1 < t_2$ (i.e., the submatrix of $x$ consisting of its {$t_1$}-th,\dots,{$t_2$}-th columns). We define recurrent cells used in RNN and BRNN architectures as follows.

\begin{itemize}[leftmargin=15pt]
\item {\bf RNN cell.} A recurrent cell $\vec{R}_\ell$ of the layer $\ell$ with hidden dimension $d_\ell$ maps an input sequence $x = (x[1],\dots,x[T]) \in \mathbb R^{d_\ell \times T}$ to an output sequence $y = (y[1],\dots,y[T]) \in \mathbb R^{d_\ell \times T}$ such that
\begin{align*}
    y[t+1] = \vec{R}_\ell(x)[t+1] \defeq \phi_\ell(\vec{W}_{\ell,1} \vec{R}_\ell(x)[t] + \vec{W}_{\ell,2} x[t+1] + \vec{b}_\ell),
\end{align*}
where $\phi_\ell(x_1,\dots,x_{d_\ell}) = (\sigma(x_1),\dots,\sigma(x_{d_\ell}))$ is a coordinate-wise activation function, and $\vec{W}_{\ell,1}, \vec{W}_{\ell,2} \in \mathbb R^{d_\ell \times d_\ell}$ and $\vec{b}_\ell \in \mathbb R^{d_\ell}$ are the weight parameters. The initial hidden state $\vec{R}_\ell(x)[0]$ is set to be $0\in \mathbb R^{d_\ell}$.

\item {\bf BRNN cell.} %
A bidirectional recurrent cell $\vecev{R}_\ell$ of the layer $\ell$ with hidden dimension $d_\ell$ consists of a pair of recurrent cells $\vec{R}_\ell$, $\cev{ R}_\ell$ with the same hidden dimension, and additional weight parameters $A_\ell,B_\ell \in \mathbb R^{d_\ell\times d_\ell}$ such that
\begin{align*}
    \vec{R}_\ell(x)[t+1] &= \phi_\ell(\vec{W}_{\ell,1} \vec{R}_\ell(x)[t] + \vec{W}_{\ell,2} x[t+1] + \vec{b}_\ell), \\
    \cev{ R}_\ell(x)[t-1] &\defeq \phi_\ell(\cev{W}_{\ell,1} \cev{R}_\ell(x)[t] + \cev{W}_{\ell,2} x[t-1] + \cev{b}_\ell),\\
    y[t+1] = \vecev{R}_\ell(x)[t] &\defeq A_\ell  \vec{R}_\ell(x)[t] + B_\ell \cev{R}_\ell(x)[t],
\end{align*}
where the initial hidden states $\vec{R}_\ell(x)[0]$ and $\cev R_\ell(x)[T+1]$ are set to be $0 \in \mathbb R^{d_\ell}$.

\item {\bf Network architecture.} Given an activation function $\sigma : \mathbb R \to \mathbb R$, %
token-wise linear maps $P: \mathbb R^{d_x \times T} \to \mathbb R^{d \times T}$ and $Q : \mathbb R^{d \times T} \to \mathbb R^{d_y \times T}$ (i.e., there are some linear maps $\phi:\mathbb R^{d_x}\to\mathbb R^d$ and $\psi:\mathbb R^{d}\to\mathbb R^{d_y}$ such that  $P(x)[t]=\phi(x[t])$ and $Q(x)[t]=\psi(x[t])$),
and $L$ recurrent cells $ \vec{R}_1, \dots, \vec{R}_L$ with hidden dimensions $d_1, \dots, d_L$, we define an RNN $f$ as follows:
\begin{align*}
    f \defeq  Q \circ  \vec{R}_L \circ \cdots \circ \vec{R}_1 \circ P.
\end{align*}
\end{itemize}
We denote a neural network $f$ with an activation function $\sigma$ by a ``$\sigma$ RNN''.
If we replace RNN cells $\vec{R}_1, \dots, \vec{R}_L$ to BRNN cells $\vecev{R}_1, \dots,  \vecev{R}_L$, then we denote a function $f$ by a ``$\sigma$ BRNN''.
We define the width of RNN (or BRNN) $f$ as the maximum over $d_1, \dots, d_L$.

We now introduce function spaces to universally approximate via RNNs and BRNNs.
Given $T \in \mathbb N$, we define the target function class $L^p(\mathcal X^{T}, \mathcal Y^{T} )$, which consists of all $L^p$ sequence-to-sequence functions with length $T$ from $\mathcal X \subset \mathbb R^{d_x}$ to $\mathcal Y \subset \mathbb R^{d_y}$, endowed with the entry-wise $L^p$-norm: $\| f \|_{p,p} \defeq(\int_{\mathcal{X}^T} \|f(x)\|_{p,p}^p dx)^{1/p}$ where $\| \cdot \|_{p,p}$ is the $L_{p,p}$ norm, i.e., an entry-wise matrix norm.
Unlike BRNNs, output tokens of RNNs at index $t\in[T]$ only depend on $x[1:t] \in \mathbb R^{d_x \times t}$.
We refer to such functions that only depend on past information as {\it the past-dependent functions}. 
Namely, a function $f:\mathbb R^{d_1\times T}\to\mathbb R^{d_2\times T}$ is past-dependent if 
$$f(x)[t]=g_t(x[1:t])$$
for some $g_t:\mathbb R^{d_1\times t}\to\mathbb R^{d_2}$ for all $t\in[T]$.
For a target function class for universal approximation using RNNs, we consider past-dependent $L^p([0,1]^{d_x\times T},\mathbb R^{d_y\times T})$, which is a space of all past-dependent functions $f$ such that $f\in L^p([0,1]^{d_x\times T},\mathbb R^{d_y\times T})$.
For a target function class for universal approximation using BRNNs, we consider $L^p([0,1]^{d_x\times T},\mathbb R^{d_y\times T})$.

Before describing our results, we introduce a  recent work for universal approximation of RNNs \citep{song23}.
\citet{song23} show that the upper bound on the minimum width for universal approximation is independent of the length of the input sequences. In particular, they consider unbounded domain and prove that width $\max\{d_x + 1, d_y\}$ is necessary and sufficient for $\relu$ RNNs to be dense in the past-dependent $L^p(\mathbb R^{d_x\times T},\mathbb R^{d_y \times T})$ and the same width $\max\{d_x + 1, d_y\}$ is sufficient for $\relu$ BRNNs to universally approximate $L^p(\mathbb R^{d_x\times T},\mathbb R^{d_y\times T})$.

\begin{table}[t]
\small
\begin{center}
\caption{A known bounds on the minimum width for $L^p$ approximation via RNNs and BRNNs using $\relu$ or $\relul$ activation functions. In this table, $p \in [1, \infty)$ and all results with the domain $[0,1]^{d_x\times T}$ extends to $\mathcal K^T$ where $\mathcal K$ denotes an arbitrary compact set in $\mathbb R^{d_x}$.}\label{table:summary2}
\begin{tabular}{|c| c | c  c | c |} 
 \hline
 {\bf Reference} & {\bf Network} & {\bf Function class} & {\bf Activation} $\sigma$ & {\bf Upper\,/\,lower bounds} \\ 
 \hline\hline
 \multirow{2}{*}{\cite{song23}} & {RNN} & $L^p(\mathbb R^{d_x\times T},\mathbb R^{d_y\times T})^\mathparagraph$ & $\relu$ & $w_{\min} = \max\{d_x+1,d_y\}$\\
                                & {BRNN} & $L^p(\mathbb R^{d_x\times T},\mathbb R^{d_y \times T})$ & $\relu$ & $w_{\min} \le \max\{d_x+1,d_y\}$ \\
 \hline
 \hline
 \rowcolor{gray!30} 
{\bf \cref{thm:rnn_lp}} &  & & $\relu$ & $ w_{\min} = \max\{d_x, d_y, 2\}$ \\
 \rowcolor{gray!30} 
{\bf \cref{thm:rnn_lp-relulike}} & \multirow{-2}{*}{RNN}
 &  \multirow{-2}{*}{$L^p([0,1]^{d_x\times T}, \mathbb R^{d_y\times T})^\mathparagraph$} & $\relul^{\mathsection}$ & $ w_{\min} = \max\{d_x, d_y, 2\}$ \\
 \hline
\rowcolor{gray!30} 
 &  &  & $\relu$ & $ w_{\min} \le \max\{d_x, d_y, 2\}$ \\
 \rowcolor{gray!30} 
 \multirow{-2}{*}{\bf \cref{thm:brnn_lp}} & \multirow{-2}{*}{BRNN}
 &  \multirow{-2}{*}{$L^p([0,1]^{d_x\times T}, \mathbb R^{d_y\times T})$} & $\relul^{ \|\ }$ & $ w_{\min} \le \max\{d_x, d_y, 2\}$ \\
 \hline
\end{tabular}
\end{center}

$\mathparagraph$ requires the class to consist of past-dependent functions.\\
$\mathsection$ includes $\softplus$, Leaky-$\relu$, $\elu$, $\celu$, $\selu$, $\gelu$, $\silu$, and $\mish$ where $\gelu$, $\silu$, and $\mish$ requires $d_x+d_y\ge3$.\\
$ \|\ $ do not require $d_x+d_y\ge3$ for $\gelu$, $\silu$, and $\mish$.
\end{table}

\subsection{Our results}
We are now ready to introduce our results on a compact domain. The first result characterizes the exact minimum width of RNNs to be dense in past-dependent $L^p([0,1]^{d_x \times T},\mathbb R^{d_y \times T})$.
The proof of \cref{thm:rnn_lp,thm:rnn_lp-relulike} are presented in \cref{pfsec:rnn,sec:pfthm:rnn_lp-relulike}, respectively.

\begin{theorem}\label{thm:rnn_lp}
For any $T \in \mathbb N$, $w_{\min} = \{ d_x,d_y,2 \}$ for $\relu$ RNNs to be dense in past-dependent $L^p([0,1]^{d_x \times T},\mathbb R^{d_y \times T})$.
\end{theorem}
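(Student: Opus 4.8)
The plan is to mirror the feedforward argument behind \cref{thm:lp-ub}, splitting into a matching lower bound and a coding-scheme upper bound, with the recurrent structure entering only in how the encoder is realized.

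\textbf{Lower bound.} I would obtain $w_{\min}\ge\max\{d_x,d_y,2\}$ by reduction to the feedforward lower bounds. For a width-$w$ $\relu$ RNN $f$, the map $x\mapsto f(x)[1]$ is exactly a width-$w$ feedforward $\relu$ network of $x[1]$, since the initial hidden state is $0$ and no recurrence is active at the first token. Taking a past-dependent target whose $t=1$ slice is one of the known feedforward-hard functions: if $w<d_x$ the token-wise input map $P$ is rank-deficient, so $f(x)[1]$ factors through a map $\mathbb R^{d_x}\to\mathbb R^{w}$ and cannot approximate $\|x[1]\|_2^2$; if $w<d_y$ the token-wise output map $Q$ forces $f(x)[1]$ into a proper affine subspace of $\mathbb R^{d_y}$, which is ruled out by \cref{lem:dxdy}; and if $w=1$ then $f(x)[1]$ is a monotone function of a single linear functional of $x[1]$, ruled out by (the argument of) \cref{lem:counter-ex-monotone}. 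Combining the three gives the bound.

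\textbf{Upper bound: reductions and structure.} For $w_{\min}\le\max\{d_x,d_y,2\}$, I first reduce to continuous targets: continuous past-dependent functions are dense in past-dependent $L^p([0,1]^{d_x\times T},\mathbb R^{d_y\times T})$ (approximate each $g_t\in L^p([0,1]^{d_x\times t})$ by a continuous function and reassemble $\tilde f(x)[t]=\tilde g_t(x[1:t])$), so it suffices to $L^p$-approximate a continuous past-dependent $f^*$ with $f^*(x)[t]=g_t(x[1:t])$, each $g_t$ uniformly continuous on its compact box. As in \cref{sec:coding} I then use an encoder--decoder composition, but with the encoder itself an RNN. The key lemma to prove is a recurrent analogue of \cref{lem:encoder}: for any $\alpha,\beta>0$ there is a $\relu$ RNN $g$ of width $\max\{d_x,2\}$ and, for each $t\in[T]$, disjoint measurable $\mathcal T_1^t,\dots,\mathcal T_{k_t}^t\subset[0,1]^{d_x\times t}$ of diameter $\le\alpha$ covering all but a $\beta$-fraction, with $g(x)[t]=c_i^t$ whenever $x[1:t]\in\mathcal T_i^t$, where the scalars $c_i^t$ are distinct across $t$ as well as $i$ (so the codeword also records the position). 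I would build this incrementally in $t$: at $t=1$ this is exactly \cref{lem:encoder}, realized in the first block of recurrent cells, which act feedforwardly at $t=1$; for the step, once the hidden state at time $t-1$ sits at a hub $c_i^{t-1}$, the next recurrent layers, on reading $x[t]$, translate $x[t]$ into a box of $\mathbb R^{d_x}$ depending on the current hub (so that distinct $(c_i^{t-1},x[t])$-values land in disjoint boxes) and then apply the projection maps of \cref{lem:tool1} to collapse each refined cell to a new hub $c_j^t$; a final linear map (\cref{lem:distinct-innerprod}) pushes all hubs to distinct scalars. The decoder is a token-wise $\relu$ network of width $\max\{d_y,2\}$, realized by recurrent cells with zero recurrent weight (hence token-wise feedforward), obtained from \cref{lem:decoder0} sending each codeword $c_i^t$ to an approximation of $g_t(z)$ for a sample $z\in\mathcal T_i^t$; distinctness of the $c_i^t$ makes this a well-defined finite map. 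Composing gives an RNN of width $\max\{d_x,d_y,2\}$ (padding the narrower block with inert, identically-zero coordinates if the architecture requires a uniform hidden width), and the $L^p$-error is controlled exactly as in \cref{sec:choice-abc}: per token, the bad set contributes $O(\beta)$, the cell diameter contributes via the modulus of continuity of $g_t$, and the decoder quantization contributes $\gamma$, all summed over the $T$ tokens.

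\textbf{Main obstacle.} The hard part will be the inductive encoder step. Each recurrent update is only a single affine map followed by a coordinate-wise $\relu$, which is far weaker than a general $\relu$ network, so the per-step encoding must be spread over several recurrent layers while the recurrence simultaneously carries the accumulated codeword; moreover, on a compact domain the width must stay at $\max\{d_x,2\}$ rather than the naive $d_x+1$ that would arise from holding the current token and the accumulator in separate coordinates. The resolution is to never keep them separate, but at each step to merge the previous hub and the incoming token into a single translated copy of $[0,1]^{d_x}$ and then re-run the projection-based encoder of \cref{lem:encoder} inside the recurrent layers, leaving the new hubs in positions usable at the next step. Verifying that the hubs, translation boxes, and projection directions can be chosen consistently across all $T$ steps so that the cells stay disjoint and the discarded mass stays below $\beta$ is the crux of the argument.
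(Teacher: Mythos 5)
Your lower bound is exactly the paper's: restrict to the first token, observe that $x\mapsto f(x)[1]$ is a width-$w$ feedforward $\relu$ network of $x[1]$ because the initial hidden state is zero, and invoke the feedforward lower bounds. That part is fine.

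Your upper bound, however, takes a genuinely different route from the paper and contains a gap that I do not think your sketch resolves. You propose to partition the prefix space $[0,1]^{d_x\times t}$ directly and to build the encoder inductively: at time $t$, merge the previous hub $c_i^{t-1}$ with the incoming token $x[t]$ into a translated box and re-run the projection encoder of \cref{lem:encoder} ``inside the recurrent layers.'' The obstacle is architectural. In the stacked RNN defined in the paper, the recurrence of cell $\vec{R}_\ell$ connects $\vec{R}_\ell(x)[t]$ only to $\vec{R}_\ell(x)[t+1]$; there is no path from the output of the \emph{top} layer of your per-step encoding block at time $t-1$ down to the \emph{bottom} layer at time $t$, where $x[t]$ arrives. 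But your hub $c_i^{t-1}$ is, by construction, produced only after the full stack of projection layers has acted on the merged representation, i.e., it lives at the top of the block. A single recurrent cell (one affine map plus coordinate-wise $\relu$) cannot by itself implement the iterated projections of \cref{lem:tool1}, so the merge-and-re-encode step cannot be localized in the one layer whose recurrent weight is nonzero. You correctly flag the consistency of hubs and projection directions as ``the crux,'' but the real crux is that the information flow you need is not available in this architecture.

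The paper avoids this entirely by factoring the encoder differently (\cref{lem:tokenwise-encoder,lem:rnn-encoder}): the spatial partition is done \emph{once per token} by a purely token-wise encoder (all recurrent weights zero) of width $\max\{d_x,2\}$, producing a scalar codeword $c_{i_t}\in[0,1]$ for each token independently; the temporal aggregation is then performed by a single width-$1$ recurrent cell acting on these scalars, $\vec{R}(x)[t+1]=\relu\bigl(2^{-K}\vec{R}(x)[t]+x[t+1]\bigr)$ after a width-$2$ token-wise quantization to $K$ bits, which concatenates the binary representations of the per-token codewords into a single scalar $a_{(i_1,\dots,i_t)}$. This accumulator needs only its own past state, so its recurrence is closed, and the combinatorial explosion of prefixes is handled in the one-dimensional codeword space rather than geometrically in $[0,1]^{d_x\times t}$. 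The prefix cells are then just products $\mathcal T_{i_1}\times\cdots\times\mathcal T_{i_t}$ of diameter $O(\alpha\sqrt{T})$, the decoder is token-wise as you describe, and the error analysis proceeds as in your last paragraph. If you replace your inductive geometric encoder with this ``token-wise encode, then bit-concatenate with a scalar recurrence'' decomposition, the rest of your argument goes through.
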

\begin{theorem}\label{thm:rnn_lp-relulike}
For any $T \in \mathbb N$, $w_{\min} = \{ d_x,d_y,2 \}$ for $\varphi$ RNNs to be dense in past-dependent $L^p([0,1]^{d_x \times T},\mathbb R^{d_y \times T})$ if $\varphi \in \{\elu, \text{\rm Leaky-}\relu, \softplus, \celu, \selu \}$, or $\varphi \in \{\gelu, \silu, \mish \}$ and $d_x + d_y \ge 3$.
\end{theorem}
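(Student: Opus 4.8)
The plan is to mirror, in the recurrent setting, the way \cref{cor:general-lp} was obtained from \cref{thm:lp-ub}: take the $\relu$ result (here \cref{thm:rnn_lp}) as a black box for both directions and transfer it to $\relul$ activations by a layerwise substitution, using \cref{lem:relu-like-converge} and a modulus-of-continuity estimate propagated through the unrolled network. For the lower bound $w_{\min}\ge\max\{d_x,d_y,2\}$, an RNN of width $d_x-1$ has the form $Q\circ(\cdots)\circ P$ with $P$ mapping each token into $\mathbb R^{d_x-1}$, so its first-token output is a continuous function of a rank-deficient linear image of $x[1]$; similarly width $d_y-1$ forces the output to factor through $\mathbb R^{d_y-1}$ token-wise. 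As in \cref{lem:dxdy} (whose RNN form is already used for \cref{thm:rnn_lp}), such networks are not dense. For the ``$\ge2$'' obstruction, when $\varphi$ is monotone (true for $\softplus,\text{Leaky-}\relu,\elu,\celu,\selu$) and $d_x=d_y=1$, a width-one $\varphi$ RNN has, at the first token, a monotone scalar dependence on $x[1]$, so it cannot approximate the past-dependent function whose first-token marginal is the non-monotone target of \cref{lem:counter-ex-monotone}; and when $\varphi\in\{\gelu,\silu,\mish\}$ the hypothesis $d_x+d_y\ge3$ forces $\max\{d_x,d_y,2\}=\max\{d_x,d_y\}$, so the ``$\ge2$'' bound is already subsumed. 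Combining these gives the lower bound for every activation in the statement.

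For the upper bound, fix $f^*$ in the past-dependent $L^p([0,1]^{d_x\times T},\mathbb R^{d_y\times T})$ and $\varepsilon>0$; by \cref{thm:rnn_lp} there is a $\relu$ RNN $f$ of width $w=\max\{d_x,d_y,2\}$ with $\|f-f^*\|_{p,p}\le\varepsilon/2$, and it suffices to produce a $\varphi$ RNN $g$ of the same width with $\|f-g\|_{p,p}\le\varepsilon/2$. The $\relu$ RNN produced in the proof of \cref{thm:rnn_lp} is, as for \cref{thm:lp-ub}, built from a coding scheme: a token-wise $\relu$ encoder of width $\max\{d_x,2\}$, a purely linear recurrent accumulation of the scalar codewords, and a token-wise $\relu$ decoder of width $\max\{d_y,2\}$. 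I would convert each piece separately. For the encoder and decoder, which are feedforward on each token, invoke \cref{lem:relu-like-converge} exactly as in the proof of \cref{cor:general-lp}: on a compact set $\mathcal K\subset\mathbb R$ containing all pre-activations of $f$ over the compact input domain, replace each coordinate-wise $\relu$ by a width-one $\varphi$ network $h_n$ uniformly $\delta$-close to $\relu$; the pre/post scalings in $h_n$ are absorbed into adjacent affine maps for $\varphi\in\{\softplus,\gelu,\silu,\mish\}$, and for $\varphi\in\{\text{Leaky-}\relu,\elu,\celu,\selu\}$ the iterate $h_n=\varphi^{n}$ (or $(t_\lambda\circ\varphi)^n$) is simply inlined, in each case without increasing the width. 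The linear recurrent accumulation is realized by a single $\varphi$ application in its linear regime via the identity-on-a-bounded-set trick $v\mapsto\varphi(v+M)-M$, which equals (or, for $\softplus,\gelu,\silu,\mish$, uniformly approximates on $\mathcal K$) the identity once $M$ is large; this keeps every recurrent cell in the standard one-activation form. Hence $g$ is a genuine $\varphi$ RNN of width $w$.

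It remains to bound $\|f-g\|_{p,p}$. Since $T$ and the number of layers are fixed, the unrolled $\relu$ RNN and its $\varphi$ counterpart are finite compositions of affine maps and coordinate-wise activations over the finite-measure domain $[0,1]^{d_x\times T}$; propagating the per-activation error $\delta$ (uniform on $\mathcal K$, hence $L^p$) through these compositions with the moduli of continuity of the affine maps and activations — the telescoping estimate of \cref{sec:pfcor:general-lp}, now also carried across the $T$ recurrent steps — bounds $\|f-g\|_{p,p}$ by a quantity tending to $0$ as $\delta\to0$. Choosing $\delta$ small enough gives $\|f-g\|_{p,p}\le\varepsilon/2$, hence $\|g-f^*\|_{p,p}\le\varepsilon$.

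The main obstacle is the iteration-type case ($\text{Leaky-}\relu,\elu,\celu,\selu$): the only width-one $\varphi$ network capturing the corner of $\relu$ is the iterate $\varphi^{n}$, and a naive coordinate-wise substitution $\relu\rightsquigarrow\varphi^{n}$ cannot be folded into a single recurrent cell, because the required data path — the block output at time $t$ back to the block input at time $t{+}1$ — is not a standard recurrent edge. The proposal circumvents this by exploiting the structure of the \cref{thm:rnn_lp} construction: all nonlinear computation lives in the feedforward token-wise encoder/decoder, where $\varphi^{n}$ can be inlined freely, while every recurrent connection uses $\relu$ only linearly, where one $\varphi$ application suffices. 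Verifying that \cref{thm:rnn_lp} can indeed be taken in this ``feedforward-blocks plus linear-recurrence'' normal form, together with the bookkeeping of the compact range $\mathcal K$ of all intermediate activations needed to apply \cref{lem:relu-like-converge}, is the crux; once this is in place, \cref{thm:rnn_lp-relulike} follows.
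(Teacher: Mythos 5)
Your proposal is correct, but your upper-bound argument takes a genuinely different route from the paper's. The paper (\cref{sec:pfthm:rnn_lp-relulike}) abstracts the RNN into a ``$\sigma$ token-network'' --- an unrolled computation graph built from token-wise affine maps, token-wise activations, and copy/add/delete operations --- and then performs the same generic activation substitution $\relu\rightsquigarrow h_n$ and telescoping modulus-of-continuity estimate as in \cref{lem:inj-approx}/\cref{sec:pfcor:general-lp}, uniformly in $\|\cdot\|_\infty$ over all tokens, before passing to the $L^p$ norm. That argument is agnostic to where the activations sit, which is why the paper reuses it verbatim for BRNNs in \cref{thm:brnn_lp}; but it leaves implicit exactly the point you flag as the crux, namely whether the substituted token-network (with $\varphi^n$ in place of $\relu$ inside a cell with nonzero recurrent weight) re-rolls into a weight-tied $\varphi$ RNN in the strict architectural sense of \cref{sec:rnn} --- the feedback edge would have to come from the top of the inlined $\varphi^n$ stack, which is not a standard recurrent connection. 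You instead exploit the normal form of the \cref{thm:rnn_lp} construction (token-wise encoder/decoder carrying all the nonlinearity, plus a width-one accumulator whose $\relu$ only ever sees non-negative arguments), substitute piece by piece, and realize the linear recurrence with a single $\varphi$ via $v\mapsto\varphi(v+M)-M$. This is more work and is tied to the specific construction, but it resolves the architectural issue rigorously; your lower-bound argument (restriction to the first token plus \cref{lem:dxdy} and \cref{lem:counter-ex-monotone}, with the $d_x+d_y\ge3$ case subsumed) coincides with the paper's. Both routes rest on \cref{lem:relu-like-converge} and the same error-telescoping, and both are sound.
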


\cref{thm:rnn_lp,thm:rnn_lp-relulike} characterize the minimum width of RNNs using $\relu$ or $\relul$ activation functions to be dense in past-dependent $L^p([0,1]^{d_x \times T},\mathbb R^{d_y \times T})$ is exactly $\max\{d_x,d_y,2\}$, which coincides with the fully-connected network case (\cref{thm:lp-ub,cor:general-lp}).
Further, \cref{thm:rnn_lp} shows a dichotomy between the minimum width of $\relu$ RNNs for $L^p$ approximation on the compact domain and the whole Euclidean space. A similar observation also holds for RNNs using $\relul$ activation functions using \cref{thm:rnn_lp-relulike}.

In order to prove the upper bound $w_{\min} \le \{ d_x,d_y,2 \}$ in \cref{thm:rnn_lp,thm:rnn_lp-relulike}, we use coding-based proof techniques as in \citep{song23} but with different coding schemes (e.g., as in \cref{lem:encoder}).
The lower bound $w_{\min} \ge \{d_x, d_y, 2 \}$ in \cref{thm:rnn_lp} directly follows from the facts that for any $\varphi \in \{ \relu \} \cup \relul$ and $\varphi$ RNN $f$, $f(x)[1] = Q \circ \vec{R}_L \circ \cdots \circ \vec{R}_1 \circ P(x)[1]$ is a $\varphi$ network and $w_{\min} \ge \max\{d_x, d_y, 2\}$ is necessary for $\varphi$ networks to be dense in $L^p([0,1]^{d_x}, \mathbb R^{d_y})$ (\cref{thm:lp-ub,cor:general-lp}).

Our next result shows that the same upper bound in \cref{thm:rnn_lp} also holds for $\relu$ BRNNs and BRNNs using $\relu$ or $\relul$ activation functions. The proof of \cref{thm:brnn_lp} is presented in \cref{pfsec:brnn}.

\begin{theorem}\label{thm:brnn_lp}
For any $T \in \mathbb N$ and $\varphi \in \{\relu\} \cup \relul$, $w_{\min} \le \{ d_x,d_y,2 \}$ for $\varphi$ BRNNs to be dense in $L^p([0,1]^{d_x\times T}, \mathbb R^{d_y \times T})$.

\end{theorem}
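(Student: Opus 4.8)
The plan is to combine the coding-scheme construction behind \cref{thm:lp-ub} with the recurrent-network techniques of \citet{song23}, exploiting the compactness of $[0,1]^{d_x}$ to eliminate the extra neuron that forces width $\max\{d_x+1,d_y\}$ in the unbounded-domain bound of \citet{song23}. Since continuous functions are dense in $L^p([0,1]^{d_x\times T},\mathbb R^{d_y\times T})$ and a continuous function on the compact cube has bounded range, it suffices to exhibit, for every $f^*\in C([0,1]^{d_x\times T},[0,1]^{d_y\times T})$ and $\varepsilon>0$, a $\relu$ BRNN $f$ of width $\max\{d_x,d_y,2\}$ with $\|f-f^*\|_{p,p}\le\varepsilon$; the $\relul$ cases then follow by a perturbation argument. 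I would build $f$ as a composition of two groups of BRNN layers: a \emph{collector/encoder} group of width $\max\{d_x,2\}$ followed by a \emph{decoder} group of width $\max\{d_y,2\}$.

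The collector is the recurrent analog of \cref{lem:encoder}. The idea is that the forward recurrent cells $\vec{R}_\ell$ incrementally absorb a quantized version of the incoming token $x[t]$ into an accumulated scalar encoding the prefix $x[1\!:\!t]$ together with the index $t$ (a forward cell can count via $h[t]=\relu(h[t-1]+1)$), the backward cells $\cev{R}_\ell$ do the same for the reversed suffix, and at each position $t$ the bidirectional combination $A_\ell\vec{R}_\ell(x)[t]+B_\ell\cev{R}_\ell(x)[t]$ fuses the prefix- and suffix-codewords into a codeword $w_t$, from which $x$ and $t$ are recoverable up to an arbitrarily small diameter $\alpha$ on all but an arbitrarily small measure $\beta$ of inputs. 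The essential point — and the reason the width stays at $\max\{d_x,2\}$ rather than $d_x+1$ — is that the quantize-and-accumulate step inside each cell is realized by $\relu$ projections exactly in the spirit of \cref{lem:tool1}, so that the incoming token is \emph{consumed} into the accumulator rather than stored alongside it. The decoder then applies \cref{lem:decoder0} token-wise: an $\relu$ RNN cell with zero recurrent weight matrix implements a width-$\max\{d_y,2\}$ $\relu$ network $h:\mathbb R\to[0,1]^{d_y}$ with $h(w_t)\approx f^*(x)[t]$ on the finitely many codewords produced by the collector. As in \cref{sec:choice-abc} (now with an extra factor $T$ from summing the entrywise error over positions, and using that $h$ has codomain $[0,1]^{d_y}$ so the exceptional measure-$\beta$ positions contribute only a term controlled by $T d_y\beta$), choosing $\alpha,\beta$ and the quantization parameter small enough yields $\|f-f^*\|_{p,p}\le\varepsilon$. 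The composed network has width $\max\{d_x,d_y,2\}$, which proves the $\relu$ case.

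The extension to $\varphi\in\relul$ mirrors the proof of the upper bound of \cref{cor:general-lp}: replace every $\relu$ inside the BRNN cells by the width-one $\varphi$-network $h_n$ of \cref{lem:relu-like-converge}, which converges uniformly to $\relu$ on any compact set, and propagate the uniform error through the finitely many layers and $T$ recurrence steps via moduli of continuity exactly as in \cref{sec:pfcor:general-lp}. Because this argument only uses that $\relu$ is approximated on compacta by width-one $\varphi$-networks — which holds for all of $\softplus,\text{Leaky-}\relu,\elu,\celu,\selu,\gelu,\silu,\mish$ with no restriction on $d_x,d_y$ — the bound holds for $\gelu,\silu,\mish$ even when $d_x+d_y<3$; the restriction in \cref{cor:general-lp} is needed only for the matching lower bound there, which we do not claim for BRNNs. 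The main obstacle is the collector: making precise the recurrent analog of \cref{lem:encoder}, namely that a width-$\max\{d_x,2\}$ bidirectional recurrent stack can fold the entire length-$T$ input sequence, together with positional information, into one codeword per position on all but a vanishing measure of inputs. The causal asymmetry of the forward and backward cells — each sees only one side of the sequence, so they must be fused at the bidirectional-cell level — together with the tight width budget is what makes this step delicate; the decoder and the $\relul$ extension are routine adaptations of \cref{lem:decoder0,lem:relu-like-converge} and the estimates in \cref{sec:choice-abc,sec:pfcor:general-lp}.
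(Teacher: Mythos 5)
Your proposal follows essentially the same route as the paper's proof: a token-wise width-$\max\{d_x,2\}$ encoder quantizing each token to a scalar codeword, a width-$2$ bidirectional accumulator whose forward and backward cells store the prefix and suffix codewords in disjoint bit positions and are fused by the $A_\ell\vec{R}+B_\ell\cev{R}$ combination, a token-wise width-$\max\{d_y,2\}$ decoder from \cref{lem:decoder0}, the same $L^p$ error bookkeeping with the extra factor of $T$, and the same uniform-approximation argument for the $\relul$ cases (correctly noting that no $d_x+d_y\ge3$ restriction is needed since only the upper bound is claimed). The paper modularizes the "collector" slightly more cleanly — token-wise scalar encoding first, then a separate scalar-valued BRNN cell with the backward branch scaled by $2^{-KT}$ to avoid bit overlap — but this is the same construction you describe.
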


\subsection{Proof of \cref{thm:rnn_lp}}\label{pfsec:rnn}
\subsubsection{Proof outline for $\relu$ RNNs}
In this section, we show that for any past-dependent $f^* \in L^p([0,1]^{d_x \times T}, \mathbb R^{d_y \times T})$ and $\varepsilon > 0$, there exists a $\relu$ RNN $f : [0,1]^{d_x \times T} \to \mathbb R^{d_y \times T}$ of width $\max\{d_x, d_y, 2\}$ such that
\begin{align*}
     \| f - f^* \|_{p,p} \le \varepsilon.
\end{align*}
Without loss of generality, we restrict the codomain to $[0,1]^{d_y \times T}$.
Then, since continuous functions in $C([0,1]^{d_x \times T},\mathbb R^{d_y \times T})$ are dense in $L^p([0,1]^{d_x \times T},\mathbb R^{d_y \times T})$ \citep{rudin}, it suffices to prove the following statement:
for any $\varepsilon>0$, $f' \in C([0,1]^{d_x \times T}, [0,1]^{d_y \times T})$, there exists a $\relu$ RNN $f : [0,1]^{d_x \times T} \to [0,1]^{d_y \times T}$ of width $\max\{d_x, d_y, 2\}$ satisfying
\begin{align*}
    \| f' - f \|_{p,p} \le {\varepsilon}.
\end{align*}
We explicitly construct such $\relu$ RNN $f$ using the coding scheme. To describe this, we present the following lemmas where the proofs of \cref{lem:tokenwise-encoder,lem:rnn-encoder,lem:rnn-decoder} are presented in \cref{pfsec:lem:tokenwise-encoder,pfsec:lem:rnn-encoder,pfsec:lem:rnn-decoder}, respectively.

\begin{lemma}\label{lem:tokenwise-encoder}
Given $T\in\mathbb N$ and $\alpha,\beta>0$, there exist disjoint measurable sets $\mathcal T_1,\dots,\mathcal T_k\subset[0,1]^{d_x}$ and
a $\relu$ RNN $g^{\dagger}:[0,1]^{d_x\times T}\to\mathbb R^T$ of width $\max\{d_x,2\}$ such that 
\begin{itemize}[leftmargin=15pt]
    \item $\diam(\mathcal T_i)\le\alpha$ for all $i\in[k]$,
    \item $\mu_{d_x}\big(\bigcup_{i=1}^k\mathcal T_i\big)\ge1-\beta$, and
    \item if $x\in[0,1]^{d_x\times T}$ satisfies $x[t]\in\mathcal T_{i_t}$ for all $t\in[T]$, then $g^{\dagger}(x)[t]=c_{i_t}$ for all $t\in[T]$, for some distinct $c_1,\dots,c_k\in\mathbb R$.
\end{itemize}
\end{lemma}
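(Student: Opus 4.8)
The plan is to reduce \cref{lem:tokenwise-encoder} to the fully-connected encoder of \cref{lem:encoder} by building a \emph{memoryless} RNN that applies a fixed feedforward $\relu$ network to each token independently. First I would invoke \cref{lem:encoder} with the given $\alpha,\beta$ to obtain disjoint measurable sets $\mathcal T_1,\dots,\mathcal T_k\subset[0,1]^{d_x}$ with $\diam(\mathcal T_i)\le\alpha$ and $\mu_{d_x}\big(\bigcup_{i=1}^k\mathcal T_i\big)\ge 1-\beta$, together with a $\relu$ network $f:\mathbb R^{d_x}\to\mathbb R$ of width $\max\{d_x,2\}$ with $f(\mathcal T_i)=\{c_i\}$ for some distinct $c_1,\dots,c_k\in\mathbb R$. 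The first two bullets of the lemma are then inherited verbatim, so all that remains is to exhibit a $\relu$ RNN $g^\dagger:[0,1]^{d_x\times T}\to\mathbb R^T$ of width $\max\{d_x,2\}$ satisfying $g^\dagger(x)[t]=f(x[t])$ for every $t\in[T]$ and $x\in[0,1]^{d_x\times T}$; the third bullet is then immediate, since $x[t]\in\mathcal T_{i_t}$ forces $g^\dagger(x)[t]=f(x[t])=c_{i_t}$.

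To construct $g^\dagger$, I would decompose $f$ into affine maps and coordinate-wise $\relu$ layers as in \eqref{eq:def-nn}, pad every hidden layer up to the common width $w\defeq\max\{d_x,2\}$ using coordinates that are identically zero (this is harmless because $\relu(0)=0$ and these coordinates are projected away at the output), and then use one RNN cell of hidden dimension $w$ for each $\relu$ layer of $f$, setting \emph{all recurrent weight matrices $\vec{W}_{\ell,1}$ to zero}. The input weights $\vec{W}_{\ell,2}$ and biases $\vec{b}_\ell$ of the cells are chosen to carry the affine maps of $f$, with the token-wise linear maps $P$ and $Q$ absorbing its first and last affine maps (any leftover additive constant can be realized by carrying an auxiliary constant channel, available since $w\ge2$). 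Because $\vec{W}_{\ell,1}=0$ and the initial hidden state is $\vec{R}_\ell(x)[0]=0$, the recurrence degenerates to $\vec{R}_\ell(x)[t]=\phi_\ell\big(\vec{W}_{\ell,2}\,x[t]+\vec{b}_\ell\big)$, so each cell acts on every token in isolation; unrolling the resulting RNN therefore reproduces $f$ applied token-wise, and its width is exactly $w=\max\{d_x,2\}$ by construction.

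I expect the only subtle part to be the bookkeeping imposed by the RNN formalism of \cref{sec:rnn} rather than any genuine mathematical difficulty. The definition there ties the input and hidden dimensions of each recurrent cell together and declares $P,Q$ to be linear (not affine), so one must (i) zero-pad all cells to the common width $w$, (ii) route the biases of the first and last affine maps of $f$ into cell biases or an auxiliary constant coordinate, and (iii) verify that zeroing the recurrent weights together with the zero initial state genuinely decouples the time steps. Once these routine points are handled, all three bullets hold as described, completing the proof of \cref{lem:tokenwise-encoder}.
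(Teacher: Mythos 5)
Your proposal is correct and follows essentially the same route as the paper: invoke \cref{lem:encoder} to obtain the sets $\mathcal T_1,\dots,\mathcal T_k$ and the width-$\max\{d_x,2\}$ network $f$, then realize $x\mapsto(f(x[1]),\dots,f(x[T]))$ as an RNN. The paper simply asserts the token-wise implementation, whereas you spell out the (correct) mechanism — zero recurrent weights plus zero initial hidden state decouple the time steps, with padding and bias routing handling the formalism — which is a welcome elaboration rather than a deviation.
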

\cref{lem:tokenwise-encoder} states that for any $\alpha,\beta>0$, there exist $\mathcal T_1,\dots,\mathcal T_k$ satisfying properties in \cref{lem:tokenwise-encoder} and a $\relu$ RNN $g^\dagger$ of width $\max\{d_x,2\}$ that assigns distinct codewords to each token $x[t]\in\mathcal T_{i_t}$ for $t\in[T]$.
However, unlike the fully-connected network case, the $t$-th token of the RNN output must be a function of $x[1:t]$. To encode information of $x[1:t]$, we introduce the following lemma.
\begin{lemma}\label{lem:rnn-encoder}
Given $T\in\mathbb N$ and distinct $c_1,\dots,c_k\in\mathbb R$, there exist 
\begin{itemize}[leftmargin=15pt]
\item distinct $a_{j}\in\mathbb R$ for all $j\in[k]^t$, and
\item a $\relu$ RNN $g^{\ddagger}:\mathbb R^{1\times T}\to\mathbb R^{1\times T}$ of width $2$ such that for any $(c_{i_1},\dots,c_{i_T})$ with $i_1,\dots,i_T\in[k]$
\end{itemize}
$$g^{\ddagger}(x)[t]=a_{j_t}$$
for all $t\in[T]$ where $j_t=(i_1,\dots,i_t)$.
\end{lemma}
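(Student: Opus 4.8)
The plan is to realize $g^{\ddagger}$ as a two-stage $\relu$ RNN: a token-wise stage that relabels each incoming codeword $c_i$ by its \emph{rank}, viewed as a ``digit'' in $\{1,\dots,k\}$, followed by a single recurrent cell that reads these digits and maintains their base-$(k+1)$ accumulation. I will use the standard fact that any token-wise $\relu$ network of width $w$ is an RNN of width $w$: take the recurrent cells with vanishing recurrent weight ($\vec{W}_{\ell,1}=0$), so that, using the zero initial state, $\vec{R}_\ell(x)[t]=\relu\bigl(\vec{W}_{\ell,2}\,x[t]+\vec{b}_\ell\bigr)$ for every $t\in[T]$, which is exactly one affine-then-$\relu$ layer applied token by token; stacking such cells together with the token-wise maps $P,Q$ implements any fixed token-wise $\relu$ network.

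First I would handle the digitization. If $k=1$ this is vacuous (map $c_1\mapsto 1$ by a constant affine map), so assume $k\ge2$; let $L=\min_i c_i$, $U=\max_i c_i$, and rescale $\tilde c_i=(c_i-L)/(U-L)\in[0,1]$. Sorting the $\tilde c_i$ and applying \cref{lem:encoder2} with $\alpha_j$ equal to the $j$-th smallest of $\tilde c_1,\dots,\tilde c_k$ and $\beta_j=j$ yields a $\relu$ network of width $2$ sending each $\tilde c_i$ to the rank $\pi(i)\in\{1,\dots,k\}$ of $c_i$; absorbing the affine rescaling into its first layer gives a width-$2$ token-wise network $\eta$ with $\eta(c_i)=\pi(i)$. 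By the remark above, $\eta$ is a width-$2$ RNN, and I will route its output into the first coordinate of the width-$2$ hidden state.

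Next I would append one recurrent cell of hidden dimension $2$ that acts on the digit sequence by $r[t+1]=\relu\bigl((k+1)r[t]+\pi(i_{t+1})\bigr)$ in its first coordinate (and $0$ in the second), with $r[0]=0$; concretely this is the affine recurrence with $\vec{W}_{\ell,1}=\mathrm{diag}(k+1,0)$, $\vec{W}_{\ell,2}=\mathrm{diag}(1,0)$, and $\vec{b}_\ell=0$. Since every $\pi(i_s)\ge1$, all $r[t]$ are strictly positive, so the $\relu$ acts as the identity and
\begin{align*}
r[t]=\sum_{s=1}^{t}\pi(i_s)\,(k+1)^{t-s}.
\end{align*}
Letting $Q$ select the first coordinate and defining $a_{(i_1,\dots,i_t)}\defeq\sum_{s=1}^{t}\pi(i_s)\,(k+1)^{t-s}$, we obtain $g^{\ddagger}(x)[t]=a_{j_t}$ for $j_t=(i_1,\dots,i_t)$, as required. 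Distinctness of the $a_j$ follows because the $\pi(i_s)\in\{1,\dots,k\}$ are valid base-$(k+1)$ digits: for a fixed length $t$ the map $(i_1,\dots,i_t)\mapsto a_j$ is injective, and since $a_j<(k+1)^t$ for $j\in[k]^t$ while $a_j\ge(k+1)^{t'-1}\ge(k+1)^t$ for $j\in[k]^{t'}$ with $t'>t$, the values are distinct across all $t\le T$ as well.

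The only delicate points are bookkeeping. The first is realizing the token-wise digitization map of \cref{lem:encoder2} inside the RNN formalism without exceeding width $2$, which is handled by the zero-recurrent-weight observation above. The second, and the one I would emphasize, is ensuring the $\relu$ inside the accumulator never truncates information: this is exactly why the digits are taken in $\{1,\dots,k\}$ rather than $\{0,\dots,k-1\}$, forcing the running value to stay strictly positive so the recurrence behaves affinely. Everything else, including the injectivity of $j\mapsto a_j$ across prefixes of different lengths, is then immediate from the base-$(k+1)$ representation.
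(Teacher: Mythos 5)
Your proof is correct, and it follows the same two-stage strategy as the paper's: a token-wise width-$2$ $\relu$ network that canonicalizes each codeword, followed by a scalar recurrence that accumulates the canonicalized values positionally so that distinct prefixes yield distinct reals. The ingredients differ, though, in ways worth noting. The paper canonicalizes by \emph{quantization}: it applies \cref{lem:mini_decoder} token-wise to replace each $c_i$ by $q_K(c_i)\in\mathcal C_K$ (with $K$ large enough that the quantized values stay distinct) and then runs the recurrence $\relu(2^{-K}\cdot(\text{state})+x[t+1])$, concatenating $K$-bit blocks. You instead canonicalize by \emph{interpolation}: \cref{lem:encoder2} sends each $c_i$ to its rank $\pi(i)\in\{1,\dots,k\}$, and the recurrence $\relu((k+1)\cdot(\text{state})+\pi(i_{t+1}))$ builds the base-$(k+1)$ expansion. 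Your version buys two things. First, the distinctness argument is fully explicit and covers prefixes of \emph{different} lengths via the bounds $a_j<(k+1)^t$ for $j\in[k]^t$ and $a_j\ge(k+1)^{t'-1}$ for $j\in[k]^{t'}$; the paper's writeup does not verify cross-length distinctness, and indeed its construction as literally stated can collide there if some codeword quantizes to $0$ (e.g.\ the prefixes $(c_i)$ and $(c_i,c_i)$ both accumulate to $0$), whereas your insistence on digits in $\{1,\dots,k\}$ rather than $\{0,\dots,k-1\}$ rules this out and simultaneously keeps the running state strictly positive so the $\relu$ never truncates. Second, your zero-recurrent-weight observation cleanly justifies embedding the token-wise stage into the RNN formalism, a step the paper also uses but leaves implicit. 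The one piece of bookkeeping you wave at — threading the final affine map of the \cref{lem:encoder2} network into the accumulator cell's input weight $\vec{W}_{\ell,2}$, since every recurrent cell ends in an activation — is routine and does not affect correctness.
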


\cref{lem:rnn-encoder} states that for any set of codewords $\{c_1,\dots,c_k\}$, %
there exists an RNN encoder implemented by a $\relu$ RNN $g^\ddagger$ of width $2$ that maps any vector of codewords $(c_{i_1},\dots,c_{i_t})$ of length $t\in[T]$ into a single scalar codeword $a_{(i_1,\dots,i_t)}$ with the following property: different vectors are mapped to distinct scalar codewords.

By \cref{lem:tokenwise-encoder,lem:rnn-encoder}, one can observe that for any $\alpha,\beta>0$, there exist $\mathcal T_1,\dots,\mathcal T_k\subset[0,1]^{d_x}$, a $\relu$ RNN $g'$ of width $\max\{d_x,2\}$, and $a_j\in\mathbb R$ for all $j\in\bigcup_{t=1}^T[k]^t$ satisfying the following properties:
\begin{itemize}[leftmargin=15pt]
    \item $\diam(\mathcal T_i)\le\alpha$ for all $i\in[k]$,
    \item $\mu_{d_x}\big(\bigcup_{i=1}^k\mathcal T_i\big)\ge1-\beta$, 
    \item $a_j\ne a_{j'}$ if $j\ne j'$, and
    \item if $x\in[0,1]^{d_x\times T}$ satisfies $x[t]\in\mathcal T_{i_t}$ for all $t\in[T]$, then $g'(x)[t]=a_{(i_1,\dots,i_t)}$.
\end{itemize}
Namely, if $x\in\mathcal T_{i_1}\times\cdots\times\mathcal \mathcal T_{i_T}$, then $g'(x)[t]=a_{(i_1,\dots,i_t)}$.

We next construct a $\relu$ RNN that maps each $(a_{(i_1)},\dots,a_{(i_1,\dots,i_T)})$ to some $y\in\mathbb R^{d_y\times T}$ such that $y[t]$ approximates $f'(\mathcal T_{i_1}\times\cdots\times\mathcal T_{i_T})[t]$ for all $t\in[T]$.
\begin{lemma}\label{lem:rnn-decoder}
Given $T\in\mathbb N$, $p\ge1$, $\gamma>0$, distinct $a_1,\dots,a_m\in\mathbb R$, and $v_1,\dots,v_m\in\mathbb R^{d_y}$, there exists a $\relu$ RNN $h:\mathbb R^{1\times T}\to[0,1]^{d_y \times T}$ of width $\max\{d_y,2\}$ such that for any $x=(a_{j_1},\dots,a_{j_T})$ with $j_1,\dots,j_T\in[m]$,
$$\|h(x)[t]-v_{j_t}\|_p\le\gamma$$
for all $t\in[T]$.
\end{lemma}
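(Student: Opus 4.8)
The plan is to reduce \cref{lem:rnn-decoder} to the fully-connected decoder already established in \cref{lem:decoder0}. The key observation is that \cref{lem:rnn-decoder} is a \emph{token-wise} statement: the target value $v_{j_t}$ depends only on the token $x[t]=a_{j_t}$ and not on the rest of the sequence. Hence it suffices to build an RNN that, on every input sequence, applies one fixed scalar-to-$\mathbb R^{d_y}$ map to each token independently; and a recurrent cell degenerates to exactly such a token-wise map once its recurrent weight matrix is set to zero. Concretely, with $\vec{W}_{\ell,1}=0$ the cell $\vec{R}_\ell$ computes $\vec{R}_\ell(x)[t]=\phi_\ell(\vec{W}_{\ell,2}\,x[t]+\vec{b}_\ell)$, which no longer depends on $\vec{R}_\ell(x)[t-1]$; so a width-$w$ $\relu$ RNN with all recurrent weights zero simply applies, token-by-token, a fixed $\relu$ network of width $w$ obtained by composing the token-wise maps $P$, the affine-then-$\relu$ blocks of the cells, and $Q$.

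First I would apply \cref{lem:decoder0} with $k=m$, $c_i=a_i$, and the given $v_1,\dots,v_m$, obtaining a $\relu$ network $\tilde h\colon\mathbb R\to[0,1]^{d_y}$ of width $w\defeq\max\{d_y,2\}$ such that $\|\tilde h(a_i)-v_i\|_p\le\gamma$ for all $i\in[m]$. Then I would realize $\tilde h$ as a $\relu$ RNN $h$ of width $w$ with all recurrent weights zero: the token-wise read-in map $P$ embeds $\mathbb R^1$ into $\mathbb R^w$ and carries the first affine layer of $\tilde h$; each recurrent cell $\vec{R}_\ell$ (with $\vec{W}_{\ell,1}=0$) implements one affine-then-$\relu$ block of $\tilde h$, padding all intermediate widths up to $w$ so that the padded coordinates are mapped to $0$ before $\relu$ and therefore stay zero; and the token-wise read-out map $Q$ carries the final affine layer. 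By construction $h(x)[t]=\tilde h(x[t])$ for every $x\in\mathbb R^{1\times T}$ and $t\in[T]$, so for $x=(a_{j_1},\dots,a_{j_T})$ we get $\|h(x)[t]-v_{j_t}\|_p=\|\tilde h(a_{j_t})-v_{j_t}\|_p\le\gamma$, and $h(x)[t]\in[0,1]^{d_y}$ because $\tilde h$ has codomain $[0,1]^{d_y}$; this is exactly the claim, and the width is $w=\max\{d_y,2\}=\max\{d_x,d_y,2\}$ for $d_x=1$.

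The only genuinely delicate point is the last step of the realization --- faithfully encoding an arbitrary width-$\le w$ feedforward $\relu$ network as a token-wise RNN of width exactly $w$. The two mismatches to reconcile are (i) a recurrent cell always applies $\phi=\relu$, whereas the feedforward network ends with a bare affine map, and (ii) the biases of the first and last affine layers of $\tilde h$ must be placed somewhere among $P$, the cells, and $Q$. Both are handled by standard devices: a bias can be absorbed into the bias vector of the adjacent cell, and a final affine map can be passed through a $\relu$ by the shift-by-a-large-constant trick as in the proof of \cref{lemma:piecewise-relu}, adding the constant inside the last cell and subtracting it in the read-out, using one of the $w\ge2$ hidden coordinates (a spare one when $d_y<w$) to carry the constant; alternatively one simply takes the token-wise read-out affine. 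I expect this bookkeeping to be the main obstacle, but it is routine and parallels the fully-connected construction; the conceptual content is entirely in the ``zero out the recurrence'' reduction together with \cref{lem:decoder0}.
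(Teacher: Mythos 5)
Your proposal is correct and matches the paper's proof: the paper likewise invokes \cref{lem:decoder0} to obtain a width-$\max\{d_y,2\}$ $\relu$ network $g$ with $\|g(a_i)-v_i\|_p\le\gamma$ and then implements it token-wise as $h(x)=(g(x[1]),\dots,g(x[T]))$. Your additional bookkeeping on realizing a feedforward network as an RNN with zeroed recurrent weights only spells out what the paper leaves implicit.
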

By combining \cref{lem:tokenwise-encoder,lem:rnn-encoder,lem:rnn-decoder}, one can observe that for any $\alpha,\beta,\gamma>0$, there exist $\mathcal T_1,\dots,\mathcal T_k\subset[0,1]^{d_x}$, a $\relu$ RNN $f$ of width $\max\{d_x,d_y,2\}$, and $v_j\in\mathbb R^{d_y}$ for all $j\in\bigcup_{t=1}^T[k]^t$ satisfying the following properties:
\begin{itemize}[leftmargin=15pt]
    \item $\diam(\mathcal T_i)\le\alpha$ for all $i\in[k]$,
    \item $\mu_{d_x}\big(\bigcup_{i=1}^k\mathcal T_i\big)\ge1-\beta$, 
    \item if $x\in[0,1]^{d_x\times T}$ satisfies $x[t]\in\mathcal T_{i_t}$ for all $t\in[T]$, then $$\|f(x)[t]-f'(x)[t]\|_p\le\omega_{(p,p),F,f'}\left(\alpha\sqrt{T}\right) + (d_y T^{2} \beta)^{1/p} + T^{1/p}\gamma$$ for all $t\in[T]$,\footnote{$\omega_{(p,p),F,f'}$ denotes the modulus of continuity of $f'$ in the $L_{p,p}$-norm and Frobenius-norm: $\|f'(x)-f'(x')\|_{p,p}\le\omega_{(p,p),F,f'}(\|x-x'\|_F)$ for all $x,x'\in[0,1]^{d_x \times T}$.} and 
    \item if $x\in[0,1]^{d_x\times T}$ satisfies $x[t]\notin\mathcal T_{i_t}$ for some $t\in[T]$, then $f(x)\in[0,1]^{d_y\times T}$.
\end{itemize}
We will show that such RNN $f$ of width $\max\{d_x,d_y,2\}$ satisfies $\|f-f'\|_{p,p}\le\varepsilon$ under proper choices of $\alpha,\beta,\gamma>0$.

\subsubsection{Our choices of $\alpha,\beta,\gamma$ for $\relu$ RNNs}\label{sec:rnn-error-estimate}

We choose sufficiently small $\alpha > 0$ so that $\omega_{(p,p),F,f'}(\alpha\sqrt{T}) \le \varepsilon/{2^{1+1/p}}$, $\beta = \varepsilon^p/(2 d_y T^2)$, and $\gamma = \varepsilon/(2^{1+1/p}T^{1/p})$.
For convenience, we use $\mathcal T\defeq\bigcup_{i=1}^k\mathcal T_i$.
Under this setup, we bound the error using the following inequality:
\begin{align}
& \| f - f' \|_{p,p}^p =  \int_{[0,1]^{d_x \times T}} \| f'(x) - f(x)  \|_{p,p}^p d\mu_{dxT}\notag\\
&\le T \times \sup_{1 \le t \le T} \int_{[0,1]^{d_x \times T} \setminus \mathcal T^T} \| f'(x)[t] - f(x)[t]  \|_p^p d\mu_{dxT}+  \int_{\mathcal T^T} \| f'(x) - f(x)  \|_{p,p}^p d\mu_{dxT}.\label{eq:relu-rnn-err}%
\end{align}
We first bound the first term in RHS of \cref{eq:relu-rnn-err}. Note that both $ f $ and $ f' $ have codomain is $ [0,1]^{dy} $.
\begin{align}
	&T \times \sup_{1 \le t \le T} \int_{[0,1]^{d_x \times T} \setminus \mathcal T^T} \| f'(x) - f(x)  \|_{p,p}^p d\mu_{dxT}\notag\\
	&= T\times d_y  \mu_{d_xT} \left( \bigcup_{j=1}^T  [0,1]^{d_x \times (T-j)} \times ( [0,1]^{d_x} \setminus \mathcal T) \times [0,1]^{d_x \times (j-1)}  \right)\notag \\
	&  \le Td_y  \sum _{j=1}^{T}( 1- \mu_{d_x} ( \mathcal T  ) )
	\le d_y T^2 \beta \le \varepsilon^p/2.\label{eq:relu-rnn-err2}
\end{align}
We next bound the second term in RHS of \cref{eq:relu-rnn-err} using Minkowski's inequality as follows:
\begin{align}
&\int_{\mathcal T^T} \| f'(x) - f(x)  \|_{p,p}^p d\mu_{dxT}\notag\\
& =\sum_{i_1,\dots,i_T \in [k]}\int_{ {\prod_{s=1}^T} \mathcal T_{i_s} }\|f(x)-f'(x)\|_{p,p}^p d\mu_{dxT}\notag\\
& \le  \sum_{i_1,\dots,i_T \in [k]} \int_{\prod_{s=1}^T \mathcal T_{i_s}} ( \|f'(z_{j_T})-f'(x)\|_{p,p} + \|f(x)-f'(z_{j_T})\|_{p,p})^p d\mu_{dxT}\notag\\
&  \le \sum_{i_1,\dots,i_T \in [k]} \left[\left(\int_{\prod_{s=1}^T \mathcal T_{i_s}}  \|f'(z_{j_T})-f'(x)\|_{p,p}^{p}d\mu_{dxT}\right)^{1/p}\right.\notag\\ 
&\left. \qquad \qquad \qquad \qquad \qquad \qquad \qquad
+ \left(\int_{\prod_{s=1}^T \mathcal T_{i_s}}\|f(x)-f'(z_{j_T})\|_{p,p}^p d\mu_{dxT}\right )^{1/p}\right ]^{p}\notag\\
&  \le \left [\left(\sum_{i_1,\dots,i_T \in [k]} \int_{\prod_{s=1}^T \mathcal T_{i_s}}  \|f'(z_{j_T})-f'(x)\|_{p,p}^{p}d\mu_{dxT}\right)^{1/p} \right.\notag\\
&\left. \qquad \qquad \qquad \qquad \qquad \qquad \qquad
+ \left(\sum_{i_1,\dots,i_T \in [k]}\int_{\prod_{s=1}^T \mathcal T_{i_s}}\|f(x)-f'(z_{j_T})\|_{p,p}^p d\mu_{dxT}\right )^{1/p}\right]^{p} \notag\\
&  \le \left[\left(\sum_{i_1,\dots,i_T \in [k]} \int_{\prod_{s=1}^T \mathcal T_{i_s}}  \left(\omega_{(p,p),F,f'} \left(\|z_{j_T}-x\|_F\right)\right)^{p}d\mu_{dxT}\right)^{1/p} \right.\notag\\
& \left. \qquad \qquad \qquad \qquad \qquad \qquad \qquad
+ \left(\sum_{i_1,\dots,i_T \in [k]}\int_{\prod_{s=1}^T \mathcal T_{i_s}}\sum_{t \in [T]}\|f(x)[t]-v_{j_t}\|_{p}^p d\mu_{dxT}\right )^{1/p}\right]^{p} \notag\\
&  \le \left [\left(\sum_{i_1,\dots,i_T \in [k]} \int_{\prod_{s=1}^T \mathcal T_{i_s}}  \left(\omega_{(p,p),F,f'} \left(\alpha\sqrt{T}\right)\right)^{p}d\mu_{dxT}\right)^{1/p} \right.\notag\\
& \left. \qquad \qquad \qquad \qquad \qquad \qquad \qquad
+ \left(\sum_{i_1,\dots,i_T \in [k]}\int_{\prod_{s=1}^T \mathcal T_{i_s}}T\gamma^{p} d\mu_{dxT}\right )^{1/p}\right]^{p} \notag\\
&  \le  \left(\omega_{(p,p),F,f'}\left(\alpha\sqrt{T}\right) + T^{1/p}\gamma\right)^p \le \varepsilon^p/2\label{eq:relu-rnn-err3}
\end{align}
where $z_{j_T} \in \prod_{s=1}^T \mathcal T_{i_s}$ for all $i_s \in [k]$.
The second term in the above bound used our construction of $ v_{j_t} $ and $ f = h\circ g^{\ddagger}\circ g^{\dagger} $.
By combining \cref{eq:relu-rnn-err,eq:relu-rnn-err2,eq:relu-rnn-err3}, we have
$$\|f-f^*\|_{p,p}\le\frac\varepsilon2+\frac\varepsilon2\le\varepsilon.$$
This completes the proof of \cref{thm:rnn_lp}.

\subsection{Proof of \cref{thm:rnn_lp-relulike}}\label{sec:pfthm:rnn_lp-relulike} 
In this section, we prove that any $\relu$ RNN (or BRNN) $f$ can be approximated by an RNN (or BRNN) $g$ of the same width using any of $\relul$ activation functions, within any uniform error. Note that a $\relu$ RNN $f$ of width $w$ does not imply $f$ is a $\relu$ network with width $w$. Hence, we need the following extended definition for analysis of $\relu$ RNN.

Given an activation function $\sigma:\mathbb R\to\mathbb R$, we define a {\it $\sigma $ token-network} as follows:
\begin{align}\label{eq:def-rnn}
    f(x_1,x_2,\dots,x_T) \defeq \psi_L \circ \psi_{L-1} \circ \cdots \circ \psi_2 \circ \psi_1,
\end{align}
where $\psi_\ell$ is one of the following operations.

\begin{itemize}[leftmargin=15pt]
    \item applying affine transformation $t(\cdot)$ on $k$-th token: \\
    $$\psi_{t(\cdot),k}(x_1,x_2,\dots, x_T)\defeq(x_1,x_2,\dots,t(x_k),\dots, x_T)$$
    where $W_t \in \mathbb R^{d_t \times d}$, $b_t \in \mathbb R^{d_t}$, $x_k \in \mathbb R^{d}$ and $t(x)=W_tx+b_t$ for some $d, d_t\in \mathbb N$. 
    \item element-wise $\sigma$ activation on $k$-th token: 
    $$\psi_{\sigma,k}(x_1,x_2,\dots, x_T)\defeq(x_1,x_2,\dots,\phi_\sigma(x_k),\dots, x_T)$$
    where $\phi_\sigma$ is an element-wise activation function.
    \item copying the $k$-th token $\psi_c$ to a new token:
    $$\psi_{c,k}(x_1,x_2,\dots, x_T)\defeq(x_1,x_2,\dots, x_T,x_k)$$
    \item adding two tokens with the same dimension into a new token:
        $$\psi_{s,k,l}(x_1,x_2,\dots, x_T)\defeq(x_1,x_2,\dots, x_T,x_k+x_l)$$
    \item deleting $k$-th token:
    $$\psi_{d,k}(x_1,x_2,\dots, x_T)\defeq(x_1,x_2,\dots x_{k-1}, x_{k+1},\dots, x_T)$$
\end{itemize}
The width $w$ of a $\sigma$ token-network $f$ is defined as the maximum of input/output dimensions of affine transformations $t$ that are applied in $f$. 
Remark that $\sigma $ RNN (or BRNN) of width $w$ is a $\sigma$ token-network with width $w$. 
If we define $\|(x_1,\dots,x_T)\|_{\infty}\defeq \sup_{t\in[T]} \|x_t\|_\infty$, we can apply the same method as in the proof of \cref{lem:inj-approx} in \cref{sec:pflem:inj-approx}.
When $\psi$ is either copying or deleting, then 
$$\|\psi(X)-\psi(Y)\|_{\infty} \le \|X-Y\|_{\infty}$$
and when $\psi$ is adding tokens, then 
$$\|\psi(X)-\psi(Y)\|_{\infty} \le 2\|X-Y\|_{\infty}$$
above error bound holds.

So for a given $\relu$ token-network $f$, the following inequality holds for every $\psi$ in $f$:
$$\|\psi(X)-\psi(Y)\|_{\infty} \le \max\{2,M\}\|X-Y\|_{\infty}$$
where $M$ is maximum value of norm of affine transformation $\|W_t\|_\infty$ in $f$.
Therefore, using the identical method as in \cref{sec:pflem:inj-approx}, we are able to construct $\relul$ token-network $g$ such that 
$$\|f(X)-g(X)\|_\infty\le \varepsilon.$$
Since uniform convergence of functions in compact domain implies $p$-norm convergence, we are able to extend the result of $\relu$ RNNs to RNNs using $\relul$ activation functions, hence the proof of \cref{thm:rnn_lp-relulike} is completed.

\subsection{Proof of \cref{thm:brnn_lp}}\label{pfsec:brnn}
In this proof, we follow the discussion in \cref{pfsec:rnn}. 
We use \cref{lem:tokenwise-encoder,lem:rnn-decoder} and a modified version of \cref{lem:rnn-encoder} to construct encoder and decoder BRNNs.
The modified lemma is as follows:
\begin{lemma}\label{lem:brnn-encoder}
Given $T\in\mathbb N$ and distinct $c_1,\dots,c_k\in\mathbb R$, there exist 
\begin{itemize}[leftmargin=15pt]
\item distinct $a_{j_t,\bar{j}_t}\in\mathbb R$ for all $t\in[T]$, $j_t \in[k]^t$, and $\bar{j}_t \in[k]^{T-t+1}$, and
\item a $\relu$ RNN $g^{\ddagger}:\mathbb R^{1\times T}\to\mathbb R^{1\times T}$ of width $2$ 
such that for any $x:=(c_{i_1},\dots,c_{i_T})$ with $i_1,\dots,i_T\in[k]$
\end{itemize}
$$g^{\ddagger}(x)[t]=a_{j_t,\bar{j}_t}$$
for all $t\in[T]$ where $j_t=(i_1,\dots,i_t)$ and $\bar{j}_t = (i_t,\dots,i_T)$.
\end{lemma}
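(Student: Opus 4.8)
The plan is to run the scalar prefix-encoder of \cref{lem:rnn-encoder} in both time directions inside a single bidirectional recurrent cell. Recall that the forward cell built for \cref{lem:rnn-encoder} is a width-$2$ $\relu$ recurrent cell whose state at position $t$, composed with a token-wise affine read-out, yields a scalar $a_{j_t}$ injective in the prefix $j_t=(i_1,\dots,i_t)$ over all prefixes of all lengths in $[T]$ (injectivity within a fixed length comes from a geometrically damped accumulation of affinely shifted codewords, while injectivity across different lengths is secured by also carrying a length counter in the second hidden coordinate, and all $\relu$'s act as the identity because the accumulated quantities are kept positive and bounded). Feeding the identical recurrence — same weights, plugged into the backward slot of a BRNN cell — to the reversed token stream produces a width-$2$ backward $\relu$ recurrent cell whose state at position $t$, read out affinely, is a scalar $e_{\mathrm b}(\bar j_t)$ injective in the suffix $\bar j_t=(i_t,\dots,i_T)$; this is just the prefix statement under the relabelling $t\mapsto T{+}1{-}t$.

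First I would instantiate one BRNN cell $\vecev R$ of hidden dimension $2$ whose forward part $\vec R$ is the forward cell above and whose backward part $\cev R$ is the backward cell above, both reading the common codeword stream supplied by the token-wise input map. The two recurrences are independent — they couple only through the cell's output $A\vec R(x)[t]+B\cev R(x)[t]$ — so $\vec R(x)[t]$ determines $j_t$ and $\cev R(x)[t]$ determines $\bar j_t$. I would choose $A,B\in\mathbb R^{2\times2}$ so that the cell output at $t$ equals the pair $\big(e_{\mathrm f}(j_t),\,e_{\mathrm b}(\bar j_t)\big)$, where $e_{\mathrm f},e_{\mathrm b}$ are the injective affine read-outs of the two cells, and finish with a token-wise affine map $Q:\mathbb R^2\to\mathbb R$. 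Since $e_{\mathrm f}$ is injective over $\bigcup_{s\in[T]}[k]^s$ and likewise $e_{\mathrm b}$, the finitely many points $\big(e_{\mathrm f}(j_t),e_{\mathrm b}(\bar j_t)\big)$ are pairwise distinct whenever $(j_t,\bar j_t)$ differ, so \cref{lem:distinct-innerprod} supplies a functional $Q(u,v)=\theta u+v$ with $a_{j_t,\bar j_t}\defeq\theta\,e_{\mathrm f}(j_t)+e_{\mathrm b}(\bar j_t)$ all distinct; then $g^{\ddagger}\defeq Q\circ\vecev R\circ(\text{input map})$ is the claimed width-$2$ $\relu$ BRNN. (Accordingly the statement should read ``$\relu$ BRNN'' rather than ``$\relu$ RNN'', since the suffix $\bar j_t$ cannot be produced causally.)

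The substantive obstacle is exactly the one already handled inside \cref{lem:rnn-encoder} and merely reused here: forcing a single width-$2$ recurrent cell to encode variable-length prefixes injectively, which is what dictates the choice of damping factor, the codeword shift, and the length counter — and is the reason hidden dimension $2$ rather than $1$ is required. Once this is granted, everything else is routine: the $\relu$-as-identity bookkeeping only needs the accumulated values to stay in a fixed positive window (ensured by the affine shift and a damping factor in $(0,1)$); the boundary indices $t=1$ (where $\bar j_1$ is the whole sequence) and $t=T$ (where $j_T$ is the whole sequence) need no special treatment; and $g^{\ddagger}$ is manifestly a width-$2$ $\relu$ BRNN because it is a token-wise affine map, one BRNN cell of hidden dimension $2$, and a token-wise affine map.
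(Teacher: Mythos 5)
Your construction is correct and shares the paper's high-level architecture (run the prefix encoder of \cref{lem:rnn-encoder} forward and its mirror image backward inside a single bidirectional cell, then merge the two codes injectively), but the merging step is genuinely different. The paper first quantizes each codeword to $K$ bits with a token-wise width-$2$ network, then uses a hidden-dimension-$1$ BRNN cell whose forward recurrence accumulates the prefix bits with factor $2^{-K}$ and whose backward recurrence accumulates the suffix bits after an extra scaling by $2^{-KT}$; the output is the plain sum $\vec R+\cev R$, which is injective because prefix and suffix occupy disjoint positions of one binary expansion. You instead skip quantization, keep the prefix code and the suffix code in separate coordinates of a hidden-dimension-$2$ cell (with a length counter justifying the second coordinate), and separate the finitely many distinct pairs $\bigl(e_{\mathrm f}(j_t),e_{\mathrm b}(\bar j_t)\bigr)$ with a generic linear functional via \cref{lem:distinct-innerprod}. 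Your route buys a much shallower network (one recurrent cell plus affine maps, versus the deep token-wise quantizer of \cref{lem:mini_decoder}) at the cost of a genericity argument for the damping factor and the separating functional that you only sketch; the paper's route is fully explicit but needs the bit-shift trick to keep the combined code scalar at width $1$ inside the cell. Both land at overall width $2$, and your parenthetical that the statement should say ``$\relu$ BRNN'' rather than ``$\relu$ RNN'' is right --- the paper's own proof indeed constructs a BRNN, and no causal RNN can compute $\bar j_t$.
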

Note that $a_{j_t}$ in \cref{lem:rnn-encoder} only depends on the past whereas $a_{j_t,\bar{j}_t}$ in \cref{lem:brnn-encoder} depends both on the past and future. 
The proof of \cref{lem:brnn-encoder} is provided in \cref{pfsec:lem:brnn-encoder}.

Now, we are ready to construct our BRNN model $f$.
First, \cref{lem:tokenwise-encoder} ensures that there exist
$\mathcal T_1,\dots,\mathcal T_k\subset[0,1]^{d_x}$ and a $\relu$ RNN $g^\dagger$ of width $\max\{d_x,2\}$. 
Also from \cref{lem:brnn-encoder}, there exist $a_{j,\bar{j}} \in\mathbb R$ for all $j, \bar{j}\in\bigcup_{t=1}^T[k]^t\times[k]^{T-t+1}, j[t]=\bar{j}[0]$, and a $\relu$ RNN $g^\ddagger$ satisfying the following properties:

\begin{itemize}[leftmargin=15pt]
    \item $\diam(\mathcal T_i)\le\alpha$ for all $i\in[k]$,
    \item $\mu_{d_x}\big(\bigcup_{i=1}^k\mathcal T_i\big)\ge1-\beta$, 
    \item $a_{j,\bar{j}}\ne a_{j',\bar{j'}}$ if $(j, \bar{j})\ne (j',\bar{j'})$, and
    \item if $x\in[0,1]^{d_x\times T}$ satisfies $x[t]\in\mathcal T_{i_t}$ for all $t\in[T]$, then $g^\ddagger\circ g^\dagger(x)[t]=a_{(i_1,\dots,i_t),(i_t, \dots, i_T)}$.
\end{itemize}
Namely, if $x\in\mathcal T_{i_1}\times\cdots\times\mathcal \mathcal T_{i_T}$, then $g^\ddagger \circ g^\dagger(x)[t]=a_{(i_1,\dots,i_t),(i_t,\dots,i_T)}$.
Now, for a given target function $f^* \in L^p([0,1]^{d_x\times T},\mathbb{R}^{d_y\times T})$, we choose $z_{j_T}\in \prod_{s=1}^T \mathcal{T}_{i_s}$ as in \cref{pfsec:rnn}.
Then, we define:
$$v_{j_t,\bar{j}_t} = f'(z_{j_T})[t].$$
By \cref{lem:rnn-decoder}, one can construct decoder $h$ with respect to $v_{j_t,\bar{j}_t}$ such that 

$$\|h(a_{j_t,\bar{j}_t}) - v_{j_t,\bar{j}_t}\|_p\le\gamma.$$

Note that $h$ is a token-wise function that can be constructed by a $\relu$ BRNN.
Then, the error bound in \cref{pfsec:rnn} indicates that for any $\varepsilon > 0$, we have
$$\|f^* - f\|_{p,p}\le\epsilon$$
where $f = h\circ g^\ddagger \circ g^\dagger$.

Hence, using the extension of $\relu$ token-network to $\relul$ token-network as in \cref{sec:pfthm:rnn_lp-relulike} completes the statement of \cref{thm:brnn_lp}.

\subsection{Proof of \cref{lem:tokenwise-encoder}}\label{pfsec:lem:tokenwise-encoder}

To this end, we recall the statement of \cref{lem:encoder}: For any $\alpha,\beta>0$, there exist disjoint measurable sets $\mathcal T_1,\dots,\mathcal T_k\subset[0,1]^{d_x}$ and a $\relu$ network $g:\mathbb R^{d_x}\to\mathbb R$ of width $\max\{d_x,2\}$ such that 
\begin{itemize}[leftmargin=15pt]
    \item $\diam(\mathcal T_i)\le\alpha$ for all $i\in[k]$,
    \item $\mu_{d_x}\big(\bigcup_{i=1}^k\mathcal T_i\big)\ge1-\beta$, and
    \item $g(\mathcal T_i)=\{c_i\}$ for all $i\in[k]$, for some distinct $c_1,\dots,c_k\in\mathbb R$.
\end{itemize}
Therefore, the statement of \cref{lem:tokenwise-encoder} directly follows from token-wise implementing a $\relu$ network $g$, that is, $g^\dagger(x) = (g(x[1]),\dots,g(x[T]) )$ for any $x \in [0,1]^{d_x \times T}$.

\subsection{Proof of \cref{lem:rnn-encoder}}\label{pfsec:lem:rnn-encoder}

In this section, we explicitly construct a $\relu$ RNN $f : \mathbb R^{1\times T} \to \mathbb R^{1\times T}$ satisfying the statement of \cref{lem:rnn-encoder}.
Without loss of generality, we assume that the distinct points $c_1, \dots, c_k$ are contained in $[0,1]$.
Then, we quantize each distinct point in the binary representation using a token-wise $\relu$ network $g : \mathbb R \to \mathbb R$ of width $2$ such that
for any $K \in \mathbb N$, $\delta>0$, and all $x \in [0,1]\setminus\mathcal D_{K,\delta}$,
\begin{align*}
    g(x) = q_K(x)
\end{align*}
where $\mathcal D_{K,\delta} = \bigcup_{i=1}^{2^K-1}(i \times 2^{-K} - \delta , i \times 2^{-K})$.
The existence of such $g$ is ensured from \cref{lem:mini_decoder}.

Here, we recall the definition of the quantization function.
A quantization function $q_K:[0,1] \to \mathcal C_K$ for $K\in \mathbb N$ and $\mathcal C_K\defeq\{0,2^{-K}, 2\times2^{-K}, 3\times2^{-K},\dots,1-2^{-K}\}$ is defined as
\begin{align*}
    q_K(x) = \max \{c \in \mathcal C_K : c \le x \}.
\end{align*}
One can observe that $g$ preserves the first $K$-bits in the binary representation and discards the rest bits.
Nonetheless, we can ignore the information loss, which is the duplication of points, incurred from the quantization by choosing sufficiently large $K$ and small $\delta$ so that $2^{-(K+1)} < \inf_{i \neq j \in [k]} |c_i-c_j|$ and $\delta < 2^{-(K+2)}$.

Subsequently, we implement a RNN cell $\vec{R} : \mathbb R^{1\times T} \to \mathbb R^{1\times T}$ of width $1$ defined as follows:
\begin{align*}
    \vec{R}(x)[t+1] = \relu(2^{-K} \times \vec{R}(x)[t] + x[t+1]).
\end{align*}
Then, such $\vec{R}$ successfully accumulates $(d_x \times t)$-bits of the binary representation of $x[1:t] \in \mathbb R^{d_x \times t}$ for each $t\in[T]$ since $g (\{c_1,\dots,c_k \}) \subset [0,1]$.

Lastly, let $G : \mathbb R^{1\times T} \to \mathbb R^{1\times T}$ be a $\relu$ RNN of width $2$ such that $G(x) = (g(x[1]),\dots,g(x[T]) )$ for all $x \in \mathbb R^{d_x \times T}$.
Then, the $\relu$ RNN $f = \vec{R}\circ G$ of width $2$ completes the proof of \cref{lem:rnn-encoder}.

\subsection{Proof of \cref{lem:rnn-decoder}}\label{pfsec:lem:rnn-decoder}
From \cref{lem:decoder0}, there exits a $\relu$ network $g:\mathbb R\to[0,1]^{d_y}$ of width $\max\{d_y,2\}$ such that for any $p\ge1$, $\gamma>0$, $m\in\mathbb N$, distinct $a_1,\dots,a_m\in\mathbb R$, and $v_1,\dots,v_m\in\mathbb R^{d_y}$,
\begin{align*}
    \|g(a_i)-v_i\|_p\le\gamma
\end{align*}
for all $i\in[m]$.
Therefore, the statement of \cref{lem:rnn-decoder} directly follows by token-wise implementing such $\relu$ network $g$, that is, $h(x) = (g(a_{j_1}),\dots,g(a_{j_T}))$ for any $j_1,\dots,j_T\in[m]$.
\subsection{Proof of \cref{lem:brnn-encoder}}\label{pfsec:lem:brnn-encoder}

In this section, we follow the similar arguments as in \cref{pfsec:lem:rnn-encoder} to construct  $\relu$ BRNN $f : \mathbb R^{1\times T} \to \mathbb R^{1\times T}$ satisfying the statement of \cref{lem:brnn-encoder}.
Again, we assume that the distinct points $c_1, \dots, c_k$ are contained in $[0,1]$.
Then, there exists a token-wise $\relu$ network $g : \mathbb R \to \mathbb R$ of width $2$ such that
for any $K \in \mathbb N$, $\delta>0$, and all $x \in [0,1]\setminus\mathcal D_{K,\delta}$,
\begin{align*}
    g(x) = q_K(x)
\end{align*}
where $\mathcal D_{K,\delta}$ and quantization function $q_K(x)$ is defined in \cref{pfsec:lem:rnn-encoder}.
Next, choose the same precision $K$ and small enough $\delta$ that satisfies $2^{-(K+1)} < \inf_{i \neq j \in [k]} |c_i-c_j|$ and $\delta < 2^{-(K+2)}$.

We now implement a BRNN cell ${\vecev{R}} : \mathbb R^{1\times T} \to \mathbb R^{1\times T}$ of width $1$ defined as follows:
\begin{align*}
    \vec{R}(x)[t+1] &= \relu(2^{-K} \times \vec R(x)[t] + x[t+1]), \\
    \cev{ R}(x)[t-1] &= \relu(2^{-K} \times \cev R(x)[t] + 2^{-KT}x[t-1]),\\
    {\vecev{R}}(x)[t] &= \vec{R}(x)[t] +\cev{R}(x)[t].
\end{align*}
Then, {$\vecev{R}$} successfully accumulates $(d_x \times t)$-bits for $x[1:t]$ and $(d_x \times (T-t+1))$-bits for $x[t:T]$.
Note that $2^{-KT}x[t-1]$ in $\cev{R}$ enables us to prevent overlapping of information from {$\vecev{R}$} by storing data bits in different positions.

Lastly, let $G: \mathbb R^{1\times T} \to \mathbb R^{1 \times T}$ be a $\relu$ {BRNN} of width $2$ such that $G(x) = (g(x[1]),\dots,g(x[T]) )$ for all $x \in \mathbb R^{d_x \times T}$.
Then, the $\relu$ {BRNN} $f = {\vecev{R}}\circ G$ of width $2$ completes the proof of \cref{lem:brnn-encoder}.

\end{document}